\definecolor{stroke1}{HTML}{2574A9} % This color is used as the standard color to highlight things.
    \def\IfEmptyTF#1%
\relax\detokenize{#1}\relax%
\NewDocumentCommand{\mathOrText}{m}
{%
    \ensuremath{#1}\xspace%
}
\let\originalleft\left
\let\originalright\right
\renewcommand{\left}{\mathopen{}\mathclose\bgroup\originalleft}
\renewcommand{\right}{\aftergroup\egroup\originalright}
    \DeclareRobustCommand{\bfseries}%
    {%
        \not@math@alphabet\bfseries\mathbf%
        \fontseries\bfdefault\selectfont%
        \boldmath%
    }
\crefname{ineq}{inequality}{inequalities}
\crefname{term}{term}{terms}
\crefname{cond}{condition}{conditions}
\newtheorem{theorem}{Theorem}
\newtheorem{lemma}[theorem]{Lemma}
\newtheorem{corollary}[theorem]{Corollary}
\NewDocumentCommand{\functionTemplate}{m m m m o}%
{%
    \IfNoValueTF{#5}%
    {%
        \mathOrText{#1\left#2{#4}\right#3}%
    }%
    {%
        \mathOrText{#1#5#2{#4}#5#3}%
    }%
}
\newcommand*{\leftBracketType}{(}
\newcommand*{\rightBracketType}{)}
\NewDocumentCommand{\createFunction}{m m o o}%
{%
    \renewcommand*{\leftBracketType}{\IfNoValueTF{#3}{(}{#3}}%
    \renewcommand*{\rightBracketType}{\IfNoValueTF{#4}{)}{#4}}%
    \NewDocumentCommand{#1}{o o}%
    {%
        \IfNoValueTF{##1}%
        {%
            \mathOrText{#2}%
        }%
        {%
            \functionTemplate{#2}{\leftBracketType}{\rightBracketType}{##1}[##2]%
        }%
    }%
}
\DeclareDocumentCommand{\probabilisticFunctionTemplate}{m m O{} o}
{%
    \functionTemplate{#1}%
    {\lbrack}%
    {\rbrack}%
    {#2\IfEmptyTF{#3}{}{\ \IfNoValueTF{#4}{\left}{#4}\vert\ \vphantom{#2}#3\IfNoValueTF{#4}{\right.}{}}}%
    [#4]%
}
\newcommand*{\N}{\mathOrText{\mathds{N}}}
\newcommand*{\R}{\mathOrText{\mathds{R}}}
\newcommand*{\indicatorFunctionSymbol}{\mathds{1}}
\RenewDocumentCommand{\Pr}{m O{} o}%
{%
    \probabilisticFunctionTemplate{\mathrm{Pr}}{#1}[#2][#3]%
}
\NewDocumentCommand{\E}{m O{} o}%
{%
    \probabilisticFunctionTemplate{\mathrm{E}}{#1}[#2][#3]%
}
\NewDocumentCommand{\Var}{m O{} o}%
{%
    \probabilisticFunctionTemplate{\mathrm{Var}}{#1}[#2][#3]%
}
\DeclareDocumentCommand{\bigO}{m o}%
{%
    \functionTemplate{\mathrm{O}}{(}{)}{#1}[#2]%
}
\DeclareDocumentCommand{\smallO}{m o}%
{%
    \functionTemplate{\mathrm{o}}{(}{)}{#1}[#2]%
}
\DeclareDocumentCommand{\bigTheta}{m o}%
{%
    \functionTemplate{\Theta}{(}{)}{#1}[#2]%
}
\DeclareDocumentCommand{\bigOmega}{m o}%
{%
    \functionTemplate{\Omega}{(}{)}{#1}[#2]%
}
\DeclareDocumentCommand{\smallOmega}{m o}%
{%
    \functionTemplate{\omega}{(}{)}{#1}[#2]%
}
\DeclareDocumentCommand{\eulerE}{o}%
{%
    \mathOrText{\mathrm{e}\IfNoValueTF{#1}{}{^{#1}}}%
}
\DeclareDocumentCommand{\poly}{m o}%
{%
    \functionTemplate{\mathrm{poly}}{(}{)}{#1}[#2]%
}
\createFunction{\id}{\mathrm{id}}
\NewDocumentCommand{\ind}{m o o}%
{%
    \IfNoValueTF{#2}%
    {%
        \mathOrText{\indicatorFunctionSymbol_{#1}}%
    }%
    {%
        \functionTemplate{\indicatorFunctionSymbol_{#1}}{(}{)}{#2}[#3]%
    }%
}
\DeclareDocumentCommand{\dom}{m o}%
{%
    \functionTemplate{\mathrm{dom}}{(}{)}{#1}[#2]%
}
\DeclareDocumentCommand{\rng}{m o}%
{%
    \functionTemplate{\mathrm{rng}}{(}{)}{#1}[#2]%
}
\DeclareDocumentCommand{\d}{o}%
{%
    \mathrm{d}\IfNoValueTF{#1}{}{^{#1}}%
}
\DeclareDocumentCommand{\set}{m m o}%
{%
    \mathOrText{\IfNoValueTF{#3}{\left}{#3}\{#1\ \IfNoValueTF{#3}{\left}{#3}\vert\ \vphantom{#1}#2\IfNoValueTF{#3}{\right.}{}\IfNoValueTF{#3}{\right}{#3}\}}%
}
\newcommand\numberthis{\addtocounter{equation}{1}\tag{\theequation}}
\newcommand*{\indicator}[1]{\mathOrText{\mathds{1}{(#1)}}}
\DeclareDocumentCommand{\randomProcess}{o}
{%
    \mathOrText{X\IfNoValueTF{#1}{}{_{#1}}}%
}
\DeclareDocumentCommand{\transformedProcess}{o}
{%
    \mathOrText{Y\IfNoValueTF{#1}{}{_{#1}}}%
}
\DeclareDocumentCommand{\transformedProcessOther}{o}
{%
    \mathOrText{Z\IfNoValueTF{#1}{}{_{#1}}}%
}
\DeclareDocumentCommand{\randomProcessOther}{o}
{%
    \mathOrText{Y\IfNoValueTF{#1}{}{_{#1}}}%
}
\DeclareDocumentCommand{\randomProcessThird}{o}
{%
    \mathOrText{Z\IfNoValueTF{#1}{}{_{#1}}}%
}
\DeclareDocumentCommand{\filtration}{o}
{%
    \mathOrText{\mathcal{F}\IfNoValueTF{#1}{}{_{#1}}}%
}
\DeclareDocumentCommand{\filtrationOther}{o}
{%
    \mathOrText{\mathcal{G}\IfNoValueTF{#1}{}{_{#1}}}%
}
\DeclareDocumentCommand{\filtrationSecondOther}{o}
{%
    \mathOrText{\mathcal{G}'\IfNoValueTF{#1}{}{_{#1}}}%
}
\DeclareDocumentCommand{\individual}{o}
{%
    \mathOrText{\boldsymbol{x}\IfNoValueTF{#1}{}{^{(#1)}}}%
}
\DeclareDocumentCommand{\individualOther}{o}
{%
    \mathOrText{\boldsymbol{y}\IfNoValueTF{#1}{}{^{(#1)}}}%
}
\DeclareDocumentCommand{\individualThird}{o}
{%
    \mathOrText{\boldsymbol{z}\IfNoValueTF{#1}{}{^{(#1)}}}%
}
\DeclareDocumentCommand{\difference}{o}
{%
    \mathOrText{\Delta\IfNoValueTF{#1}{}{_{#1}}}%
}
\DeclareDocumentCommand{\runTimePoints}{m}
{%
    \mathOrText{N_{\textrm{#1}}}%
}
\DeclareDocumentCommand{\returnTimePoints}{m}
{%
    \mathOrText{M_{\textrm{#1}}}%
}
\DeclareDocumentCommand{\returnTimePointsStart}{o}
{%
    \mathOrText{R_{\IfNoValueTF{#1}{}{#1}}}%
}
\DeclareDocumentCommand{\returnTimePointsEnd}{o}
{%
    \mathOrText{O_{\IfNoValueTF{#1}{}{#1}}}%
}
\DeclareDocumentCommand{\retry}{o}
{%
    \mathOrText{M\IfNoValueTF{#1}{}{_{#1}}}%
}
\DeclareDocumentCommand{\sequence}{m o o o}%
{%
    \mathOrText{\IfNoValueTF{#2}{#1}{\IfEmptyTF{#2}{#1}{\IfNoValueTF{#4}{\left}{#4}(#1_{#2}\IfNoValueTF{#4}{\right}{#4})_{#2 \in #3}}}}%
}
\newcommand*{\timePoint}{\mathOrText{t}}
\newcommand*{\stoppingTime}{\mathOrText{T}}
\newcommand*{\stoppingTimeOther}{\mathOrText{S}}
\newcommand*{\stoppingTimeThird}{\mathOrText{U}}
\newcommand*{\drift}{\mathOrText{\delta}}
\newcommand*{\defeq}{\mathOrText{\coloneqq}}
\newcommand*{\eqdef}{\mathOrText{\eqqcolon}}
\newcommand*{\plateauDistance}{\mathOrText{r}}
\newcommand*{\basisRLS}{\mathOrText{\lambda}}
\newcommand*{\potentialFunction}{\mathOrText{g}}
\newcommand*{\runTimeSet}{\mathOrText{\mathcal{R}}}
\newcommand*{\myAlgorithm}{\mathOrText{A}}
\newcommand*{\probabilityOfSuccessAfterFailure}{\mathOrText{p}}
\newcommand*{\probabilityOfImmediateSuccess}{\mathOrText{\probabilityOfSuccessAfterFailure_0}}
\newcommand*{\myIndex}{\mathOrText{i}}
\newcommand*{\majority}{\mathOrText{\textsc{Majority}^{(1)}_{0}}}
\newcommand*{\basisRLSNumerator}{\mathOrText{a}}
\newcommand*{\onemax}{\textsc{OneMax}\xspace}
\newcommand*{\jump}{\textsc{Jump}\xspace}
\newcommand*{\initialization}{\mathOrText{D}}
\newcommand*{\RExtended}{\mathOrText{\overline{\R}}}
\newcommand*{\eventForRestart}{\mathOrText{L}}
\DeclareDocumentCommand{\plateau}{o}
{%
    \mathOrText{\textsc{HasMajority}\IfNoValueTF{#1}{}{_{#1}}}%
}
\DeclareDocumentCommand{\asymmetricPlateau}{o}
{%
    \mathOrText{\textsc{Majority}\IfNoValueTF{#1}{}{_{#1}}}%
}
\DeclareDocumentCommand{\rls}{o}
{%
    RLS\IfNoValueF{#1}{$_{#1}$}\xspace%
}
\DeclareDocumentCommand{\onePlusOneEA}{o}
{%
    \mathOrText{(1 + 1)~\textrm{EA}\IfNoValueF{#1}{_{#1}}}%
}
\DeclareDocumentCommand{\mutation}{o}
{%
    \mathOrText{\phi\IfNoValueTF{#1}{}{_{#1}}}%
}
\crefname{case}{case}{cases}
\title{Run Time Analysis for Random Local Search\\ on Generalized Majority Functions}
\author{%
    Carola~Doerr and %
    \thanks{C.~Doerr is with Sorbonne Universit\'e, CNRS, LIP6, Paris, France}%
    Martin~S.~Krejca%
    \thanks{Martin~S.~Krejca is with LIX, CNRS, Ecole Polytechnique, Institut Polytechnique de Paris, Palaiseau, France.}%
    \thanks{This work was financially supported by the Paris Île-de-France Region via the DIM RFSI AlgoSelect project and via the European Union's Horizon 2020 research and innovation program under the Marie Skłodowska-Curie grant agreement No. 945298-ParisRegionFP.}%
}
\begin{document}

\maketitle
% Adhere to the submission requirements mentioned here: https://cis.ieee.org/publications/t-evolutionary-computation/tevc-information-for-authors
% Abstracts should consist of 100 to 200 words.

\begin{abstract}
    Run time analysis of evolutionary algorithms recently makes significant progress in linking algorithm performance to \emph{algorithm} parameters.
    However, settings that study the impact of \emph{problem} parameters are rare.
    The recently proposed W-model provides a good framework for such analyses, generating pseudo-Boolean optimization problems with \emph{tunable} properties.

	We initiate theoretical research of the W-model by studying how one of its properties---neutrality---influences the run time of random local search.
    Neutrality creates plateaus in the search space by first performing a majority vote for subsets of the solution candidate and then evaluating the smaller-dimensional string via a low-level fitness function.

    We prove upper bounds for the expected run time of random local search on this \asymmetricPlateau problem for its entire parameter spectrum. To this end, we provide a theorem, applicable to many optimization algorithms, that links the run time of \asymmetricPlateau with its symmetric version \plateau, where a sufficient majority is needed to optimize the subset. We also introduce a generalized version of classic drift theorems as well as a generalized version of Wald's equation, both of which we believe to be of independent interest.
\end{abstract}

\begin{IEEEkeywords}
    evolutionary computation; run time analysis; plateau; neutrality; majority
\end{IEEEkeywords}

\section{Introduction}
\label{sec:introduction}
Randomized search heuristics, such as evolutionary algorithms (EAs), have been applied with great success to real-world optimization problems for which the user is faced with limited information about the problem or limited resources to solve it.
This is an impressive feat, since such problems are typically hard due to a combination of different challenging features.
In order to better understand the reasons behind why EAs perform so well in such settings, theoretical results analyze benchmark functions that encapsulate specific features assumed to be also present in real-world problems.
At present, these functions are typically unrelated to each other, and it is difficult to derive how they interact with one another.
This makes it hard to translate such theoretical results to more realistic problems.
An alternative approach for deriving theoretical results is to consider a class of different functions that can be combined in well-defined ways, allowing to provide guarantees for more complex problems.
The recently introduced W-model~\cite{WeiseCLW20WModel} provides such function classes.
Given a pseudo-Boolean function~$f$, the W-model proposes a sequence of four independent steps that each change a different feature of~$f$.
The degree by which each feature is changed is regulated by parameters specific to each step.
Thus, overall, the W-model allows to create a diverse set of functions, all based on~$f$, that showcases various features in different intensities.

\begin{figure}
    \includegraphics[width = \columnwidth]{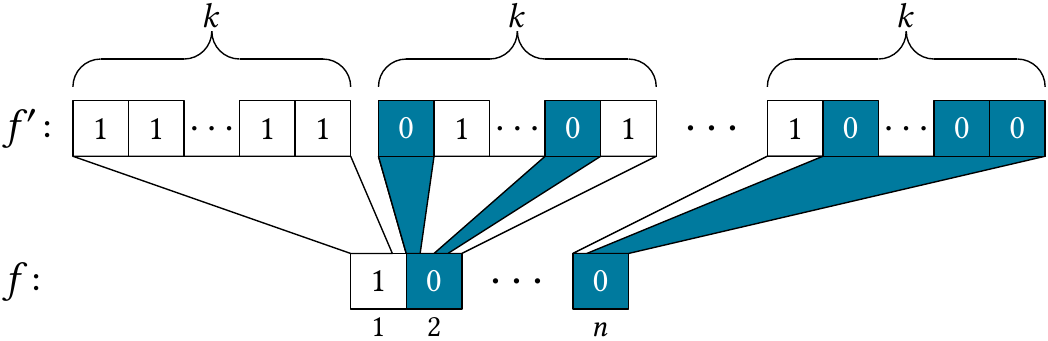}
    \caption{\label{fig:neutrality}
        Given a pseudo-Boolean function~$f$ defined over bit strings of length $n \in \N_{> 0}$ and a parameter $k \in \N_{> 0}$, \emph{neutrality} creates a function~$f'$ defined over bit strings of length~$nk$, consisting of~$n$ blocks of length~$k$ each.
        When evaluating~$f'$, first, for each of the~$k$ blocks, the majority of its bits is determined.
        Then,~$f$ is evaluated on the resulting bit string of length~$n$ of the majority outcomes.
    }
\end{figure}

In this article, we initiate the run time analysis of the W-model.
To this end, we focus on one of its features called \emph{neutrality}.
Given a function~$f$ defined over bit strings of length~$n$ as well as an integer parameter~$k$, neutrality substitutes each bit of a length-$n$ bit string by~$k$ bits.
When evaluating a new bit string of length~$nk$, each of the subsequent blocks of~$k$ bits is reduced to the majority among its bit values, and the resulting bit string of length~$n$ is evaluated by~$f$ (see \Cref{fig:neutrality}).
Overall, neutrality introduces a two-stage process into the optimization of~$f$.
On a high level, $f$ is optimized.
On a low level, the correct majority value of each block is determined.
Assuming that the different blocks are independent, analyzing the run times of the two stages separately and then combining these results leads to a run time result for~$f$.

Doerr et~al.~\cite{DoerrSW13BlockAnalysis} analyzed the high-level approach of the aforementioned two-stage process diligently for the class of \emph{separable} functions, where a function is separable if its value is the sum of the values of~$n$ independent sub-functions.
The authors show that the run time of random local search (\rls) and of the \onePlusOneEA is, in expectation, bounded from above by the slowest expected run time of a sub-function times $\ln(n)$.
The logarithmic overhead is a result of the classical coupon collector problem that accounts for optimizing \emph{all}~$n$ sub-functions correctly.

The second stage of the two-stage optimization process sketched above concerns analyzing the run time of a majority function.
Since the value returned by such a function only changes if the majority of the bits in its input changes, this leads to analyzing the behavior of an algorithm in a landscape of equal fitness, called a \emph{plateau}.
Changing the number of bits to the optimal majority requires an EA to perform a random walk on the plateau until finding the optimal majority, at which point in time the block is optimized.
We refer to this search behavior as \emph{crossing} the plateau.

In order to better understand the impact of the plateau size on the optimization process, we extend the notion of majority for the W-model.
To this end, we introduce the function \plateau[\plateauDistance], defined over bit strings of length $n \in \N^+$ (where~$n$ is even\footnote{Note that, in contrast to \Cref{fig:neutrality}, we denote the dimension of \plateau[\plateauDistance] (defined over a block) now by~$n$ and not anymore by~$k$.}), having a parameter $\plateauDistance \in [0, n/2] \cap \N$ how strong the majority has to be in order to be counted.
\plateau[\plateauDistance] takes the two values~$0$ and~$1$.
A bit string has a \plateau[\plateauDistance]-value of~$1$ if its number of~$0$s or its number of~$1$s is at least $n/2 + \plateauDistance$, and it has a value of~$0$ otherwise.
We also introduce the asymmetric version \asymmetricPlateau[\plateauDistance], where a bit string only has a function value of~$1$ if it has at least $n/2 + \plateauDistance$ $1$s (and it is~$0$ otherwise).

\paragraph{Our results}
We analyze \rls, which iteratively constructs new solutions by flipping exactly one bit, chosen uniformly at random, in its currently best solution.
We bound the expected run time of \rls, that is, the number of iterations until an optimal solution is found for the first time, on \plateau and \asymmetricPlateau from above for the entire range of~\plateauDistance.
Using the results by Doerr et~al.~\cite{DoerrSW13BlockAnalysis}, this implies run time bounds for \onemax with added neutrality (\Cref{cor:rls_w-model}).
We first analyze the symmetric case of \plateau (similar to the result by Bian et~al.~\cite{BianQTY20RobustLinearOptimization}) and show with \Cref{thm:eas_on_plateaus} the complex dependency of the run time of \rls on the parameter~\plateauDistance, resulting in multiple different regimes (\Cref{fig:runtime_dependency}).
For values of~\plateauDistance constant in~$n$, the run time is constant, and it grows quadratically in~\plateauDistance until $r = \bigO{\sqrt{n}}$.
For $r = \smallOmega{\sqrt{n}} \cap \smallO{n}$, the expected run time is mainly given by an exponential in $\plateauDistance^2/n$.
For the remaining regime of~\plateauDistance, the expected run time is at least exponential.
The behavior for \asymmetricPlateau is similar, as \Cref{thm:rls_run_time_on_asymmetric_plateau} shows.
The main difference is an overhead of $n \ln(\plateauDistance)$, which is caused by the larger size of the plateau.
For the special case $r = n/2 - 1$, the \asymmetricPlateau function reduces to the well-known \textsc{Needle} function, for which the run time of RLS, the \onePlusOneEA (EA), and a number of generalizations are very well understood~\cite{GarnierKS99,AntipovD18PrecisePlateauAnalysis}.

Our result for \asymmetricPlateau is based on a restart argument applied to \plateau.
Since such restart arguments are not specific to the context of EAs or even optimization, we analyze the situation in a general setting (\Cref{thm:general_restart_argument}).
Our result applies to any assortment of interleaved stopping times.
We prove \Cref{thm:general_restart_argument} via a generalized version of Wald's equation (\Cref{thm:walds_equation}), which we could not find in this detail and generality in the literature.
We prove the generalized version of Wald's equation via a new general basic drift theorem (\Cref{thm:basic_drift}), which relates the expected progress of a random process more explicitly to the values at the start and at the end of the process than traditional drift theorems.
As all of these theorems relate to scenarios commonly found in the analysis of EAs, we are confident that they are of independent interest.

With \Cref{thm:restart_argument}, we explicitly connect the expected run times of \plateau and \asymmetricPlateau for a large class of EAs, providing a general framework for our setting.
This result comes with multiple conditions, due to its generality.
However, we also provide additional statements (\Cref{lem:number_of_restarts,lem:integrability_of_the_stopping_time}) that show that many EAs satisfy the conditions of \Cref{thm:restart_argument}.
In order to apply \Cref{thm:restart_argument} to any EA we cover with this framework, it then remains to provide bounds on the expected run time of an EA on \plateau.
We note that this framework is also useful in the setting of deletion-robust optimization, as studied by Bian et~al.~\cite{BianQTY20RobustLinearOptimization}, since \asymmetricPlateau occurs as a subproblem there.

We complement our theoretical analyses with empirical investigations of the \rls variant \rls[\ell], which flips exactly $\ell \in [1, n] \cap \N$ bits (chosen uniformly among all $\ell$-cardinality subsets), on \asymmetricPlateau.
For $r = \lfloor \sqrt{n} \rfloor$, we consider the empirical run time for different values of~$\ell$, and we find that a value around~$n/2$ leads to the lowest run time (\Cref{fig:empirical_run_time_varying_n_and_k}), which is by orders of magnitude lower than the empirical run time of conventional \rls.
Additional results, which look into the trajectory of the best solution so far over time, reveal that while instantiations of \rls[\ell] for different values of~$\ell$ have qualitatively the same random-walk behavior, the larger step size for larger values of~$\ell$ makes it easier to find a global optimum (\Cref{fig:different_trajectories}).
However, we note that there is a limit to how many bits to flip during a single iteration makes sense, as, for example, inverting the bit string in each iteration makes \rls[n] alternate between just two solutions.
Nonetheless, a high mutation rate in the order of~$n$ seems beneficial for our setting.

\begin{figure}
    \centering
    \includegraphics[width = 0.86\columnwidth]{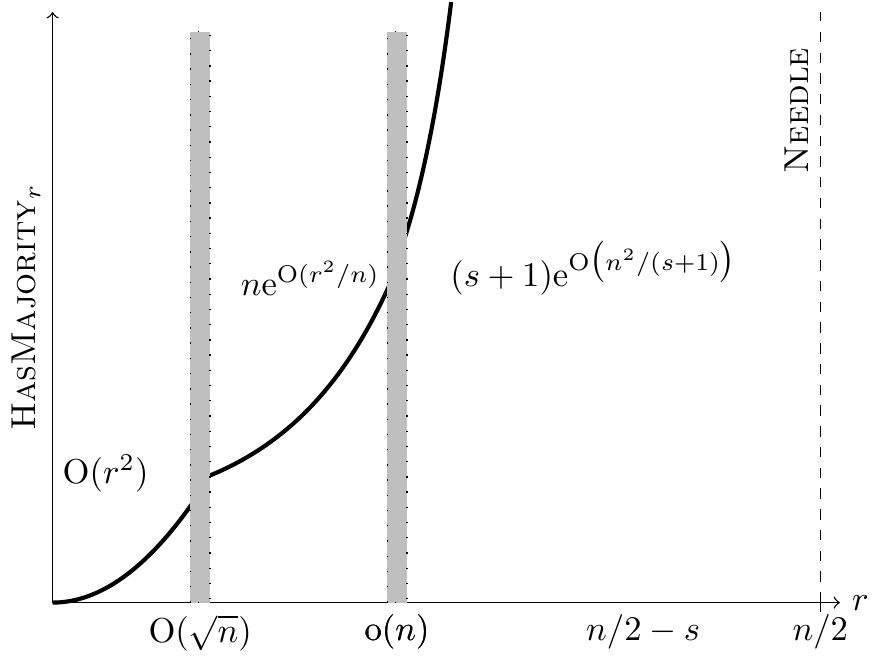}
    \caption{\label{fig:runtime_dependency}
        The dependency of the expected run time of \rls on \plateau[\plateauDistance] with respect to the majority parameter~\plateauDistance as proven in \Cref{thm:eas_on_plateaus}.
        The gray boxes denote regions in which the asymptotics switch (sharply) from \bigO{\sqrt{n}} to \smallOmega{\sqrt{n}} and from \smallO{n} to \bigTheta{n}.
        The value $s \in \N$ is any value such that there is a $c \in \bigTheta{1}$ with $c < 1/2$ such that $s \leq cn$.
        The dashed line at $n/2$ denotes the run time on \textsc{Needle}.
    }
\end{figure}

\paragraph{Related work}
Analyzing the performance of EAs on plateaus by theoretical means is a well-established concept.
The arguably most studied benchmark function with a plateau is \jump, existing in different variants, many of which were proposed recently~\cite{AntipovD18PrecisePlateauAnalysis,BamburyBD21GeneralizedJump,AntipovN21RealJump,Witt21EDAsGeneralizedJump}.
However, the problem posed by the plateau in \jump is typically different than from crossing a plateau, as we discuss in the following.
In its classical variant, \jump consists of a slope toward the unique global optimum, which itself is surrounded by solutions with worse function values (called the \emph{valley}).
The plateau is the set of points with second-best function value.
For \jump, the behavior of an EA on the plateau is typically not that important, but rather that the valley is overcome, oftentimes referred to as \emph{jumping} over the valley.
This is in contrast to crossing a plateau, where the behavior of an EA \emph{on} the plateau is very important.
The main importance of jumping remains true for \jump when shifting the global optimum~\cite{AntipovN21RealJump} and/or increasing the number of global optima to jump to~\cite{BamburyBD21GeneralizedJump,Witt21EDAsGeneralizedJump}.
However, the picture changes when considering EAs that apply crossover, that is, combining different solutions when creating new ones.
Then, the random walk on the plateau becomes relevant for the expected run time on \jump~\cite{DangFKKLOSS16DiversityMechanisms,DangFKKLOSS18JumpCrossover,WhitleyVHM18MajorityJump}.
Nonetheless, in such a setting, it is primarily important to create diverse solutions on the plateau, in order to leave it via crossover, instead of crossing the plateau (by finding a better solution next to it).

One variant of \jump, recently introduced by Antipov and Doerr~\cite{AntipovD18PrecisePlateauAnalysis}, which removes the valley and increases the plateau, actually requires dealing with crossing the plateau.
The authors analyze the \onePlusOneEA on this variant and determine an exact run time bound (up to lower-order terms) via arguments on Markov chains.
In contrast to our setting, where the set of global optima is typically very large (in the order of $\binom{n}{n/2 + \plateauDistance}$), their function has a unique global optimum.

In another recent paper, Bian et~al.~\cite{BianQTY20RobustLinearOptimization} analyzed the \onePlusOneEA on \asymmetricPlateau as a side result when considering deletion-robust linear optimization.
Using our notion of \asymmetricPlateau, their result holds for values of $\plateauDistance = \bigO{\sqrt{n \log n}}$, that is, not for the entire range of~\plateauDistance up to~$n/2$.
Further, their result does not seem to be tight, as for values of $r = \bigTheta{1}$, their bound is \bigO{n^2}, whereas we show a constant bound, albeit for \rls.

\paragraph{Outline}
In \Cref{sec:preliminaries}, we introduce our notation, formalize our problem, and provide the mathematical tools we use for our analysis.
Especially, we prove \Cref{thm:basic_drift,thm:walds_equation}, which we believe are of independent interest.
In \Cref{sec:eas_on_plateaus}, we analyze \rls on \plateau (\Cref{thm:eas_on_plateaus}) and \asymmetricPlateau (\Cref{thm:rls_run_time_on_asymmetric_plateau}).
Further, we provide the theorem for decomposing stopping times (\Cref{thm:general_restart_argument}), show how it can be used to relate the expected run times of EAs on \plateau and \asymmetricPlateau (\Cref{thm:restart_argument}), and put everything together to prove our main result, the run time bound for \rls optimizing \onemax with added neutrality from the W-model (\Cref{cor:rls_w-model}).
\Cref{sec:experiments} describes our empirical results.
Last, we conclude with \Cref{sec:conclusion} and suggest ideas for future work.

\section{Preliminaries}
\label{sec:preliminaries}
Let~\N denote the set of natural numbers, including~$0$.
For all $m, n \in \N$, let $[m .. n] \defeq [m, n] \cap \N$, and let $[m] \defeq [1 .. m]$.
Further, let~\R denote the set of reals, and let $\RExtended = \R \cup \{-\infty, \infty\}$.
For the sake of conciser notations, we define that $\infty \cdot  0 = 0 = -\infty \cdot 0$.
We say that a random variable~\randomProcess over~\R is \emph{integrable} if and only if $\E{|\randomProcess|} < \infty$.
Note that integrability of~\randomProcess implies that~\randomProcess is almost surely finite.
We extend this concept to random processes and say that $(\randomProcess[\timePoint])_{\timePoint \in \N}$ is integrable if and only if for each $\timePoint \in \N$, it holds that~\randomProcess[\timePoint] is integrable.

Let $n \in \N_{> 0}$.
We call each $\individual \in \{0, 1\}^n$ an \emph{individual} and its components \emph{bits.}
Further, for all $b \in \{0, 1\}$, let $|\individual|_b \defeq |\{i \in [n] \mid \individual_i = b\}|$.
We call any real-valued function over individuals a \emph{fitness function.}

Let $n \in \N_{> 0}$ be even, and let $\plateauDistance \in [0 .. n / 2]$.
We consider the two fitness functions \plateau[\plateauDistance] and \asymmetricPlateau[\plateauDistance], where, for all $\individual \in \{0, 1\}^n$, it holds that
\begin{align*}
    \plateau[\plateauDistance](\individual) &= \indicator{\max\{|\individual|_0, |\individual|_1\} \geq n/2 + r} \textrm{ and}\\
    \asymmetricPlateau[\plateauDistance](\individual) &= \indicator{|\individual|_1 \geq n/2 + r} ,
\end{align*}
where $\indicator{\mathcal{E}}$ is the indicator function that evaluates to~$1$ if event $\mathcal{E}$ occurs and that evaluates to~$0$ otherwise.
When we drop the index~\plateauDistance and only mention \plateau or \asymmetricPlateau, we mean that the respective statement holds for all $\plateauDistance \in [0 .. n / 2]$.

We note that \plateau and \asymmetricPlateau are special cases of the functions mentioned in \Cref{sec:introduction} with the target string $\individualThird \in \{0, 1\}^n$ being the all-$1$s string.
Since we only consider algorithms that treat~$0$s and~$1$s symmetrically, analyzing \plateau and \asymmetricPlateau as defined above covers without loss of generality the more general function class discussed in \Cref{sec:introduction}.

We consider \emph{evolutionary algorithms} (EAs) to be (possibly random) sequences $(\individual[\timePoint])_{\timePoint \in \N}$ over $\{0, 1\}^n$.
Typically, the sequence of an EA~\myAlgorithm depends (among others) on a fitness function~$f$, and is not well-defined otherwise.
We refer to well-defined sequences by saying that \emph{\myAlgorithm optimizes~$f$.}

A specific EA that we are interested in and that optimizes a fitness function is \emph{random local search} (RLS, \Cref{alg:rls}).
Note that for RLS to optimize~$f$ and the respective sequence to be well-defined, in addition to~$f$, an initialization distribution needs to be specified.
Typically, the uniform distribution is chosen.
However, since we require for some of our results that RLS starts at a specific position in the search space, we consider general initialization distributions.

Given an EA $(\individual[\timePoint])_{\timePoint \in \N}$ optimizing a fitness function~$f$, we call $\inf \{\timePoint \in \N \mid f\big(\individual[\timePoint]\big) = \max \rng{f}\}$ the \emph{run time} of the algorithm on~$f$.
Note that the set that the infimum is taken over may be empty.
To this end, we define $\inf \emptyset = \infty$.
Further note that the run time can be a random variable.
We refer to the expectation of the run time as the \emph{expected run time.}

\begin{algorithm}[t]
    \caption{
        \label{alg:rls}
        Random local search (\rls) with initialization distribution~\initialization, maximizing $f\colon \{0, 1\}^n \to \R$.
    }
    $\individual[0] \sim \initialization$\;
    \For{$\timePoint \in \N$}
    {
        $\individualOther \gets$ flip exactly one bit in~\individual[\timePoint], chosen uniformly at random\;
        \lIf{$f(\individualOther) \geq f\big(\individual[\timePoint]\big)$}
        {%
            $\individual[\timePoint + 1] \gets \individualOther$%
        }
        \lElse
        {%
            $\individual[\timePoint + 1] \gets \individual[\timePoint]$%
        }
    }
\end{algorithm}

\subsection{Tools}

Besides basic tools from probability theory, we apply drift analysis~\cite{Lengler20DriftBookChapter}.
Most prominently, we use the additive-drift theorem, originally introduced by He and Yao~\cite{HeY01DriftTheory}, as well as the multiplicative-drift theorem, originally introduced by Doerr et~al.~\cite{DoerrJW12MultiplicativeDrift}.
We state the theorems in a general fashion, as provided by Kötzing and Krejca~\cite{KoetzingK19DriftTheory}, removing some unnecessary assumptions.

\begin{theorem}[Additive drift, upper bound {\cite[Theorem~$7$]{KoetzingK19DriftTheory}}]
    \label{thm:additive_drift}
    Let $(\randomProcess[\timePoint])_{\timePoint \in \N}$ be an integrable random processes over $\R_{\geq 0}$, adapted to a filtration $(\filtration[\timePoint])_{\timePoint \in \N}$, and let~\stoppingTime be a stopping time with respect to~\filtration.
    Assume that there is a $\drift \in \R_{> 0}$ such that, for all $\timePoint \in \N$, it holds that
    \begin{align}
        \label[ineq]{eq:additive-drift:drift_condition}
        \big(\randomProcess[\timePoint] - \E{\randomProcess[\timePoint + 1]}[\filtration[\timePoint]]\big) \cdot \indicator{\timePoint < \stoppingTime} \geq \drift \cdot \indicator{\timePoint < \stoppingTime} .
    \end{align}
    Then $\E{\stoppingTime}[\filtration[0]] \leq (\randomProcess[0] - \E{\randomProcess[\stoppingTime]}[\filtration[0]]) / \drift$.
\end{theorem}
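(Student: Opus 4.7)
The plan is to telescope the one-step drift condition on the stopped process $\randomProcessOther[\timePoint] \defeq \randomProcess[\timePoint \wedge \stoppingTime]$ and then pass to the limit $n \to \infty$ using Fatou's lemma together with the non-negativity of~\randomProcess. Since~\stoppingTime is a stopping time with respect to~\filtration, the event $\{\timePoint < \stoppingTime\}$ is $\filtration[\timePoint]$-measurable, and on its complement $\randomProcessOther[\timePoint] = \randomProcessOther[\timePoint + 1] = \randomProcess[\stoppingTime]$. Combining these observations with the identity $\randomProcessOther[\timePoint] - \randomProcessOther[\timePoint + 1] = (\randomProcess[\timePoint] - \randomProcess[\timePoint + 1]) \cdot \indicator{\timePoint < \stoppingTime}$ and the assumed drift hypothesis, I would first recast the hypothesis into the unconstrained form $\E{\randomProcessOther[\timePoint] - \randomProcessOther[\timePoint + 1]}[\filtration[\timePoint]] \geq \drift \cdot \indicator{\timePoint < \stoppingTime}$, valid for every $\timePoint \in \N$.

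Next, I would fix an arbitrary $n \in \N$, telescope the identity $\randomProcessOther[0] - \randomProcessOther[n] = \sum_{\timePoint = 0}^{n - 1} (\randomProcessOther[\timePoint] - \randomProcessOther[\timePoint + 1])$, take conditional expectations given~$\filtration[0]$, and apply the tower property together with the previous inequality. Using $\randomProcessOther[0] = \randomProcess[0]$, this yields
\begin{equation*}
    \randomProcess[0] - \E{\randomProcessOther[n]}[\filtration[0]] \geq \drift \cdot \sum_{\timePoint = 0}^{n - 1} \Pr{\timePoint < \stoppingTime}[\filtration[0]] = \drift \cdot \E{n \wedge \stoppingTime}[\filtration[0]].
\end{equation*}
Because $\randomProcessOther[n] \geq 0$, dropping the term $\E{\randomProcessOther[n]}[\filtration[0]]$ already gives $\E{n \wedge \stoppingTime}[\filtration[0]] \leq \randomProcess[0] / \drift$, so monotone convergence in~$n$ implies $\E{\stoppingTime}[\filtration[0]] \leq \randomProcess[0] / \drift$. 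In particular, $\stoppingTime$ is almost surely finite, hence $\randomProcess[\stoppingTime]$ is well-defined and $\randomProcessOther[n] \to \randomProcess[\stoppingTime]$ pointwise almost surely. Applying Fatou's lemma to the non-negative sequence $(\randomProcessOther[n])_{n \in \N}$ yields $\E{\randomProcess[\stoppingTime]}[\filtration[0]] \leq \liminf_{n \to \infty} \E{\randomProcessOther[n]}[\filtration[0]]$. Substituting this into the telescoped bound and sending $n \to \infty$ gives $\randomProcess[0] - \E{\randomProcess[\stoppingTime]}[\filtration[0]] \geq \drift \cdot \E{\stoppingTime}[\filtration[0]]$; dividing by~\drift delivers the claim.

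The main obstacle is this last limiting step, since without additional structure $\E{\randomProcessOther[n]}[\filtration[0]]$ need not converge to $\E{\randomProcess[\stoppingTime]}[\filtration[0]]$; however, the assumed non-negativity makes Fatou's lemma push the bound in exactly the favorable direction. A secondary technical concern is that every conditional expectation and interchange of finite sum with conditional expectation must be justified; this is handled by the assumed integrability of $(\randomProcess[\timePoint])_{\timePoint \in \N}$, which in turn makes each $\randomProcessOther[n]$ integrable since $0 \leq \randomProcessOther[n] \leq \sum_{\timePoint = 0}^{n} \randomProcess[\timePoint]$.
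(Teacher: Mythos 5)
Your proof is correct. Note, however, that the paper does not prove \Cref{thm:additive_drift} itself---it cites K\"otzing and Krejca---and the closest in-paper argument is the proof of \Cref{thm:basic_drift}, which derives a strictly more general statement (time-dependent drifts $\drift_\myIndex$, plus a matching lower-bound case) by packaging $\randomProcess[\myIndex] + \sum_{k \in [\myIndex]} \drift_{k-1}$ into a stopped supermartingale and invoking the optional-stopping theorem; the additive-drift bound then falls out as the special case $\drift_\myIndex \equiv \drift$. Your route is genuinely different in technique: you telescope the one-step inequality on the stopped process $\randomProcess[\timePoint \wedge \stoppingTime]$ directly, obtain $\drift \cdot \E{n \wedge \stoppingTime}[\filtration[0]] \leq \randomProcess[0] - \E{\randomProcess[n \wedge \stoppingTime]}[\filtration[0]]$ for each finite~$n$, and pass to the limit with conditional monotone convergence and conditional Fatou, using non-negativity to push the error term in the favorable direction. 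This is essentially a self-contained re-proof of the optional-stopping step, and it buys two things: it is elementary (no martingale machinery needs to be cited) and it delivers the a.s.\ finiteness of~\stoppingTime together with the preliminary bound $\E{\stoppingTime}[\filtration[0]] \leq \randomProcess[0]/\drift$ as an intermediate product. What it does not give you is the generality of \Cref{thm:basic_drift}, which the paper needs later (for \Cref{thm:walds_equation}): with a random, time-varying $\drift_\myIndex$ your telescoping still works formally, but the clean identity $\sum_{\timePoint = 0}^{n-1} \Pr{\timePoint < \stoppingTime}[\filtration[0]] = \E{n \wedge \stoppingTime}[\filtration[0]]$ no longer closes the argument by itself, which is precisely where the abstract optional-stopping formulation earns its keep. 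Minor polish: when you substitute the Fatou bound you implicitly use that $\E{n \wedge \stoppingTime}[\filtration[0]]$ converges (monotonically) to $\E{\stoppingTime}[\filtration[0]]$ before taking the $\liminf$ of the other side; stating that order of limits explicitly would make the final step airtight.
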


\begin{theorem}[Multiplicative drift, upper bound {\cite[Corollary~$16$]{KoetzingK19DriftTheory}}]
    \label{thm:multiplicative_drift}
    Let $(\randomProcess[\timePoint])_{\timePoint \in \N}$ be an integrable random processes over $\R$, adapted to a filtration $(\filtration[\timePoint])_{\timePoint \in \N}$, with $\randomProcess[0] \geq 1$, and let $\stoppingTime = \inf\{\timePoint \in \N \mid \randomProcess[\timePoint] < 1\}$.
    Assume that there is a $\drift \in \R_{> 0}$ such that, for all $\timePoint \in \N$, it holds that
    \begin{align}
        \label[ineq]{eq:multiplicative-drift:drift_condition}
        \big(\randomProcess[\timePoint] - \E{\randomProcess[\timePoint + 1]}[\filtration[\timePoint]]\big) \cdot \indicator{\timePoint < \stoppingTime} \geq \drift \cdot \randomProcess[\timePoint] \cdot \indicator{\timePoint < \stoppingTime} .
    \end{align}
    Then $\E{\stoppingTime}[\filtration[0]] \leq \big(1 + \ln(\randomProcess[0])\big) / \drift$.
\end{theorem}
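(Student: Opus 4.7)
The plan is to reduce multiplicative drift to additive drift by passing from~\randomProcess to a suitably chosen non-negative potential, and then to invoke \Cref{thm:additive_drift}. The naive choice $\ln(\randomProcess[\timePoint])$ already converts multiplicative drift into additive drift of size~\drift, via Jensen's inequality applied to the concave function~$\ln$ and the bound $\ln(1 - \drift) \leq -\drift$. However, $\ln(\randomProcess[\timePoint])$ becomes negative below~$1$, violating the non-negativity prerequisite of \Cref{thm:additive_drift}; moreover, it does not yet explain the additive~$+1$ in the target bound, which arises because a multiplicative process can ``overshoot'' below the threshold~$1$ whenever \randomProcess[\timePoint] lies in $[1, 1/(1 - \drift))$. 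Both issues are resolved by replacing the logarithm with a piecewise potential that is linear just below~$1$.

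Concretely, I would define $\potentialFunction \colon \R_{\geq 0} \to \R_{\geq 0}$ by $\potentialFunction(x) = x$ for $x \in [0, 1]$ and $\potentialFunction(x) = 1 + \ln(x)$ for $x \geq 1$. This function is continuous (the two pieces meet at $\potentialFunction(1) = 1$), strictly increasing, and concave, since both pieces are concave and their left and right derivatives at $x = 1$ both equal~$1$. Setting $\transformedProcess[\timePoint] \defeq \potentialFunction(\randomProcess[\timePoint])$, one has $\transformedProcess[\timePoint] \geq 0$ for all~\timePoint, and $\transformedProcess[0] = 1 + \ln(\randomProcess[0])$ because $\randomProcess[0] \geq 1$. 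For $\timePoint < \stoppingTime$ we have $\randomProcess[\timePoint] \geq 1$ and hence $\transformedProcess[\timePoint] = 1 + \ln(\randomProcess[\timePoint])$; Jensen's inequality (by concavity of~\potentialFunction) together with monotonicity of~\potentialFunction and the multiplicative-drift hypothesis $\E{\randomProcess[\timePoint + 1]}[\filtration[\timePoint]] \leq (1 - \drift) \randomProcess[\timePoint]$ then give
\[
    \E{\transformedProcess[\timePoint + 1]}[\filtration[\timePoint]] \leq \potentialFunction\bigl(\E{\randomProcess[\timePoint + 1]}[\filtration[\timePoint]]\bigr) \leq \potentialFunction\bigl((1 - \drift) \randomProcess[\timePoint]\bigr).
\]
A short case distinction shows that $\transformedProcess[\timePoint] - \E{\transformedProcess[\timePoint + 1]}[\filtration[\timePoint]] \geq \drift$. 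If $(1 - \drift) \randomProcess[\timePoint] \geq 1$, the right-most term above equals $1 + \ln((1 - \drift) \randomProcess[\timePoint]) \leq 1 - \drift + \ln(\randomProcess[\timePoint])$. If instead $\randomProcess[\timePoint] \in [1, 1/(1 - \drift))$, it equals $(1 - \drift) \randomProcess[\timePoint]$, and the required bound reduces to the elementary inequality $\ln(x) \geq (1 - \drift)(x - 1)$ on $[1, 1/(1 - \drift)]$, which follows from concavity of~$\ln$ and the endpoint check $-\ln(1 - \drift) \geq \drift$.

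Applying \Cref{thm:additive_drift} to~\transformedProcess with the same stopping time~\stoppingTime then yields $\E{\stoppingTime}[\filtration[0]] \leq (\transformedProcess[0] - \E{\transformedProcess[\stoppingTime]}[\filtration[0]])/\drift \leq (1 + \ln(\randomProcess[0]))/\drift$, where the last step uses $\transformedProcess[\stoppingTime] \geq 0$. Integrability of~\transformedProcess is inherited from that of~\randomProcess, since $\potentialFunction(x) \leq x$ for all $x \geq 0$. I expect the main subtle step to be the boundary case $\randomProcess[\timePoint] \in [1, 1/(1 - \drift))$ in the drift verification: the need to control $\potentialFunction((1 - \drift) \randomProcess[\timePoint])$ \emph{linearly} (rather than logarithmically) just above~$1$ is exactly what forces the additive~$+1$ in the final bound, and is the reason the linear piece of~\potentialFunction below~$1$ is essential.
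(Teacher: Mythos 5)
You take the classical route: reduce to \Cref{thm:additive_drift} via the concave, piecewise potential $g(x)=x$ on $[0,1]$ and $g(x)=1+\ln x$ on $[1,\infty)$, verify the drift of $g(X_t)$ by conditional Jensen plus a two-case comparison around the threshold (with the endpoint check $-\ln(1-\delta)\ge\delta$), and conclude. The paper itself gives no proof of this theorem---it is imported from the cited reference---so there is no internal argument to compare against; your computation is the standard one and is correct whenever the process is non-negative (modulo the trivial edge case $\delta\ge 1$, where for a non-negative process the hypothesis forces $X_{t+1}=0$ on $\{t<T\}$).

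However, the statement as printed lets $(X_t)_{t\in\N}$ range over all of $\R$, and your argument silently assumes non-negativity in three places: $g$ is only defined on $\R_{\ge 0}$; conditional Jensen requires $g$ to be concave on the whole range of $X_{t+1}$, and at the crossing step the overshoot value $X_T$ may be negative (before $T$ one indeed has $X_t\ge 1$, but nothing constrains how far below $1$ the process lands); and your final step uses $\mathrm{E}[g(X_T)\mid\mathcal{F}_0]\ge 0$. Extending $g$ linearly to negative arguments restores concavity but destroys the non-negativity needed both for \Cref{thm:additive_drift} and for that final step, while truncating $g$ at $0$ destroys concavity, and then the additive drift condition can genuinely fail. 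This is not a removable technicality, because without a lower bound on the overshoot the printed statement is false: take $\delta=1/2$, $X_0=1$, and from state $1$ let $X_{t+1}=-N$ with probability $p=1/(2(N+1))$ and $X_{t+1}=1$ otherwise (and freeze the process after the jump); then \cref{eq:multiplicative-drift:drift_condition} holds with equality, yet $\mathrm{E}[T]=2(N+1)$, exceeding the claimed bound $(1+\ln X_0)/\delta=2$ for every $N\ge 1$. So what you have is a correct proof of the standard formulation, in which the process lives in $\R_{\ge 0}$ (as in \Cref{thm:additive_drift} and in the usual statements of multiplicative drift); to prove exactly what is written here you would need to add that assumption, or some other control of $X_{\min\{t,T\}}$ from below---the version literally stated, over $\R$, cannot be proved because it does not hold.
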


In addition to these theorems, we apply a general version of Wald’s equation (\Cref{thm:walds_equation}), which we prove using a new general drift theorem (\Cref{thm:basic_drift}), as we could not find a statement this general in the literature.
We prove \Cref{thm:basic_drift} by using the optional-stopping theorem (\Cref{thm:optional_stopping}), similar to how Kötzing and Krejca~\cite{KoetzingK19DriftTheory} derived their results.

\begin{theorem}[Optional stopping {\cite[Theorems~$4.8.4$ and~$4.8.5$]{Durrett19ProbabilityTheoryExamples}, \cite[Slide~$12$]{Kovchegov2013}}]
    \label{thm:optional_stopping}
    Let~\stoppingTimeOther be a stopping time with respect to a filtration \sequence{\filtration}[\myIndex][\N], and let \sequence{\randomProcess}[\myIndex][\N] be a random process over~\RExtended, adapted to~\filtration.
    \begin{enumerate}[label = \alph*)]
        \item \label[case]{label:optional_stopping:supermartingale}
        If~\randomProcess is a non-negative supermartingale, then $\E{\randomProcess[\stoppingTimeOther]}[\filtration[0]] \leq \randomProcess[0]$.

        \item \label[case]{label:optional_stopping:submartingale}
        If $\E{\stoppingTimeOther} < \infty$ and if~\randomProcess is a submartingale such that there is a $c \in \R$ such that, for all $\myIndex \in \N$, it holds that $\E{|\randomProcess[\myIndex] - \randomProcess[\myIndex + 1]|}[\filtration[\myIndex]] \cdot \indicator{\myIndex < \stoppingTimeOther} \leq c$, then $\E{\randomProcess[\stoppingTimeOther]}[\filtration[0]] \geq \randomProcess[0]$.
        \qedhere
    \end{enumerate}
\end{theorem}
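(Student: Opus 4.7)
The plan is to reduce both statements for a general stopping time $\stoppingTimeOther$ to the easy bounded-time case by truncating at a finite horizon and then passing to the limit with a convergence theorem tailored to each part. The common first step I would carry out is to introduce the stopped process $\transformedProcess[\myIndex] \defeq \randomProcess[\stoppingTimeOther \wedge \myIndex]$. Since $\{\stoppingTimeOther > \myIndex\} \in \filtration[\myIndex]$ and, on $\{\stoppingTimeOther \leq \myIndex\}$, $\randomProcess[\stoppingTimeOther]$ is $\filtration[\myIndex]$-measurable, a short decomposition shows that $\transformedProcess$ inherits the super- (resp.\ sub-)martingale property of $\randomProcess$. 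Consequently, $\E{\transformedProcess[\myIndex]}[\filtration[0]] \leq \randomProcess[0]$ (resp.\ $\geq \randomProcess[0]$) for every $\myIndex \in \N$, and the proof of each part reduces to justifying the passage $\myIndex \to \infty$ on the left-hand side.

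For \Cref{label:optional_stopping:supermartingale}, I would invoke the martingale convergence theorem for non-negative supermartingales to define $\randomProcess[\infty] \defeq \lim_{\myIndex \to \infty} \randomProcess[\myIndex]$ almost surely, so that $\randomProcess[\stoppingTimeOther]$ is well-defined even on $\{\stoppingTimeOther = \infty\}$. Since $\transformedProcess[\myIndex] \geq 0$ and $\transformedProcess[\myIndex] \to \randomProcess[\stoppingTimeOther]$ almost surely, conditional Fatou yields
\[
\E{\randomProcess[\stoppingTimeOther]}[\filtration[0]] \leq \liminf_{\myIndex \to \infty} \E{\transformedProcess[\myIndex]}[\filtration[0]] \leq \randomProcess[0].
\]

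For \Cref{label:optional_stopping:submartingale}, integrability of $\stoppingTimeOther$ forces $\stoppingTimeOther < \infty$ almost surely, hence $\transformedProcess[\myIndex] \to \randomProcess[\stoppingTimeOther]$ pointwise on a full-measure set. The key estimate is the telescoping bound $|\transformedProcess[\myIndex] - \randomProcess[0]| \leq \sum_{\myIndex' = 0}^{\stoppingTimeOther - 1} |\randomProcess[\myIndex' + 1] - \randomProcess[\myIndex']|$. Iterating the tower property together with the hypothesized conditional bound $c$ and the $\filtration[\myIndex']$-measurability of $\indicator{\myIndex' < \stoppingTimeOther}$ shows that the expectation of the right-hand side is at most $c \cdot \E{\stoppingTimeOther} < \infty$, supplying an integrable unconditional dominator for the family $\{\transformedProcess[\myIndex]\}_{\myIndex \in \N}$. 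Conditional dominated convergence then lets me pass to the limit in $\E{\transformedProcess[\myIndex]}[\filtration[0]] \geq \randomProcess[0]$ to obtain the claim.

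The main obstacle I expect is the careful treatment of the limit exchange in each part: in \Cref{label:optional_stopping:supermartingale}, giving a meaning to $\randomProcess[\stoppingTimeOther]$ on $\{\stoppingTimeOther = \infty\}$ via the supermartingale convergence theorem, so that conditional Fatou applies to an honest almost-sure limit; and in \Cref{label:optional_stopping:submartingale}, upgrading the \emph{conditional} increment hypothesis to an integrable, unconditional dominating variable for the stopped process. Once these measure-theoretic details are secured, both inequalities drop out of the stopped-process bound without further work.
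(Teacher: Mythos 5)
Your proposal is correct, but note that the paper does not prove this theorem at all: it is imported as a known tool, cited from Durrett and Kovchegov. Your argument---stopping at $\stoppingTimeOther \wedge \myIndex$, noting the stopped process inherits the super-/submartingale property, then passing to the limit via supermartingale convergence plus conditional Fatou in case a) and via the telescoping increment bound $c \cdot \E{\stoppingTimeOther}$ plus dominated convergence in case b)---is exactly the standard textbook proof given in those references, so there is nothing to compare against within the paper itself.
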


The basic-drift theorem below is a generalization of the most commonly used drift theorems.
However, in contrast to those, it relates the accumulated drift to the overall change in potential instead of explicitly solving for the first-hitting time.
It is thus more akin to the (deterministic) potential method~\cite[Chapter~$17.3$]{CormenLRS09IntroductionToAlgorithms}, extending it to random variables.
This allows to estimate the accumulated \emph{cost} of operations rather than only the \emph{amount} of operations.

\begin{theorem}[Basic drift]
    \label{thm:basic_drift}
    Let~\stoppingTimeOther be a stopping time with respect to a filtration \sequence{\filtration}[\myIndex][\N], and let \sequence{\randomProcess}[\myIndex][\N] and \sequence{\drift}[\myIndex][\N] be random processes over~\RExtended such that $(\randomProcess[\myIndex] \cdot \indicator{\myIndex \leq \stoppingTimeOther})_{\myIndex \in \N}$ and $(\drift_\myIndex \cdot \indicator{\myIndex < \stoppingTimeOther})_{\myIndex \in \N}$ are integrable and adapted to~\filtration\!\!.
    \begin{enumerate}[label = \alph*)]
        \item \label[case]{label:basic_drift:upper_bound}
        Assume that $(\randomProcess[\myIndex] \cdot \indicator{\myIndex \leq \stoppingTimeOther})_{\myIndex \in \N}$ and $(\drift_\myIndex \cdot \indicator{\myIndex < \stoppingTimeOther})_{\myIndex \in \N}$ are non-negative and that for all $\myIndex \in \N$, it holds that
        \begin{equation}
            \tag{DC-u}\label[ineq]{eq:basic_drift:drift_bounded_from_below}
            \Big(\randomProcess[\myIndex] - \E{\randomProcess[\myIndex + 1]}[\filtration[\myIndex]][\big]\Big) \cdot \indicator{\myIndex < \stoppingTimeOther} \geq \drift_\myIndex \cdot \indicator{\myIndex < \stoppingTimeOther} .
        \end{equation}
        Then
        \begin{equation}
            \label[ineq]{eq:basic_drift:stopping_time_bounded_from_above}
            \E{\sum\nolimits_{\myIndex \in [\stoppingTimeOther]} \drift_{\myIndex - 1}}[\filtration[0]] \leq \randomProcess[0] - \E{\randomProcess[\stoppingTimeOther]}[\filtration[0]][\big] .
        \end{equation}

        \item \label[case]{label:basic_drift:lower_bound}
        Assume that $\E{\stoppingTimeOther} < \infty$, and that there is a $c \in \R$ such that for all $\myIndex \in \N$, it holds that $\E{|\randomProcess[\myIndex] - \randomProcess[\myIndex + 1] - \drift_\myIndex|}[\filtration[\myIndex]] \cdot \indicator{\myIndex < \stoppingTimeOther} \leq c$, and that for all $\myIndex \in \N$, it holds that
        \begin{equation}
            \tag{DC-l}\label[ineq]{eq:basic_drift:drift_bounded_from_above}
            \Big(\randomProcess[\myIndex] - \E{\randomProcess[\myIndex + 1]}[\filtration[\myIndex]][\big]\Big) \cdot \indicator{\myIndex < \stoppingTimeOther} \leq \drift_\myIndex \cdot \indicator{\myIndex < \stoppingTimeOther} .
        \end{equation}
        Then
        \[
        \E{\sum\nolimits_{\myIndex \in [\stoppingTimeOther]} \drift_{\myIndex - 1}}[\filtration[0]] \geq \randomProcess[0] - \E{\randomProcess[\stoppingTimeOther]}[\filtration[0]][\big] .\qedhere
        \]
    \end{enumerate}
\end{theorem}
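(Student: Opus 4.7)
The plan is to apply the optional-stopping theorem (\Cref{thm:optional_stopping}) to the stopped and augmented process
\[
Y_{\myIndex} \defeq X_{\myIndex \wedge \stoppingTimeOther} + \sum_{j \in [\myIndex \wedge \stoppingTimeOther]} \drift_{j-1} ,
\]
which tracks the value of~$X$ together with the drift accumulated up to the current (stopped) time. First I would check the standard bookkeeping: because $\myIndex \wedge \stoppingTimeOther$ is a bounded stopping time and each summand $\drift_{j-1}$ is frozen as soon as $j - 1 \geq \stoppingTimeOther$, the process~$Y$ is adapted to~$\filtration$; integrability of each $Y_{\myIndex}$ then follows summand by summand from the integrability hypotheses on $(X_{\myIndex} \cdot \indicator{\myIndex \leq \stoppingTimeOther})$ and $(\drift_{\myIndex} \cdot \indicator{\myIndex < \stoppingTimeOther})$.

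The central step is a one-line telescoping observation: on $\{\myIndex < \stoppingTimeOther\} \in \filtration[\myIndex]$ one has
\[
Y_{\myIndex} - \E{Y_{\myIndex+1}}[\filtration[\myIndex]] = \big(X_{\myIndex} - \E{X_{\myIndex+1}}[\filtration[\myIndex]]\big) - \drift_{\myIndex} ,
\]
while on $\{\myIndex \geq \stoppingTimeOther\}$ the process is constant and the one-step difference is zero. Multiplying by $\indicator{\myIndex < \stoppingTimeOther}$ lets me substitute the drift hypotheses directly: (DC-u) turns~$Y$ into a supermartingale, and (DC-l) into a submartingale. For part~(a), the non-negativity hypotheses additionally make~$Y$ non-negative, so part~(a) of \Cref{thm:optional_stopping}, applied with the stopping time~$\stoppingTimeOther$, yields $\E{Y_{\stoppingTimeOther}}[\filtration[0]] \leq Y_0 = X_0$, and rearranging gives \cref{eq:basic_drift:stopping_time_bounded_from_above}.

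For part~(b), the bounded-increment hypothesis required by part~(b) of \Cref{thm:optional_stopping} reads $\E{|Y_{\myIndex+1} - Y_{\myIndex}|}[\filtration[\myIndex]] \cdot \indicator{\myIndex < \stoppingTimeOther} \leq c$, and since the identity $|Y_{\myIndex+1} - Y_{\myIndex}| \cdot \indicator{\myIndex < \stoppingTimeOther} = |X_{\myIndex} - X_{\myIndex+1} - \drift_{\myIndex}| \cdot \indicator{\myIndex < \stoppingTimeOther}$ holds pointwise, this is exactly the extra hypothesis assumed in the theorem; optional stopping then flips the inequality to give the claimed lower bound. The most delicate point I expect is handling $\{\stoppingTimeOther = \infty\}$ in part~(a): there $Y_{\stoppingTimeOther}$ has to be interpreted as the almost-sure limit of the non-negative supermartingale (guaranteed by the martingale-convergence theorem), so that the conclusion becomes an inequality in~$\RExtended$ under the conventions fixed in \Cref{sec:preliminaries}. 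Part~(b) avoids this issue because $\E{\stoppingTimeOther} < \infty$ already forces $\stoppingTimeOther$ to be almost surely finite.
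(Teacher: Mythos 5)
Your proposal is correct and follows essentially the same route as the paper: your process $Y_{\myIndex} = X_{\myIndex \wedge \stoppingTimeOther} + \sum_{j \in [\myIndex \wedge \stoppingTimeOther]} \drift_{j-1}$ is exactly the paper's stopped version of $Z_{\myIndex} = X_{\myIndex} + \sum_{k \in [\myIndex]} \drift_{k-1}$, and you use the same one-step computation to turn (DC-u)/(DC-l) into the super-/submartingale property and then invoke the two cases of the optional-stopping theorem, with the same identification of the bounded-increment hypothesis in part~(b). Your extra remark about interpreting $Y_{\stoppingTimeOther}$ on $\{\stoppingTimeOther = \infty\}$ via supermartingale convergence is a sensible clarification that the paper leaves implicit in its citation of the optional-stopping theorem.
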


\begin{proof}
    Let \sequence{\randomProcessThird}[\myIndex][\N] be such that for all $\myIndex \in \N$, it holds that $\randomProcessThird[\myIndex] = \randomProcess[\myIndex] + \sum\nolimits_{k \in [\myIndex]} \drift_{k - 1}$.
    We aim to apply \Cref{thm:optional_stopping} to the stopped process \sequence{\randomProcessOther}[\myIndex][\N] of~\randomProcessThird, and to~\stoppingTimeOther and~\filtration.
    To this end, for all $\myIndex \in \N$, let
    \[
    \randomProcessOther[\myIndex] = \randomProcessThird[\myIndex] \cdot \indicator{\myIndex \leq \stoppingTimeOther} + \randomProcessThird[\stoppingTimeOther] \cdot \indicator{\myIndex > \stoppingTimeOther} .
    \]
    Note that, all by assumption,~\stoppingTimeOther is a stopping time with respect to~\filtration, and~\randomProcessOther is integrable and adapted to~\filtration, due to $(\randomProcess[\myIndex] \cdot \indicator{\myIndex \leq S})_{\myIndex \in \N}$ and $(\drift_\myIndex \cdot \indicator{\myIndex < S})_{\myIndex \in \N}$ having these properties.

    We are left to show that~\randomProcessOther is a super- or a submartingale with additional properties.
    To this end, we first consider the expected change of~\randomProcessOther.
    Let $\myIndex \in \N$.
    Noting that~\stoppingTimeOther is integer, observe that
    \begin{align*}
        &\randomProcessOther[\myIndex] - \E{\randomProcessOther[\myIndex + 1]}[\filtration[\myIndex]]\\
        &\quad= \randomProcessThird[\myIndex] \cdot \overbrace{\indicator{\myIndex \leq \stoppingTimeOther}}^{\mathclap{= \indicator{\myIndex = \stoppingTimeOther} + \indicator{\myIndex < \stoppingTimeOther}}}
            + \randomProcessThird[\stoppingTimeOther] \cdot \indicator{\myIndex > \stoppingTimeOther}\\
            &\qquad- \E{\randomProcessThird[\myIndex + 1] \cdot \overbrace{\indicator{\myIndex + 1 \leq \stoppingTimeOther}}^{\mathclap{= \indicator{\myIndex < \stoppingTimeOther}}}
            + \randomProcessThird[\stoppingTimeOther] \cdot \indicator{\myIndex + 1 > \stoppingTimeOther}}[\filtration[\myIndex]][\big]\\
        &\quad= \big(\randomProcessThird[\myIndex] - \E{\randomProcessThird[\myIndex + 1]}[\filtration[\myIndex]]\big) \cdot \indicator{\myIndex < \stoppingTimeOther}
            + \randomProcessThird[\myIndex] \cdot \indicator{\myIndex = \stoppingTimeOther}\\
            &\qquad+ \randomProcessThird[\stoppingTimeOther] \cdot \E{\overbrace{\indicator{\myIndex > \stoppingTimeOther} - \indicator{\myIndex + 1 > \stoppingTimeOther}}^{= -\indicator{\myIndex = \stoppingTimeOther}}}[\filtration[\myIndex]][\big]\\
        &\quad= \big(\randomProcessThird[\myIndex] - \E{\randomProcessThird[\myIndex + 1]}[\filtration[\myIndex]]\big) \cdot \indicator{\myIndex < \stoppingTimeOther}\\
        &\quad= \big(\randomProcess[\myIndex] - \E{\randomProcess[\myIndex + 1]}[\filtration[\myIndex]] - \drift_\myIndex\big) \cdot \indicator{\myIndex < \stoppingTimeOther} \numberthis\label{eq:basic_drift:potential_difference} .
    \end{align*}
    Whether~\randomProcessOther is a super- or a submartingale is determined by the sign of \cref{eq:basic_drift:potential_difference}, noting that $\indicator{\myIndex < \stoppingTimeOther} \geq 0$.
    To this end, we consider the two cases of \Cref{thm:basic_drift} separately.

    \textbf{\Cref{label:basic_drift:upper_bound}.}
    By \cref{eq:basic_drift:drift_bounded_from_below}, it follows that \cref{eq:basic_drift:potential_difference} is greater or equal to zero.
    Thus,~\randomProcessOther is a supermartingale.
    Further, due to the assumptions of \Cref{thm:basic_drift} \cref{label:basic_drift:upper_bound}, \randomProcessOther is non-negative.
    Applying \Cref{thm:optional_stopping} \cref{label:optional_stopping:supermartingale}, by the linearity of expectation and noting that $S \geq 0$, we get
    \begin{align}
        \label{eq:basic_drift:result}
        \E{\randomProcess[\stoppingTimeOther]}[\filtration[0]] + \E{\sum\nolimits_{\myIndex \in [S]} \drift_{\myIndex - 1}}[\filtration[0]] &= \E{\randomProcessOther[\stoppingTimeOther]}[\filtration[0]]\\
        \notag
        &\leq \randomProcessOther[0] = \randomProcess[0] .
    \end{align}
    Subtracting by $\E{\randomProcess[\stoppingTimeOther]}[\filtration[0]]$ proves this case.

    \textbf{\Cref{label:basic_drift:lower_bound}.}
    Similar to the previous case, by \cref{eq:basic_drift:drift_bounded_from_above}, it follows that \cref{eq:basic_drift:potential_difference} is less or equal to zero.
    Thus,~\randomProcessOther is a submartingale.
    By the assumptions of \Cref{thm:basic_drift} \cref{label:basic_drift:lower_bound}, it follows that~\randomProcessOther has uniformly bounded expected differences.
    Recalling that, by assumption, $\E{\stoppingTimeOther} < \infty$ in this case and by applying \Cref{thm:optional_stopping} \cref{label:optional_stopping:submartingale}, the same result as in \cref{eq:basic_drift:result} follows but with the inequality sign flipped, concluding this case and thus the proof.
\end{proof}

An application of \Cref{thm:basic_drift} is the generalization of Wald's equation below.
In contrast to more common statements of Wald's equation, this version neither assumes that the individual summands follow the same law nor that they are independent.
In the context of run time analysis, it is useful when aiming to split the run time into phases.
Instead of having to calculate the entire expected run time all at once, the theorem states that it is sufficient to calculate the expected run time of each phase (and \emph{then} add them).
The latter is typically far easier, especially since the theorem allows to choose a filtration.
This result was recently applied in the analysis of infection processes, where it was used in order to translate statements about discrete time to continuous time~\cite{FriedrichGKKP22Epidemics}.

\begin{theorem}[Generalized Wald's equation]
    \label{thm:walds_equation}
    Let~\stoppingTimeOther be an integrable stopping time with respect to a filtration \sequence{\filtration}[\myIndex][\N].
    Further, let \sequence{\randomProcess}[\myIndex][\N_{> 0}] be a random process over~$\RExtended_{\geq 0}$ such that $\sum_{\myIndex \in [\stoppingTimeOther]} \randomProcess[\myIndex]$ is integrable and such that there exists a $c \in \R$ such that for all $\myIndex \in \N$, it holds that $\E{\big|\randomProcess[\myIndex + 1] - \E{\randomProcess[\myIndex + 1]}[\filtration[\myIndex]]\big|}[\filtration[\myIndex]] \cdot \indicator{\myIndex < \stoppingTimeOther} \leq c$.
    Then $\E{\sum_{\myIndex \in [\stoppingTimeOther]} \randomProcess[\myIndex]}[\filtration[0]] = \E{\sum_{\myIndex \in [\stoppingTimeOther]} \E{\randomProcess[\myIndex]}[\filtration[\myIndex - 1]]}[\filtration[0]]$.
\end{theorem}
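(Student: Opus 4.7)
The plan is to apply \Cref{thm:basic_drift}~\cref{label:basic_drift:lower_bound} twice to a partial-sum process, once with each sign, to obtain the two halves of the claimed equality. Introduce the auxiliary sequence $\sequence{\randomProcessOther}[\myIndex][\N]$ with $\randomProcessOther[\myIndex] \defeq \sum_{k \in [\myIndex]} \randomProcess[k]$ (so $\randomProcessOther[0] = 0$). Because $\randomProcessOther[\myIndex]$ is $\filtration[\myIndex]$-measurable, we obtain
\[
    \randomProcessOther[\myIndex] - \E{\randomProcessOther[\myIndex + 1]}[\filtration[\myIndex]][\big] = -\E{\randomProcess[\myIndex + 1]}[\filtration[\myIndex]][\big] ,
\]
so the drift condition~\cref{eq:basic_drift:drift_bounded_from_above} of~\cref{label:basic_drift:lower_bound} holds with equality if we set $\drift_\myIndex \defeq -\E{\randomProcess[\myIndex + 1]}[\filtration[\myIndex]]$. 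Applying~\cref{label:basic_drift:lower_bound} in this form and rearranging the resulting inequality yields the ``$\geq$'' half of Wald's equation. Running the same argument on~$-\randomProcessOther$ with $\drift_\myIndex \defeq \E{\randomProcess[\myIndex + 1]}[\filtration[\myIndex]]$ (again an equality in~\cref{eq:basic_drift:drift_bounded_from_above}) supplies the reverse ``$\leq$'' direction, and the two together give the claim.

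The technical piece is verifying the preconditions of~\cref{label:basic_drift:lower_bound} for both invocations. Integrability of~\stoppingTimeOther is given. Integrability of the stopped potential $\randomProcessOther[\myIndex] \cdot \indicator{\myIndex \leq \stoppingTimeOther}$ follows from the pointwise bound $\randomProcessOther[\myIndex] \cdot \indicator{\myIndex \leq \stoppingTimeOther} \leq \sum_{k \in [\stoppingTimeOther]} \randomProcess[k]$ together with integrability of the right-hand side. Integrability of $\drift_\myIndex \cdot \indicator{\myIndex < \stoppingTimeOther}$ follows from the tower property together with Tonelli,
\[
    \E{\E{\randomProcess[\myIndex + 1]}[\filtration[\myIndex]] \cdot \indicator{\myIndex < \stoppingTimeOther}} = \E{\randomProcess[\myIndex + 1] \cdot \indicator{\myIndex + 1 \leq \stoppingTimeOther}} \leq \E{\sum\nolimits_{k \in [\stoppingTimeOther]} \randomProcess[k]} .
\]
Finally, the bounded-difference condition of~\cref{label:basic_drift:lower_bound} reduces for either sign choice to $\E{|\randomProcess[\myIndex + 1] - \E{\randomProcess[\myIndex + 1]}[\filtration[\myIndex]]|}[\filtration[\myIndex]][\big] \cdot \indicator{\myIndex < \stoppingTimeOther} \leq c$, which is precisely the hypothesis of~\Cref{thm:walds_equation}.

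The main obstacle is recognizing that~\cref{label:basic_drift:upper_bound} cannot be used here, because its non-negativity requirement on the drift fails for $\drift_\myIndex = -\E{\randomProcess[\myIndex + 1]}[\filtration[\myIndex]]$. The lower-bound variant places no sign restriction on~\drift and asks only for an $L^1$-type bound on the martingale differences, which is exactly what the hypothesis on~\randomProcess provides; this is what enables the generalization to non-i.i.d.\ and non-identically-distributed summands.
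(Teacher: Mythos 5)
Your plan is close to workable, but it has a genuine gap: the central identity $Z_\myIndex - \E{Z_{\myIndex + 1}}[\filtration[\myIndex]] = -\E{\randomProcess[\myIndex + 1]}[\filtration[\myIndex]]$ for $Z_\myIndex \defeq \sum_{k \in [\myIndex]} \randomProcess[k]$ requires $Z_\myIndex$ to be $\filtration[\myIndex]$-measurable, and the same measurability is needed for the precondition of \Cref{thm:basic_drift} that $(\pm Z_\myIndex \cdot \indicator{\myIndex \leq \stoppingTimeOther})_{\myIndex \in \N}$ be \emph{adapted} to~\filtration. However, \Cref{thm:walds_equation} nowhere assumes that~\randomProcess is adapted to~\filtration, and this omission is deliberate and used later: in the proof of \Cref{thm:general_restart_argument}, the theorem is applied to increment processes for which only the stopped versions (multiplied by $\indicator{i \leq \stoppingTimeThird}$) are verified to be measurable with respect to the chosen filtration, not the processes themselves. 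So as written, your argument establishes a strictly weaker statement than the one stated and needed. The paper sidesteps this by choosing a potential that is a conditional expectation and hence adapted by construction, namely $\transformedProcess[\myIndex] = \E{\sum_{j = \myIndex + 1}^{\stoppingTimeOther} \randomProcess[j]}[\filtration[\myIndex]]$ with $\drift_\myIndex = \E{\randomProcess[\myIndex + 1]}[\filtration[\myIndex]]$; you could repair your route analogously by replacing $Z_\myIndex$ with $\E{\sum_{k \in [\min\{\myIndex, \stoppingTimeOther\}]} \randomProcess[k]}[\filtration[\myIndex]]$, but then you are essentially performing the paper's construction in mirrored form.

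Apart from this point, your mechanics are sound and genuinely different in flavor from the paper's: you run the forward partial sums through \Cref{thm:basic_drift} \cref{label:basic_drift:lower_bound} twice (once for each sign), whereas the paper uses the tail potential, whose drift $\drift_\myIndex \geq 0$ and whose stopped values are non-negative (this is where $\randomProcess \geq 0$ enters), so that \cref{label:basic_drift:upper_bound} gives one direction and \cref{label:basic_drift:lower_bound} the other. Your closing claim that \cref{label:basic_drift:upper_bound} ``cannot be used'' is therefore only true for your sign convention, in which the drift of the partial sums is $-\E{\randomProcess[\myIndex + 1]}[\filtration[\myIndex]] \leq 0$; it is not an obstacle in the paper's setup. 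Your verification of the remaining hypotheses---integrability of the stopped potential by domination with $\sum_{k \in [\stoppingTimeOther]} \randomProcess[k]$, integrability of the drift via the tower rule, and the reduction of the bounded-difference condition to the assumed conditional $L^1$ bound---matches the paper's reasoning; only the adaptedness issue above needs repair.
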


Checking that $\sum_{\myIndex \in [\stoppingTimeOther]} \randomProcess[\myIndex]$ is integrable can prove challenging in certain cases.
However, if this sum can be expressed as a single stopping time, \Cref{thm:basic_drift} \cref{label:basic_drift:upper_bound} might be used to show that the sum is integrable.

\begin{proof}[Proof of \Cref{thm:walds_equation}]
    Let \sequence{\transformedProcess}[\myIndex][\N] be such that for all $\myIndex \in \N$, it holds that $\transformedProcess[\myIndex] = \E{\sum_{j = \myIndex + 1}^{\stoppingTimeOther} \randomProcess[j]}[\filtration[\myIndex]]$.
    Further, let \sequence{\drift}[\myIndex][\N] be such that for all $\myIndex \in \N$, it holds that $\drift_\myIndex = \E{\randomProcess[\myIndex + 1]}[\filtration[\myIndex]]$.
    Note that $(\transformedProcess[\myIndex] \cdot \indicator{\myIndex \leq \stoppingTimeOther})_{\myIndex \in \N}$ and $(\drift_\myIndex \cdot \indicator{\myIndex < \stoppingTimeOther})_{\myIndex \in \N}$ are adapted to~\filtration due to the conditional expectation and that they are integrable since $\sum_{\myIndex \in [\stoppingTimeOther]} \randomProcess[\myIndex]$ is integrable and~\randomProcess is non-negative.

    We aim to apply both cases of \Cref{thm:basic_drift} to~\transformedProcess and~\drift with~\stoppingTimeOther and~\filtration.
    To this end, we note that for all $\myIndex \in \N$, by the tower rule and the linearity of the conditional expectation, we have
    \begin{align*}
        &\big(\transformedProcess[\myIndex] - \E{\transformedProcess[\myIndex + 1]}[\filtration[\myIndex]]\big) \cdot \indicator{\myIndex < \stoppingTimeOther}\\
       	&\quad= \bigg(\E{\sum\nolimits_{j = \myIndex + 1}^{\stoppingTimeOther} \randomProcess[j]}[\filtration[\myIndex]]\\
           &\qquad\quad- \E{\E{\sum\nolimits_{j = \myIndex + 2}^{\stoppingTimeOther} \randomProcess[j]}[\filtration[\myIndex + 1]]}[\filtration[\myIndex]][\bigg]\bigg) \cdot \indicator{\myIndex < \stoppingTimeOther}\\
        &\quad= \bigg(\E{\sum\nolimits_{j = \myIndex + 1}^{\stoppingTimeOther} \randomProcess[j]}[\filtration[\myIndex]]\\
            &\qquad\quad- \E{\sum\nolimits_{j = \myIndex + 2}^{\stoppingTimeOther} \randomProcess[j]}[\filtration[\myIndex]]\bigg) \cdot \indicator{\myIndex < \stoppingTimeOther}\\
        &\quad= \E{\randomProcess[\myIndex + 1]}[\filtration[\myIndex]] \cdot \indicator{\myIndex < \stoppingTimeOther} = \drift_\myIndex \cdot \indicator{\myIndex < \stoppingTimeOther}\ ,
    \end{align*}
    which shows that \cref{eq:basic_drift:drift_bounded_from_above,eq:basic_drift:drift_bounded_from_below} are satisfied.
    We now apply both cases of \Cref{thm:basic_drift} separately in order to show both directions of the equality of \Cref{thm:walds_equation}.

    \textbf{Case 1:}
    Note that $(\transformedProcess[\myIndex] \cdot \indicator{\myIndex \leq \stoppingTimeOther})_{\myIndex \in \N}$ is non-negative because~\randomProcess is.
    By \Cref{thm:basic_drift} \cref{label:basic_drift:upper_bound},
    \begin{align*}
        &\E{\sum\nolimits_{\myIndex \in [\stoppingTimeOther]} \E{\randomProcess[\myIndex]}[\filtration[\myIndex - 1]]}[\filtration[0]]
        = \E{\sum\nolimits_{\myIndex \in [\stoppingTimeOther]} \drift_{\myIndex - 1}}[\filtration[0]]\\
        &\quad\leq \transformedProcess[0] - \E{\transformedProcess[\stoppingTimeOther]}[\filtration[0]][\big]
        = \E{\sum\nolimits_{\myIndex \in [\stoppingTimeOther]} \randomProcess[\myIndex]}[\filtration[0]] .
    \end{align*}

    \textbf{Case 2:}
    By assumption, \stoppingTimeOther is integrable, and there is a $c \in \R$ such that for all $\myIndex \in \N$, it holds that
    \begin{align*}
        &\E{|\transformedProcess[\myIndex] - \transformedProcess[\myIndex + 1] - \drift_\myIndex|}[\filtration[\myIndex]] \cdot \indicator{\myIndex < \stoppingTimeOther}\\
        &\quad= \E{\left|\sum\nolimits_{j = \myIndex + 1}^{\stoppingTimeOther} \randomProcess[j] - \sum\nolimits_{j = \myIndex + 2}^{\stoppingTimeOther} \randomProcess[j] - \E{\randomProcess[\myIndex + 1]}[\filtration[\myIndex]]\right|}[\filtration[\myIndex]]\\
            &\qquad\cdot \indicator{\myIndex < \stoppingTimeOther}\\
        &\quad= \E{\big|\randomProcess[\myIndex + 1] - \E{\randomProcess[\myIndex + 1]}[\filtration[\myIndex]]\big|}[\filtration[\myIndex]] \cdot \indicator{\myIndex < \stoppingTimeOther}
        \leq c .
    \end{align*}

    By \Cref{thm:basic_drift} \cref{label:basic_drift:lower_bound} and the same calculations as at the end of the previous case but with an inverted inequality sign, we get a matching lower bound and conclude the proof.
\end{proof}

%\begin{theorem}[Wald's equation]
%    \label{thm:walds_equation}
%    Let $(\randomProcess[i])_{i \in \N_{> 0}}$ be independently, identically distributed, integrable random variables over~\R, and let~\stoppingTimeOther be an integrable random variable over~\N, independent of~\randomProcess.
%    Then $\E{\sum_{i \in [\stoppingTimeOther]} \randomProcess[i]} = \E{\randomProcess[1]} \cdot \E{\stoppingTimeOther}$.
%\end{theorem}

\section{Theoretical Results for \rls}
\label{sec:eas_on_plateaus}
We analyze the expected run time of \rls on \plateau (\Cref{thm:eas_on_plateaus}) and on \asymmetricPlateau (\Cref{thm:rls_run_time_on_asymmetric_plateau}).
We exploit the similarities between both functions, deriving the result for \asymmetricPlateau based on the result for \plateau.
In fact, we prove a very general statement (\Cref{thm:restart_argument}), which shows how to derive the expected run time for a great range of algorithms optimizing \asymmetricPlateau, given their expected run time on \plateau and some additional, related expected run times.
This result is a special case of an even more general theorem (\Cref{thm:general_restart_argument}), which decomposes an expected stopping time into smaller intervals, which are easier to analyze.
Thus, overall, deriving a good bound on the expected run time on \plateau is essential.
Last, we show how our result on \asymmetricPlateau implies run time bounds for the classical \onemax when transformed by neutrality of the W-model (\Cref{cor:rls_w-model}).

When \rls optimizes \plateau, before finding an optimal solution for the first time, it performs a random walk, as it always replaces the solution from the previous iteration with the new solution.
This random walk is biased toward solutions with as many~$0$s as~$1$s (the \emph{center}), since, in order to create a new solution, \rls flips one bit uniformly at random from the solution from the previous iteration and any imbalance in the number of~$0$s and~$1$s favors flipping those bits that occur in a higher quantity.
Thus, the number of possible ways of an individual to get to the center compared to those toward an individual with all~$1$s or all~$0$s grows exponentially in the distance to the center.
Since the optima of \plateau are off-center, this results in the expected run time growing exponentially with respect to a basis that we call $\basisRLS \in \R_{> 1}$, which is \emph{not} necessarily bounded away from~$1$ by a constant.

Let $n \in \N_{> 0}$ be even, and let $\plateauDistance \in [n/2]$.
The expected run time of \rls on \plateau[\plateauDistance] majorly depends on
\begin{align}
    \label{eq:basisRLS}
    \basisRLS &= \frac{3\plateauDistance\big(n + 2(\plateauDistance - 1)\big)}{3\plateauDistance\big(n - 2(\plateauDistance - 1)\big) - 2n}
    =1 + \frac{2\big(n + 6\plateauDistance(\plateauDistance - 1)\big)}{3\plateauDistance\big(n - 2(\plateauDistance - 1)\big) - 2n}\ ,
\end{align}
which is greater than~$1$.
We discuss this term and its impact on the expected run time of \rls after stating our main results.

\begin{theorem}
    \label{thm:eas_on_plateaus}
    Let $n \in \N_{> 0}$ be even, $\plateauDistance \in [n / 2]$, let~\basisRLS be as in \cref{eq:basisRLS}, and let $h\colon \{0, 1\}^n \to \R_{\geq 0}$ with
    \begin{align*}
        \individual \mapsto
        \begin{cases}
            \basisRLS^\plateauDistance - \basisRLS^{\max\{|\individual|_0, |\individual|_1\} - n/2} & : \max\{|\individual|_0, |\individual|_1\} < \frac{n}{2} + \plateauDistance,\\
            0                                                                                       & : \textrm{otherwise.}
        \end{cases}
    \end{align*}
    Let~\individual[0] denote the (possibly random) initial individual of \rls.
    The expected run time of \rls on \plateau[\plateauDistance], conditional on~\individual[0], is at most $3\plateauDistance \cdot h(\individual[0]) / (\basisRLS - 1)$.
\end{theorem}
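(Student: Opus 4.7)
The plan is to apply the additive drift theorem (Theorem~\ref{thm:additive_drift}) to the potential $h$ along the trajectory of \rls, exploiting the fact that, on \plateau[\plateauDistance], every non-optimal individual has fitness $0$, so \rls unconditionally accepts each single-bit flip until an optimum is reached. Setting $D_\timePoint \defeq \max\{|\individual[\timePoint]|_0, |\individual[\timePoint]|_1\} - n/2 \in \{0, \ldots, n/2\}$ and $\stoppingTime \defeq \inf\{\timePoint \in \N \mid D_\timePoint \geq \plateauDistance\}$, this random-walk description shows that, for $\timePoint < \stoppingTime$, the conditional distribution of $D_{\timePoint + 1}$ given the natural filtration $\filtration[\timePoint]$ depends only on $D_\timePoint$: from $D_\timePoint = 0$ the next state is $1$ deterministically, while from $D_\timePoint = d \geq 1$ the next state is $d - 1$ with probability $(n/2 + d)/n$ and $d + 1$ with probability $(n/2 - d)/n$, by the symmetry of \plateau under bit complementation. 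Note that $h(\individual[\stoppingTime]) = 0$ almost surely and $h \geq 0$.

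The heart of the proof is to verify the drift bound
\[
    h(\individual[\timePoint]) - \E{h(\individual[\timePoint + 1])}[\filtration[\timePoint]] \geq \frac{\basisRLS - 1}{3\plateauDistance}
\]
for every $\timePoint < \stoppingTime$. Since $h(\individual[\timePoint]) = \basisRLS^\plateauDistance - \basisRLS^{D_\timePoint}$ on $\{D_\timePoint < \plateauDistance\}$, the case $D_\timePoint = 0$ immediately yields drift $= \basisRLS - 1$, which clearly exceeds the required threshold. For $D_\timePoint = d \in \{1, \ldots, \plateauDistance - 1\}$, direct expansion and the factorization $(n/2 + d) + (n/2 - d)\basisRLS^2 - n\basisRLS = (\basisRLS - 1)[(n/2)(\basisRLS - 1) - d(\basisRLS + 1)]$ give drift $= \tfrac{\basisRLS^{d - 1}(\basisRLS - 1)}{n}[(n/2)(\basisRLS - 1) - d(\basisRLS + 1)]$. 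Substituting the closed forms $\basisRLS - 1 = 2(n + 6\plateauDistance(\plateauDistance - 1))/B$ and $\basisRLS + 1 = 2n(3\plateauDistance - 1)/B$ with $B \defeq 3\plateauDistance(n - 2(\plateauDistance - 1)) - 2n$ (both immediate from \eqref{eq:basisRLS}) and using $\basisRLS^{d - 1} \geq 1$ for $d \geq 1$, the required inequality reduces to the polynomial bound
\[
    18\plateauDistance^2(\plateauDistance - 1 - d) + 6\plateauDistance d + 6\plateauDistance(\plateauDistance - 1) + 2n \geq 0,
\]
which holds trivially for $d \leq \plateauDistance - 1$.

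With the drift in hand, Theorem~\ref{thm:additive_drift} applied with $\drift = (\basisRLS - 1)/(3\plateauDistance)$ yields $\E{\stoppingTime}[\filtration[0]] \leq h(\individual[0])/\drift = 3\plateauDistance \cdot h(\individual[0])/(\basisRLS - 1)$, using that $\E{h(\individual[\stoppingTime])}[\filtration[0]] = 0$ and that $h$ is integrable because the state space is finite. The main obstacle is precisely the drift verification above: the closed-form expression for $\basisRLS$ in \eqref{eq:basisRLS} is engineered so that the case $d = \plateauDistance - 1$ is the tight one (the residual being only the factor $\basisRLS^{\plateauDistance - 1} \geq 1$), while for smaller $d$ the nonnegative polynomial above supplies extra slack. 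Everything else in the argument is routine bookkeeping.
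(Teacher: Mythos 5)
Your proposal is correct and follows essentially the same route as the paper: the same exponential potential, the same random-walk transition probabilities for the majority count, verification of drift at least $(\basisRLS - 1)/(3\plateauDistance)$, and an application of the additive-drift theorem with $\E{h(\individual[\stoppingTime])}[\filtration[0]] = 0$. The only difference is the algebraic bookkeeping in the drift verification --- you factor the one-step drift and clear denominators to a nonnegative polynomial in $d$, whereas the paper minimizes the bracketed term over the state (worst case $d = \plateauDistance - 1$) and then substitutes the closed form of~\basisRLS --- which is an equivalent computation.
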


\begin{theorem}
    \label{thm:rls_run_time_on_asymmetric_plateau}
    Let $n \in \N_{> 0}$ be even, $\plateauDistance \in [n / 2]$, and let~\basisRLS be as in \cref{eq:basisRLS}.
    For sufficiently large~$n$, the expected run time of \rls on \asymmetricPlateau[\plateauDistance] with uniform initialization is at most $6\plateauDistance \cdot (\basisRLS^\plateauDistance - 1) / (\basisRLS - 1) + n\big(1 + \ln(\plateauDistance)\big)/2$.
\end{theorem}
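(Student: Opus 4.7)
The plan is to reduce the analysis on \asymmetricPlateau[\plateauDistance] to the one on \plateau[\plateauDistance] via the restart argument of \Cref{thm:restart_argument}, and to combine this with a multiplicative-drift analysis of a ``recovery'' random walk. The starting observation is that on the plateau $\{\individual : n/2 - \plateauDistance < |\individual|_1 < n/2 + \plateauDistance\}$ the two fitness functions are indistinguishable for \rls: the fitness is constantly~$0$, every proposed move is accepted, and the one-bit-flip dynamics are identical. The functions differ only at the boundary: on \plateau[\plateauDistance], \rls stops as soon as $\max\{|\individual|_0, |\individual|_1\} \geq n/2 + \plateauDistance$, while on \asymmetricPlateau[\plateauDistance] it stops only at $|\individual|_1 \geq n/2 + \plateauDistance$ and otherwise keeps running.

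I would therefore partition the run on \asymmetricPlateau[\plateauDistance] into alternating \emph{trial} phases, which are coupled to runs of \rls on \plateau[\plateauDistance], and \emph{recovery} phases, which start when a trial ends on the $0$-rich side and end as soon as $|\individual|_1 \geq n/2$, at which point a new trial begins. Each trial has expected length at most $3\plateauDistance (\basisRLS^\plateauDistance - 1)/(\basisRLS - 1)$ by \Cref{thm:eas_on_plateaus}, using the uniform bound $h(\individual) \leq \basisRLS^\plateauDistance - 1$. A trial reaches the $1$-rich side with probability at least~$1/2$: for the first trial this follows from the invariance of the uniform initialization under complementation of all bits, and for later trials, which start at a state with $|\individual|_1 \geq n/2$, it follows from the monotonicity of the hitting-probability function of the birth-death chain driving $|\individual|_1$, together with the exact value~$1/2$ at the balanced state. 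Consequently, the expected number of trials is at most~$2$, and the expected number of recoveries is at most~$1$.

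For a recovery phase I would apply \Cref{thm:multiplicative_drift} to the potential $\randomProcess[\timePoint] = \max\{n/2 - |\individual[\timePoint]|_1, 0\}$. When $|\individual[\timePoint]|_1 < n/2$, the expected one-step change of $|\individual[\timePoint]|_1$ under a uniformly chosen bit flip equals $1 - 2|\individual[\timePoint]|_1/n$, so $\randomProcess[\timePoint] - \E{\randomProcess[\timePoint + 1]}[\filtration[\timePoint]] = (2/n)\,\randomProcess[\timePoint]$. A recovery begins at $|\individual|_1 = n/2 - \plateauDistance$, hence $\randomProcess[0] = \plateauDistance$, and terminates once $\randomProcess[\timePoint] < 1$, yielding an expected duration of at most $n(1 + \ln \plateauDistance)/2$. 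Feeding the bounded trial time, success probability, and recovery time into \Cref{thm:restart_argument} then produces $2 \cdot 3\plateauDistance (\basisRLS^\plateauDistance - 1)/(\basisRLS - 1) + 1 \cdot n(1 + \ln \plateauDistance)/2$, as claimed.

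The hardest part will be the formal bookkeeping demanded by \Cref{thm:restart_argument}: fixing a filtration, specifying the sequence of stopping times that delimit trials and recoveries, and verifying the integrability hypotheses together with the bound on the expected number of restarts via \Cref{lem:number_of_restarts,lem:integrability_of_the_stopping_time}. A smaller subtlety is that a recovery phase might incidentally hit the global optimum before returning to the balanced region; this can only decrease the overall run time, but it has to be accommodated in the decomposition so that \Cref{thm:restart_argument} still applies cleanly.
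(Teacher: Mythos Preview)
Your proposal is correct and follows essentially the same approach as the paper: the paper likewise applies \Cref{thm:restart_argument}, bounds each trial via \Cref{thm:eas_on_plateaus} with $h \le \basisRLS^{\plateauDistance}-1$, handles the recovery phase by the very same multiplicative-drift computation (packaged separately as \Cref{thm:rls_getting_to_a_majority_of_ones}), and invokes \Cref{lem:number_of_restarts,lem:integrability_of_the_stopping_time} for the restart count and the integrability checks. Your final worry about a recovery accidentally hitting the optimum is moot for \rls, since single-bit flips force $|\individual|_1$ to pass through $n/2$ before it can reach $n/2+\plateauDistance$.
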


\Cref{thm:eas_on_plateaus} shows that there is a \emph{drastic} change in the expected run time of \rls on \plateau with respect to~\plateauDistance.
In order to see this, recall \cref{eq:basisRLS} and note that $h(\individual[0]) \leq \lambda^\plateauDistance$.
If $\plateauDistance = \bigTheta{1}$, then $\lambda = 1 + \bigTheta{1}$, and the expected run time is constant.
If $\plateauDistance = \smallOmega{1} \cap \bigO{\sqrt{n}}$, then $\lambda = 1 + \bigO{1/\plateauDistance}$, and the expected run time is \bigO{\plateauDistance^2}, as $\lambda^\plateauDistance \leq \eulerE^{\bigO{1}}$.
Especially, if $\plateauDistance = \bigO{\sqrt{n}}$, then the expected run time is at most linear in~$n$.
If $\plateauDistance = \smallOmega{\sqrt{n}} \cap \smallO{n}$, then $\lambda = 1 + \bigO{\plateauDistance/n}$, and the expected run time is $n \eulerE^{\bigO{\plateauDistance^2/n}}$, which is superlinear in~$n$ and  even superpolynomial in~$n$ for $r = \smallOmega{\sqrt{n \log n}}[\big]$.
Last, if there is a $c = \bigTheta{1}$ with $c < 1/2$ and an $s \in \N$ with $s \leq cn$ such that $r = n/2 - s$, then $\lambda = 1 + \bigO{n/(s + 1)}[\big]$, and the expected run time is $(s + 1) \eulerE^{\bigO{n^2/(s + 1)}}$, which is at least exponential in~$n$.

This drastic increase in the expected run time carries over to \Cref{thm:rls_run_time_on_asymmetric_plateau}, whose bound we compare to an empirically determined one, as depicted in \Cref{fig:run_time_varying_r} for $n = 10\,000$.
Due to the steep growth of the run time in~\plateauDistance, we only get empirical results up to $\plateauDistance = 170 = 1.7 \sqrt{n}$.
Nonetheless, the empirical expected run time suggests an exponential growth in~\plateauDistance.
However, the growth does not seem to be as drastic as the theoretical upper bound of \Cref{thm:rls_run_time_on_asymmetric_plateau} suggests, which is larger than the empirical average by more than a constant factor.

\begin{figure}
    \centering
    \includegraphics[width = \columnwidth]{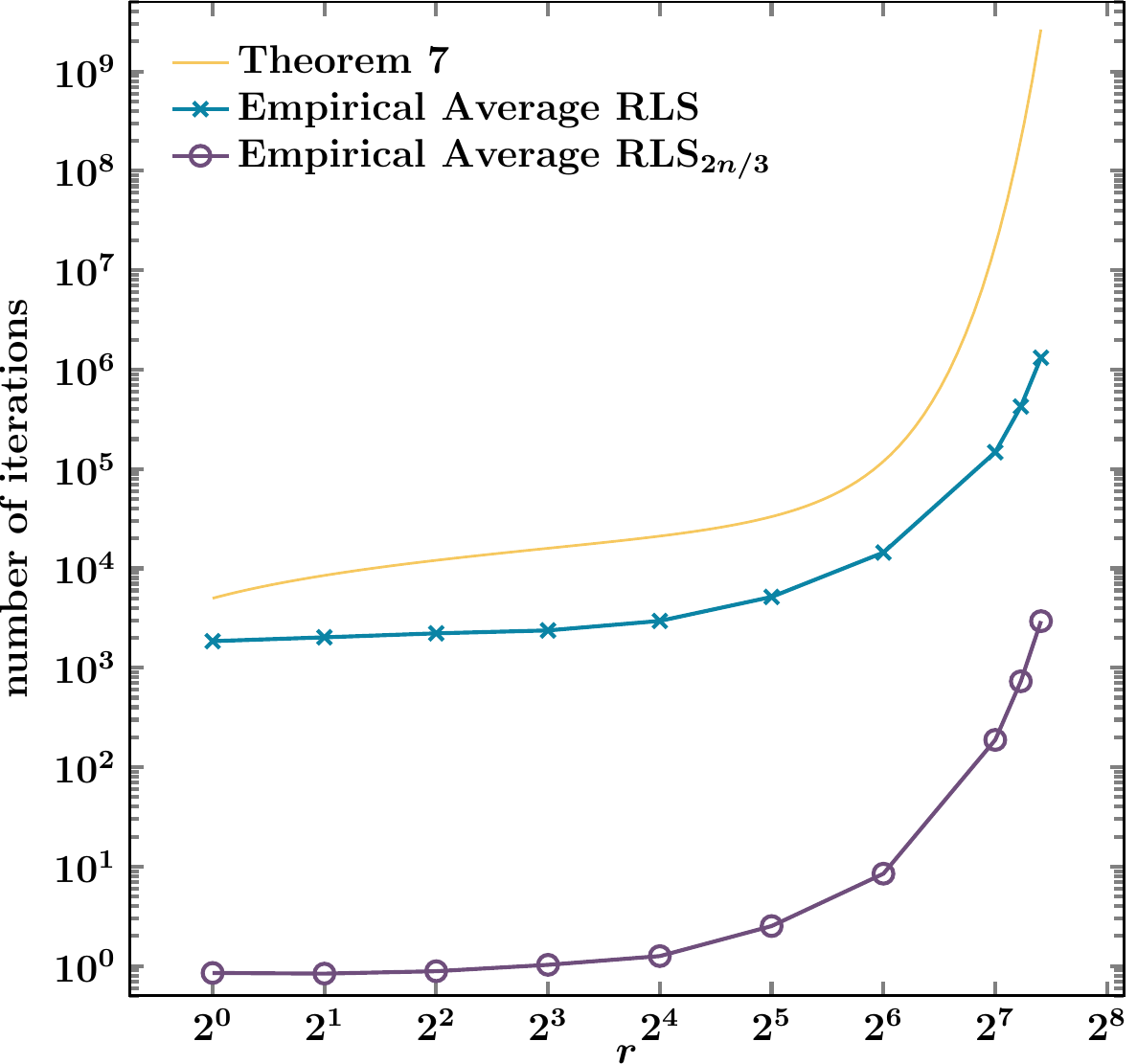}
    \caption{\label{fig:run_time_varying_r}
        The theoretical (top, orange curve) and empirical expected run time of \rls (center, blue curve with~$\times$) and \rls[2n/3] (bottom, purple curve with~$\circ$) on \asymmetricPlateau[\plateauDistance] for $n = 10\,000$ and $r \in \{2^i \mid i \in \{0\} \cup [7]\} \cup \{150, 170\}$.
        The theoretical run time is the upper bound from \Cref{thm:rls_run_time_on_asymmetric_plateau}.
        For the empirical run times, for each value of~$r$ stated above, the average of~$1000$ independent runs of~\rls and of~\rls[2n/3] with uniform initialization is depicted.
        For more information regarding \rls[2n/3], please see \Cref{sec:empirical_run_time_varying_n_and_k}; the value $2n/3$ is near-optimal, as shown in \Cref{fig:empirical_run_time_varying_n_and_k}, resulting in a drastically improved run time.
    }
\end{figure}

\subsection{Run Time Results on \texorpdfstring{\plateau}{HasMajority}}
\label{sec:plateau_run_times}

In our proof of \Cref{thm:eas_on_plateaus}, we aim to apply the additive-drift theorem (\Cref{thm:additive_drift}).
To this end, letting~\individual denote the currently best individual of \rls in each iteration, we consider the distance $n/2 + \plateauDistance - \max\{|\individual|_0, |\individual|_1\}$ of the majority of bits of \individual to the optimum value $n/2 + \plateauDistance$ of \plateau[\plateauDistance].
Since it becomes less likely to decrease this distance the smaller it is, we choose a potential function that scales the space exponentially, counteracting this decline in probability.

\begin{proof}[Proof of \Cref{thm:eas_on_plateaus}]
    For all $\timePoint \in \N$, let $\randomProcess[\timePoint] = \max\{|\individual[\timePoint]|_0, |\individual[\timePoint]|_1\}$, and let $(\filtration[\timePoint])_{\timePoint \in \N}$ denote the natural filtration of~\randomProcess.
    In addition, let $\potentialFunction\colon [n/2 .. n] \to \R_{\geq 0}$ with
    \[
        m \mapsto
        \begin{cases}
            \basisRLS^\plateauDistance - \basisRLS^{m - n/2} & : m < n/2 + \plateauDistance\,,\\
            0                                                & : \textrm{otherwise.}
        \end{cases}
    \]
    Last, let $\stoppingTime = \inf\{\timePoint \in \N \mid \potentialFunction(\randomProcess[\timePoint]) = 0\}$, and note that~\stoppingTime is a stopping time with respect to~\filtration and that it denotes the first point in time such that \rls found an optimum of \plateau[\plateauDistance], as $\randomProcess[\stoppingTime] \geq n/2 + \plateauDistance$.
    We show that~$\potentialFunction(\randomProcess)$ satisfies \cref{eq:additive-drift:drift_condition} with~$\filtration$ and~\stoppingTime.
    \Cref{thm:additive_drift} then yields the desired bound.

    Let $\timePoint \in \N$, and assume that $\timePoint < \stoppingTime$, as \cref{eq:additive-drift:drift_condition} is trivially satisfied otherwise.
    Note that~\randomProcess[\timePoint] increases by~$1$ if a bit that is not the strict majority in~\individual[\timePoint] is chosen to be flipped, which happens with probability $(n - \randomProcess[\timePoint])/n$ if $\randomProcess[\timePoint] > n/2$, and with probability~$1$ if $\randomProcess[\timePoint] = n/2$.
    With the remaining probability, \randomProcess[\timePoint] decreases by~$1$.

    For $\randomProcess[\timePoint] = n/2$, we get
    \begin{align}
        \label[ineq]{eq:eas_on_plateaus:rls:case1:drift_bound}
        \potentialFunction(\randomProcess[\timePoint]) - \E{\potentialFunction(\randomProcess[\timePoint + 1])}[\filtration[\timePoint]]
            &= \basisRLS^\plateauDistance - 1 - \left(\basisRLS^\plateauDistance - \basisRLS\right)
            = \basisRLS - 1  .
    \end{align}
    For the remaining case $\randomProcess[\timePoint] \in [n/2 + 1 .. n/2 + \plateauDistance - 1]$, we get
    \begin{align}
        \notag
        &\potentialFunction(\randomProcess[\timePoint]) - \E{\potentialFunction(\randomProcess[\timePoint + 1])}[\filtration[\timePoint]]\\
        \notag
        &\quad= \basisRLS^\plateauDistance - \basisRLS^{\randomProcess[\timePoint] - n/2} - \left(\basisRLS^\plateauDistance - \basisRLS^{\randomProcess[\timePoint] - 1 - n/2}\right) \frac{\randomProcess[\timePoint]}{n}\\
            \notag
            &\qquad- \left(\basisRLS^\plateauDistance - \basisRLS^{\randomProcess[\timePoint] + 1 - n/2}\right) \frac{n - \randomProcess[\timePoint]}{n}\\
        \label{eq:eas_on_plateaus:rls:case2:drift_estimate}
        &\quad= \basisRLS^{\randomProcess[\timePoint] - n/2} \Big(\underbrace{\frac{1}{\basisRLS} \cdot \frac{\randomProcess[\timePoint]}{n} + \basisRLS \cdot \frac{n - \randomProcess[\timePoint]}{n}}_{\eqdef A} - 1\Big) .
    \end{align}
    Let $b = 2(\plateauDistance - 1)$, $\basisRLSNumerator = 3\plateauDistance(n - b)(n + b)/\big(3\plateauDistance(n - b) - 2n\big)$, and note that $\basisRLS = \basisRLSNumerator/(n - b)$.
    Since~\basisRLS is greater than~$1$, it follows that~$A$ is decreasing in~$\randomProcess[\timePoint]$ and thus minimal for $\randomProcess[\timePoint] = n/2 + r - 1 = (n + b)/2$.
    Substituting $\basisRLS = \basisRLSNumerator/(n - b)$, this yields
    \begin{align}
        \label[ineq]{eq:eas_on_plateaus:rls:case2:drift_constant_estimate}
        A &\geq \frac{n - b}{\basisRLSNumerator} \cdot \frac{n + b}{2n} + \frac{\basisRLSNumerator}{n - b} \cdot \frac{n - b}{2n}\\
        \notag
        &= \frac{(n - b)(n + b) + \basisRLSNumerator^2}{2 \basisRLSNumerator n}
        = \underbrace{\frac{n^2 - b^2}{2 \basisRLSNumerator n}}_{\eqdef B} + \frac{\basisRLSNumerator}{2n} .
    \end{align}
    Noting that $\basisRLSNumerator = 3\plateauDistance(n^2 - b^2)/\big(3\plateauDistance(n - b) - 2n\big)$, we get
    \begin{align*}
        B &= \frac{3\plateauDistance(n - b) - 2n}{2n \cdot 3\plateauDistance}
        = \frac{n - b}{2n} - \frac{1}{3\plateauDistance} .
    \end{align*}
    Substituting this back into \cref{eq:eas_on_plateaus:rls:case2:drift_constant_estimate} yields
    \begin{align}
        \label[ineq]{eq:eas_on_plateaus:rls:case2:drift_constant_estimate_2}
        A \geq \underbrace{\frac{\basisRLSNumerator + n - b}{2n}}_{\eqdef C} - \frac{1}{3\plateauDistance} .
    \end{align}
    Again, by substituting $\basisRLSNumerator = 3\plateauDistance(n - b)(n + b)/(3\plateauDistance(n - b) - 2n)$, we get
    \begin{align*}
        C &= \frac{1}{2n}\left(\frac{3\plateauDistance(n - b)(n + b)}{3\plateauDistance(n - b) - 2n} + \frac{(n - b) \big(3\plateauDistance(n - b) - 2n\big)}{3\plateauDistance(n - b) - 2n}\right)\\
        &= \frac{n - b}{2n} \cdot \frac{3\plateauDistance(n + b + n - b) - 2n}{3\plateauDistance(n - b) - 2n}
        = \frac{3\plateauDistance(n - b) - (n - b)}{3\plateauDistance(n - b) - 2n}\\
        &= 1 + \frac{n + b}{3\plateauDistance(n - b) - 2n}
        = 1 + \frac{1}{3\plateauDistance} \cdot \frac{3\plateauDistance(n + b)}{3\plateauDistance(n - b) - 2n}\\
        &= 1 + \frac{\basisRLSNumerator}{n - b} \cdot \frac{1}{3\plateauDistance} .
    \end{align*}
    Substituting this back into \cref{eq:eas_on_plateaus:rls:case2:drift_constant_estimate_2} and recalling that $\basisRLS = \basisRLSNumerator/(n - b)$ yields $A \geq 1 + (\basisRLS - 1)/(3\plateauDistance)$.
    Substituting this bound into \cref{eq:eas_on_plateaus:rls:case2:drift_estimate} ultimately yields
    \begin{align}
        \label[ineq]{eq:eas_on_plateaus:rls:case2:drift_bound}
        \potentialFunction(\randomProcess[\timePoint]) - \E{\potentialFunction(\randomProcess[\timePoint + 1])}[\filtration[\timePoint]]
            \geq \basisRLS^{\randomProcess[\timePoint] - n/2} \cdot (\basisRLS - 1) \cdot \frac{1}{3\plateauDistance} .
    \end{align}

    Applying \Cref{thm:additive_drift} with $\drift = (\basisRLS - 1)/(3\plateauDistance)$ being the minimum of \cref{eq:eas_on_plateaus:rls:case1:drift_bound,eq:eas_on_plateaus:rls:case2:drift_bound} and noting that $\E{g(\randomProcess[\stoppingTime])}[\filtration[0]] = 0$ concludes the proof.
\end{proof}

\subsection{Run Time Results on \texorpdfstring{\asymmetricPlateau}{Majority}}
\label{sec:run_time_on_asymmetric_plateau}

\begin{figure}
    \centering
    \includegraphics[width = \columnwidth]{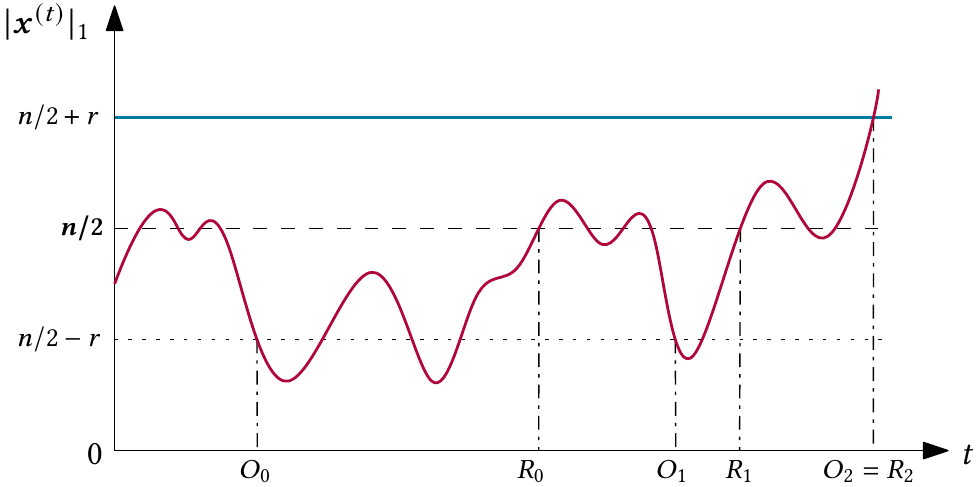}
    \caption{
        \label{fig:restart_argument}
        A visualization of the definitions of the random variables $(\returnTimePointsEnd[i])_{i \in \N}$ and $(\returnTimePointsStart[i])_{i \in \N}$ as defined in \cref{eq:definition_of_the_return_times}, needed for \Cref{thm:restart_argument}.
        The red wavy line depicts an example of how the number of~$1$s in the best solution of an EA (given by the sequence $(\individual[\timePoint])_{\timePoint \in \N}$) changes over time (\timePoint).
        For the sake of readability, we depict this line as continuous although it is discrete and can have jumps.
        The stopping times $(\returnTimePointsEnd[i])_{i \in \N}$ and $(\returnTimePointsStart[i])_{i \in \N}$ always alternate, by definition.
    }
\end{figure}

We show how to derive a run time bound of an EA~\myAlgorithm for \asymmetricPlateau based on a run time bound for \plateau (\Cref{thm:restart_argument}).
The argument is based on the observation that whenever~\myAlgorithm finds an optimum of \plateau, this solution can also be an optimum of \asymmetricPlateau.
If this is \emph{not} the case, that is, if the current solution has a majority of~$0$s, we wait until~\myAlgorithm gets back to a solution of at least $n/2$ $1$s.
Then, we wait again until it finds an optimum of \plateau, repeating this argument until~\myAlgorithm finds an optimum of \asymmetricPlateau.

More formally, we define the following two interleaving sequences of stopping times, which we also visualize in \Cref{fig:restart_argument}:
\begin{align*}
    \numberthis\label{eq:definition_of_the_return_times}
    \returnTimePointsEnd[0] &= \inf\{\timePoint \in \N \mid \plateau[\plateauDistance](\individual[\timePoint]) = 1\}\ ,\\
    \intertext{ as well as, for all $i \in \N$,}
    \returnTimePointsStart[i] &= \inf\{\timePoint \in \N \mid \timePoint \geq \returnTimePointsEnd[i] \land |\individual[\timePoint]|_1 \geq n/2\}\ \textrm{, and}\\
    \returnTimePointsEnd[i + 1] &= \inf\{\timePoint \in \N \mid \timePoint > \returnTimePointsStart[i] \land \plateau[\plateauDistance](\individual[\timePoint]) = 1\} .
\end{align*}
The sequence~\returnTimePointsEnd denotes the points in time (interleaved by~\returnTimePointsStart) where~\myAlgorithm can have found an optimum of \asymmetricPlateau, as it found an optimum of \plateau.
In the context of proving a run time bound for \asymmetricPlateau, \returnTimePointsEnd denotes the points in time from where we wait until~\myAlgorithm returns to an individual with at least as many~$1$s as~$0$s in order to repeat our argument.
The sequence~\returnTimePointsStart denotes those points in time where we repeat our argument.
\Cref{thm:walds_equation} formalizes this argument and yields a bound on the overall expected run time on \asymmetricPlateau.

Since the idea of this argument holds for any interleaved assortment of stopping times, we first prove a more general theorem (\Cref{thm:general_restart_argument}) before we state the result specific to our setting (\Cref{thm:restart_argument}).
This theorem generalizes the idea of the stopping times from \cref{eq:definition_of_the_return_times}, but it does not specify their interpretation.

\begin{theorem}[Decomposition argument]
    \label{thm:general_restart_argument}
    Let \sequence{\filtration}[i][\N] be a filtration, let~\stoppingTimeOther be a stopping time with respect to~\filtration, and let $(\returnTimePointsEnd[i])_{i \in \N}$ and $(\returnTimePointsStart[i])_{i \in \N}$ be stopping times with respect to~\filtration such that for all $i \in \N$, it holds that $\returnTimePointsEnd[i] \leq \returnTimePointsStart[i] < \returnTimePointsEnd[i + 1]$.
    Moreover, let $\stoppingTimeThird = \sup \{i \in \N \mid \returnTimePointsEnd[i] \leq \stoppingTimeOther\}$, and assume that there is a $\probabilityOfImmediateSuccess \in (0, 1]$ such that $\Pr{\returnTimePointsEnd[0] = \stoppingTimeOther} = \probabilityOfImmediateSuccess$.

    Assume that $\sum_{i \in [\stoppingTimeThird]} (\returnTimePointsEnd[i] - \returnTimePointsEnd[i - 1])$ and~\stoppingTimeThird are integrable as well as that there is a $c \in \R$ such that for all $i \in \N_{> 0}$, it holds that $\E{|\returnTimePointsEnd[i] - \E{\returnTimePointsEnd[i]}[\filtration[\returnTimePointsStart[i - 1]]]|}[\filtration[\returnTimePointsStart[i - 1]]] \leq c$ and $\E{|\returnTimePointsStart[i - 1] - \E{\returnTimePointsStart[i - 1]}[\filtration[\returnTimePointsEnd[i - 1]]]|}[\filtration[\returnTimePointsEnd[i - 1]]] \leq c$.

    Last, let~\eventForRestart denote the event that $U \geq 1$.
    Then
    \begin{align*}
        \E{\stoppingTimeOther} &= \E{\returnTimePointsEnd[0]} + (1 - \probabilityOfImmediateSuccess) \mathrm{E}\Big[\sum\nolimits_{i \in [\stoppingTimeThird]} \big(\E{\returnTimePointsEnd[i] - \returnTimePointsStart[i - 1]}[\filtration[\returnTimePointsStart[i - 1]]]\\
        &\hspace*{3.1 cm}+ \E{\returnTimePointsStart[i - 1] - \returnTimePointsEnd[i - 1]}[\filtration[\returnTimePointsEnd[i - 1]]]\big)\,\Big\vert\,\eventForRestart\Big]\,,
    \end{align*}
    where we interpret the second summand to be~$0$ if $1 - p_0 = 0$.
\end{theorem}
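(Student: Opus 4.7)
The plan is to express $\stoppingTimeOther$ as a telescoping sum over the two interleaved sequences of stopping times and then apply the generalized Wald equation (Theorem~\ref{thm:walds_equation}) with a suitably constructed interleaved filtration in order to substitute each summand by its conditional expectation.

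First I would use the structural fact, implicit in the setup and satisfied in the intended application, that $\stoppingTimeOther = \returnTimePointsEnd[\stoppingTimeThird]$ almost surely. Together with the ordering $\returnTimePointsEnd[i-1] \leq \returnTimePointsStart[i-1] < \returnTimePointsEnd[i]$, this yields the non-negative telescoping decomposition
\[
\stoppingTimeOther = \returnTimePointsEnd[0] + \sum_{i \in [\stoppingTimeThird]} \Big((\returnTimePointsStart[i-1] - \returnTimePointsEnd[i-1]) + (\returnTimePointsEnd[i] - \returnTimePointsStart[i-1])\Big).
\]
Taking expectations and noting that the sum vanishes on $\{\stoppingTimeThird = 0\} = \{\returnTimePointsEnd[0] = \stoppingTimeOther\}$, an event of probability $\probabilityOfImmediateSuccess$, its unconditional expectation equals $(1 - \probabilityOfImmediateSuccess)$ times its expectation conditional on $\eventForRestart$. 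It thus remains to interchange expectation and sum and to condition each summand on the appropriate $\sigma$-algebra.

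To perform this interchange, I would introduce the interleaved filtration $\filtrationOther[2k] \defeq \filtration[\returnTimePointsEnd[k]]$ and $\filtrationOther[2k+1] \defeq \filtration[\returnTimePointsStart[k]]$ for $k \in \N$ (which is a valid filtration since $\returnTimePointsEnd[0] \leq \returnTimePointsStart[0] < \returnTimePointsEnd[1] \leq \returnTimePointsStart[1] < \ldots$), together with the interleaved non-negative process $\transformedProcess[2i-1] \defeq \returnTimePointsStart[i-1] - \returnTimePointsEnd[i-1]$ and $\transformedProcess[2i] \defeq \returnTimePointsEnd[i] - \returnTimePointsStart[i-1]$, which is adapted to $\filtrationOther$ and satisfies $\sum_{j \in [2\stoppingTimeThird]} \transformedProcess[j] = \returnTimePointsEnd[\stoppingTimeThird] - \returnTimePointsEnd[0]$. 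I would then verify that $2\stoppingTimeThird$ is an integrable stopping time with respect to $\filtrationOther$, using $\{\stoppingTimeThird \leq k\} = \{\stoppingTimeOther \leq \returnTimePointsEnd[k]\} \in \filtration[\returnTimePointsEnd[k]]$; that $\sum_{j \in [2\stoppingTimeThird]} \transformedProcess[j]$ is integrable by the given hypothesis on $\sum_{i \in [\stoppingTimeThird]}(\returnTimePointsEnd[i] - \returnTimePointsEnd[i-1])$; and that the two separate bounded-difference hypotheses combine into the single uniform $c$-bound required by Theorem~\ref{thm:walds_equation}, exploiting that $\returnTimePointsStart[i-1]$ is $\filtrationOther[2i-1]$-measurable and $\returnTimePointsEnd[i-1]$ is $\filtrationOther[2i-2]$-measurable, so the centered differences reduce to those of $\returnTimePointsEnd[i]$ and $\returnTimePointsStart[i-1]$ alone. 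Applying Theorem~\ref{thm:walds_equation} then replaces each $\transformedProcess[j]$ by $\E{\transformedProcess[j]}[\filtrationOther[j-1]]$, which equals $\E{\returnTimePointsEnd[i] - \returnTimePointsStart[i-1]}[\filtration[\returnTimePointsStart[i-1]]]$ for $j = 2i$ and $\E{\returnTimePointsStart[i-1] - \returnTimePointsEnd[i-1]}[\filtration[\returnTimePointsEnd[i-1]]]$ for $j = 2i-1$; regrouping in pairs indexed by $i$ gives exactly the desired equation.

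The main obstacle will be the careful handling of the interleaved filtration: verifying that $2\stoppingTimeThird$ is a stopping time with respect to $\filtrationOther$ relies crucially on the structural identification $\stoppingTimeOther = \returnTimePointsEnd[\stoppingTimeThird]$ and on the measurability of $\{\stoppingTimeOther \leq \returnTimePointsEnd[k]\}$ in $\filtration[\returnTimePointsEnd[k]]$, and one must confirm that the two separate conditional-difference bounds in the hypotheses combine cleanly into the uniform bound required by Theorem~\ref{thm:walds_equation} without tacitly assuming stronger measurability than the setup provides.
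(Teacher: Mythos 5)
Your proposal is correct and takes essentially the same route as the paper's proof: telescope $S$ over the interleaved stopping times via the structural identity $S = O_U$, factor out the $(1-p_0)$ term by conditioning on $L$, and apply the generalized Wald equation (\Cref{thm:walds_equation}) to replace each increment by its conditional expectation, with the bounded-difference hypotheses reducing exactly as you describe because $R_{i-1}$ and $O_{i-1}$ are measurable with respect to the corresponding stopped $\sigma$-algebras. The only (harmless) difference is organizational: you invoke Wald once for a single interleaved process of length $2U$ with the interleaved filtration, whereas the paper invokes it twice---once for the increments $O_i - R_{i-1}$ and once for $R_i - O_i$, each with a filtration capped at $S$ (of the form $(\mathcal{F}_{\min\{R_i, S\}})_{i}$ resp.\ $(\mathcal{F}_{\min\{O_i, S\}})_{i}$)---and both variants are sound.
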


\begin{proof}
    We aim to express~\stoppingTimeOther as a sum of time intervals that reflect the repeats as outlined at the beginning of \Cref{sec:run_time_on_asymmetric_plateau}.
    To this end, for all $i \in \N_{> 0}$, let $\difference[i] = \returnTimePointsEnd[i] - \returnTimePointsEnd[i - 1]$.
    Note that this implies that $\stoppingTimeOther = \stoppingTime + \sum_{i \in [\stoppingTimeThird]} \difference[i]$, since $\stoppingTime = \returnTimePointsEnd[0]$, $\stoppingTimeOther = \returnTimePointsEnd[\stoppingTimeThird]$, and the sum is telescoping.

    By the linearity of expectation, it follows that $\E{\stoppingTimeOther} = \E{\stoppingTime} + \E{\sum_{i \in [\stoppingTimeThird]} \difference[i]}$.
    Note that $\E{\sum_{i \in [\stoppingTimeThird]} \difference[i]}[\overline{\eventForRestart}][\big] = 0$ as well as $\Pr{\eventForRestart} = 1 - \probabilityOfImmediateSuccess$.
    Thus, $\E{\sum_{i \in [\stoppingTimeThird]} \difference[i]} = (1 - \probabilityOfImmediateSuccess) \E{\sum_{i \in [\stoppingTimeThird]} \difference[i]}[\eventForRestart]$, which is~$0$ if $\Pr{\eventForRestart} = 0$.

    Let $i \in \N_{> 0}$, and note that, $\difference[i] = \returnTimePointsEnd[i] - \returnTimePointsEnd[i - 1] = (\returnTimePointsEnd[i] - \returnTimePointsStart[i - 1]) + (\returnTimePointsStart[i - 1] - \returnTimePointsEnd[i - 1])$.
    Thus, by the linearity of expectation,
    \begin{align*}
        \E{\sum\nolimits_{i \in [\stoppingTimeThird]} \difference[i]}[\eventForRestart]
        &= \E{\sum\nolimits_{i \in [\stoppingTimeThird]} (\returnTimePointsEnd[i] - \returnTimePointsStart[i - 1])}[\eventForRestart]\\
        &\quad+ \E{\sum\nolimits_{i \in [\stoppingTimeThird]} (\returnTimePointsStart[i - 1] - \returnTimePointsEnd[i - 1])}[\eventForRestart] .
    \end{align*}
    We aim to apply \Cref{thm:walds_equation} to both expected values on the right-hand side with~\stoppingTimeThird and a suitable filtration.
    To this end, let
    \[
        (\returnTimePoints{$i$})_{i \in \N_{> 0}} = (\returnTimePointsEnd[i + 1] - \returnTimePointsStart[i])_{i \in \N} \textrm{\,and\,}
        (\runTimePoints{$i$})_{i \in \N_{> 0}} = (\returnTimePointsStart[i] - \returnTimePointsEnd[i])_{i \in \N} ,
    \]
    and note that the indices of the new sequences start at~$1$ whereas they start at~$0$ for the old ones.
    Note that~\stoppingTimeThird is integrable by assumption.

    \textbf{Considering~\returnTimePoints{}:}
    Let $(\filtrationOther[i])_{i \in \N} = (\filtration[\inf\{\returnTimePointsStart[i], \stoppingTimeOther\}])_{i \in \N}$.
    Note that $(\returnTimePoints{$i$} \cdot \indicator{i \leq \stoppingTimeThird})_{i \in \N_{> 0}}$ is adapted to~\filtrationOther, as for all $i \in \N_{> 0}$, it holds that $\returnTimePointsStart[i - 1] \cdot \indicator{i \leq \stoppingTimeThird} < \returnTimePointsEnd[i] \cdot \indicator{i \leq \stoppingTimeThird} \leq \inf\{\returnTimePointsStart[i], \stoppingTimeOther\}$ and thus that $\returnTimePoints{$i$} \cdot \indicator{i \leq \stoppingTimeThird}$ is \filtrationOther[i]-measurable.

    Since $\sum_{i \in [\stoppingTimeThird]} (\returnTimePointsEnd[i] - \returnTimePointsEnd[i - 1]) = \sum_{i \in [\stoppingTimeThird]} \returnTimePoints{$i$} + \sum_{i \in [\stoppingTimeThird]} \runTimePoints{$i$}$ is integrable by assumption, and since~\returnTimePoints{} and~\runTimePoints{} are non-negative by the definition of~\returnTimePointsEnd and~\returnTimePointsStart, it follows that $\sum_{i \in [\stoppingTimeThird]} \returnTimePoints{$i$}$ is integrable.
    Further, since $(1)$ \returnTimePointsStart[i - 1] is \filtration[\returnTimePointsStart[i - 1]]-measurable, and $(2)$ by the assumption that there is a $c \in \R$ such that for all $i \in \N_{> 0}$, we have $\E{|\returnTimePointsEnd[i] - \E{\returnTimePointsEnd[i]}[\filtration[\returnTimePointsStart[i - 1]]]|}[\filtration[\returnTimePointsStart[i - 1]]] \leq c$,
    \begin{align*}
        &\E{|\returnTimePoints{$i$} - \E{\returnTimePoints{$i$}}[\filtration[\returnTimePointsStart[i - 1]]]|}[\filtration[\returnTimePointsStart[i - 1]]]\\
        &= \E{|\returnTimePointsEnd[i] - \returnTimePointsStart[i - 1] - \E{\returnTimePointsEnd[i] - \returnTimePointsStart[i - 1]}[\filtration[\returnTimePointsStart[i - 1]]]|}[\filtration[\returnTimePointsStart[i - 1]]]\\
        &\overset{(1)}{=} \E{|\returnTimePointsEnd[i] - \E{\returnTimePointsEnd[i]}[\filtration[\returnTimePointsStart[i - 1]]]|}[\filtration[\returnTimePointsStart[i - 1]]] \overset{(2)}{\leq} c .
    \end{align*}
    Since, for all $i \in \N_{> 0}$ such that $i \leq \stoppingTimeThird$, it holds that $\returnTimePointsStart[i - 1] < \stoppingTimeOther$, it follows for all $i \in \N_{> 0}$ that
    \begin{align*}
        &\E{|\returnTimePoints{$i$} - \E{\returnTimePoints{$i$}}[\filtrationOther[i - 1]]|}[\filtrationOther[i - 1]] \cdot \indicator{i - 1 < \stoppingTimeThird} \leq c .
    \end{align*}
    Applying \Cref{thm:walds_equation} and taking the expectations on both sides yields $\E{\sum\nolimits_{i \in [\stoppingTimeThird]} \returnTimePoints{$i$}}[\eventForRestart][\Big] = \E{\sum\nolimits_{i \in [\stoppingTimeThird]} \E{\returnTimePoints{$i$}}[\filtrationOther[i - 1]]}[\eventForRestart][\Big]$.

    \textbf{Considering~\runTimePoints{}:}
    This case is very similar to the previous one.
    Let $(\filtrationOther[i])_{i \in \N} = (\filtration[\inf\{\returnTimePointsEnd[i], \stoppingTimeOther\}])_{i \in \N}$.
    Note that $(\runTimePoints{$i$} \cdot \indicator{i \leq \stoppingTimeThird})_{i \in \N_{> 0}}$ is adapted to~\filtrationOther, as for all $i \in \N_{> 0}$, it holds that $\returnTimePointsEnd[i - 1] \cdot \indicator{i \leq \stoppingTimeThird} \leq \returnTimePointsStart[i - 1] \cdot \indicator{i \leq \stoppingTimeThird} < \inf\{\returnTimePointsEnd[i], \stoppingTimeOther\}$.
    Further, as in the previous case, since $\sum_{i \in [\stoppingTimeThird]} (\returnTimePointsEnd[i] - \returnTimePointsEnd[i - 1])$ is integrable by assumption, so is $\sum_{i \in [\stoppingTimeThird]} \runTimePoints{$i$}$.
    Last, by the assumption that there is a $c \in \R$ such that for all $i \in \N_{> 0}$, it holds that $\E{|\returnTimePointsStart[i - 1] - \E{\returnTimePointsStart[i - 1]}[\filtration[\returnTimePointsEnd[i - 1]]]|}[\filtration[\returnTimePointsEnd[i - 1]]] \leq c$, and since for all $i \in \N_{> 0}$ such that $i \leq \stoppingTimeThird$ it holds that $\returnTimePointsEnd[i - 1] < \stoppingTimeOther$, it follows for all $i \in \N_{> 0}$, by adding $\returnTimePointsEnd[i - 1] - \returnTimePointsEnd[i - 1]$ inside the expectation, that $\E{|\runTimePoints{$i$} - \E{\runTimePoints{$i$}}[\filtrationOther[i - 1]]|}[\filtrationOther[i - 1]] \cdot \indicator{i - 1 < \stoppingTimeThird} \leq c$.
    Applying \Cref{thm:walds_equation} and taking the expectations on both sides yields $\E{\sum_{i \in [\stoppingTimeThird]} \runTimePoints{$i$}}[\eventForRestart] = \E{\sum\nolimits_{i \in [\stoppingTimeThird]} \E{\runTimePoints{$i$}}[\filtrationOther[i - 1]]}[\eventForRestart]$.

    \textbf{Concluding:}
    Taking everything together and noting that for all $i \in \N_{> 0}$ such that $i \leq \stoppingTimeThird$ it holds that $\returnTimePointsStart[i - 1] < \stoppingTimeOther$ and that $\returnTimePointsEnd[i - 1] < \stoppingTimeOther$ concludes the proof.
\end{proof}

With respect to our setting, utilizing the definitions from \cref{eq:definition_of_the_return_times}, we get the following result.

\begin{corollary}[Translating results from \plateau to \asymmetricPlateau]
    \label{thm:restart_argument}
    Let $n \in \N_{> 0}$ be even, let $\plateauDistance \in [n/2]$, and let~\initialization be a distribution over $\{0, 1\}^n$.
    Consider an EA~\myAlgorithm, respresented by the sequence $(\individual[\timePoint])_{\timePoint \in \N}$ with initialization distribution~\initialization, adapted to a filtration \sequence{\filtration}[i][\N].

    Assume that when~\myAlgorithm optimizes \plateau[\plateauDistance], there is a probability $\probabilityOfImmediateSuccess \in (0, 1]$ that this solution is also an optimum of \asymmetricPlateau.
    Furthermore, assume the definitions of \cref{eq:definition_of_the_return_times}.

    Let~\stoppingTimeOther denote the run time of~\myAlgorithm on \asymmetricPlateau[\plateauDistance], and let $\stoppingTimeThird = \sup \{i \in \N \mid \returnTimePointsEnd[i] \leq \stoppingTimeOther\}$.
    Assume that $\sum_{i \in [\stoppingTimeThird]} (\returnTimePointsEnd[i] - \returnTimePointsEnd[i - 1])$ and~\stoppingTimeThird are integrable as well as that there is a $c \in \R$ such that for all $i \in \N_{> 0}$, it holds that $\E{|\returnTimePointsEnd[i] - \E{\returnTimePointsEnd[i]}[\filtration[\returnTimePointsStart[i - 1]]]|}[\filtration[\returnTimePointsStart[i - 1]]] \leq c$ and $\E{|\returnTimePointsStart[i - 1] - \E{\returnTimePointsStart[i - 1]}[\filtration[\returnTimePointsEnd[i - 1]]]|}[\filtration[\returnTimePointsEnd[i - 1]]] \leq c$.

    Last, let~\eventForRestart denote the event that $U \geq 1$, and let~\stoppingTime denote the run time of~\myAlgorithm on \plateau[\plateauDistance].
    Then
    \begin{align*}
        \E{\stoppingTimeOther} &= \E{\stoppingTime} + (1 - \probabilityOfImmediateSuccess) \mathrm{E}\Big[\sum\nolimits_{i \in [\stoppingTimeThird]} \big(\E{\returnTimePointsEnd[i] - \returnTimePointsStart[i - 1]}[\filtration[\returnTimePointsStart[i - 1]]]\\
        &\hspace*{3.1 cm}+ \E{\returnTimePointsStart[i - 1] - \returnTimePointsEnd[i - 1]}[\filtration[\returnTimePointsEnd[i - 1]]]\big)\,\Big\vert\,\eventForRestart\Big]\,,
    \end{align*}
    where we interpret the second summand to be~$0$ if $1 - p_0 = 0$.
\end{corollary}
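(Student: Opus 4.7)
The plan is to derive the corollary as a direct specialization of \Cref{thm:general_restart_argument} applied to the concrete stopping times defined in \cref{eq:definition_of_the_return_times}. The two statements differ only in that the corollary interprets $\returnTimePointsEnd[0]$ concretely as the run time $\stoppingTime$ of $\myAlgorithm$ on $\plateau[\plateauDistance]$, and rephrases the probability $\probabilityOfImmediateSuccess$ in terms of a concrete event about $\myAlgorithm$. Consequently, the proof reduces to checking that every hypothesis of \Cref{thm:general_restart_argument} is satisfied in this setting and then reading off the conclusion.

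First, I would verify the structural hypothesis $\returnTimePointsEnd[i] \leq \returnTimePointsStart[i] < \returnTimePointsEnd[i + 1]$ for every $i \in \N$. This is immediate from \cref{eq:definition_of_the_return_times}: $\returnTimePointsStart[i]$ is an infimum over $\timePoint \geq \returnTimePointsEnd[i]$, and $\returnTimePointsEnd[i + 1]$ is an infimum over $\timePoint > \returnTimePointsStart[i]$. The stopping-time property with respect to $\filtration$ is inherited from the adaptedness of $(\individual[\timePoint])_{\timePoint \in \N}$, since each defining event depends only on the history of this process.

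Next, I would identify $\returnTimePointsEnd[0]$ with $\stoppingTime$: by \cref{eq:definition_of_the_return_times}, $\returnTimePointsEnd[0] = \inf\{\timePoint \in \N \mid \plateau[\plateauDistance](\individual[\timePoint]) = 1\}$, which is precisely the run time of $\myAlgorithm$ on $\plateau[\plateauDistance]$. Moreover, the event $\{\returnTimePointsEnd[0] = \stoppingTimeOther\}$ is precisely the event that the first $\plateau[\plateauDistance]$-optimum visited by $\myAlgorithm$ is simultaneously an $\asymmetricPlateau[\plateauDistance]$-optimum; by the corollary's assumption this event has probability $\probabilityOfImmediateSuccess$, matching the requirement of \Cref{thm:general_restart_argument}.

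Finally, the remaining hypotheses of \Cref{thm:general_restart_argument}, namely the integrability of $\sum_{i \in [\stoppingTimeThird]} (\returnTimePointsEnd[i] - \returnTimePointsEnd[i - 1])$ and of $\stoppingTimeThird$, together with the bounded-conditional-difference conditions on $\returnTimePointsEnd[i]$ given $\filtration[\returnTimePointsStart[i - 1]]$ and on $\returnTimePointsStart[i - 1]$ given $\filtration[\returnTimePointsEnd[i - 1]]$, are imposed verbatim in the hypotheses of the corollary. Applying \Cref{thm:general_restart_argument} and substituting $\E{\returnTimePointsEnd[0]} = \E{\stoppingTime}$ in its conclusion yields the stated formula. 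I expect no substantive obstacle, since the argument is pure bookkeeping; the only subtlety is translating the informal statement "the first $\plateau[\plateauDistance]$-optimum is also an $\asymmetricPlateau[\plateauDistance]$-optimum with probability $\probabilityOfImmediateSuccess$" into the measurable event $\{\returnTimePointsEnd[0] = \stoppingTimeOther\}$ that \Cref{thm:general_restart_argument} requires.
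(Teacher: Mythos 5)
Your proposal is correct and matches the paper's treatment exactly: the paper states this corollary as an immediate specialization of \Cref{thm:general_restart_argument} using the stopping times from \cref{eq:definition_of_the_return_times}, identifying $\returnTimePointsEnd[0]$ with the run time~\stoppingTime on \plateau[\plateauDistance] and the event $\{\returnTimePointsEnd[0] = \stoppingTimeOther\}$ with the first \plateau-optimum also being a \asymmetricPlateau-optimum, while the remaining hypotheses are carried over verbatim. Your verification of the interleaving and stopping-time properties is the same bookkeeping the paper leaves implicit.
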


\Cref{thm:restart_argument} states an equality for the expected run time of an EA on \asymmetricPlateau.
Although calculating the exact expectations of the right-hand side generally seems implausible, the theorem can still be used in order to derive upper and lower bounds for expected run times on \asymmetricPlateau by deriving corresponding bounds for all of the expected values in question.
Using the notation of the theorem, an easy way for getting a bound on the expectation conditional on~\eventForRestart is to derive worst-case bounds for both of the conditional expectations in the sum.
If these worst-case bounds are constants, then they can be pulled out of the expectation, and it remains to bound $\E{\stoppingTimeThird}[\eventForRestart]$, which requires to only reason about the number of times of attempting to find an optimum of \asymmetricPlateau when optimizing \plateau.
It is worth noting that the second of the conditional expectations in the sum considers the run time of~\myAlgorithm optimizing \plateau when starting with a solution with a majority of~$1$s.
Since this is typically very similar to how~\stoppingTime is defined, as only the initialization (and the history of~\myAlgorithm) differs between both scenarios, this expectation may already follow from a bound on \E{\stoppingTime}.
Overall, the complexity of the run time analysis for \asymmetricPlateau is drastically reduced.

In the following, we apply the technique above in order to derive run time bounds for \rls on \asymmetricPlateau.
Note that we already have an upper bound on the expected run time on \plateau for arbitrary initializations, due to \Cref{thm:eas_on_plateaus}.
Thus, we are only concerned with how often the algorithms find an optimum~\individual of \plateau with a majority of~$0$s before optimizing \asymmetricPlateau, and with how long it takes to get from~\individual to a solution with a majority of~$1$s.
While the latter question is specific to each algorithm, we answer the former for all unbiased algorithms.

\subsubsection{Number of Retries}
Using the notation of \Cref{thm:restart_argument}, we bound \E{\stoppingTimeThird}[D] from above for $(1 + 1)$-unary unbiased elitist algorithms with memory-restriction, as detailed in \Cref{alg:1+1_unbiased}.
Further, we require the sequence of mutation operator $(\mutation[\timePoint])_{\timePoint \in \N}$ of such an algorithm to be \emph{\onemax-compliant,} which means that each mutation operator satisfies that results of individuals with many~$1$s are at least as likely to have as many~$1$s as results of individuals with fewer~$1$s.
Formally, we say that~\mutation is \onemax-compliant if and only if, for all $\timePoint_1, \timePoint_2 \in \N$, all $\individualOther, \individualThird \in \{0, 1\}^n$ with $\individualOther = \individual[\timePoint_1]$, $\individualThird = \individual[\timePoint_2]$, and $|\individualOther|_1 \leq |\individualThird|_1$, and all $i \in [0 .. n]$, it holds that $\Pr{|\mutation[\timePoint_1](\individualOther)|_1 \geq i}[\individualOther] \leq \Pr{|\mutation[\timePoint_2](\individualThird)|_1 \geq i}[\individualThird]$.
Note that while typical mutation operators are \onemax-compliant, operations such as the inversion of a bit string are not.
This definition leads to the following result.

\begin{algorithm}[t]
    \caption{\label{alg:1+1_unbiased}
        The $(1+1)$-unary unbiased algorithm framework with unbiased and memory-restricted mutation operators $(\mutation[\timePoint])_{\timePoint \in \N}$ over~$\{0, 1\}^n$ and initial distribution~$D$, maximizing function $f\colon \{0, 1\}^n \to \R$.
        Note for all $\timePoint \in \N$ that~\mutation[\timePoint] only has access to a single individual, resulting in a heavy memory restriction.
    }
    $\individual[0] \sim D$\;
    \For{$\timePoint \in \N$}
    {
        $\individualOther \gets \mutation[\timePoint](\individual[\timePoint])$\;
        \lIf{$f(\individualOther) \geq f(\individual[\timePoint])$}
        {
            $\individual[\timePoint + 1] \gets \individualOther$%
        }
        \lElse
        {
            $\individual[\timePoint + 1] \gets \individual[\timePoint]$%
        }
    }
\end{algorithm}

\begin{lemma}
    \label{lem:number_of_restarts}
    Assume the setting of \Cref{thm:restart_argument} and that $\runTimeSet \defeq \{\returnTimePointsEnd[i + 1] - \returnTimePointsStart[i] \mid i \in \N \land \returnTimePointsEnd[i + 1] \leq \stoppingTimeOther\}$ is independent.
    If~\myAlgorithm is an instance of \Cref{alg:1+1_unbiased} with \onemax-compliant mutation~\mutation, then, conditional on~\eventForRestart, it holds that~\stoppingTimeThird is stochastically dominated by a geometrically distributed random variable with expectation~$2$.
\end{lemma}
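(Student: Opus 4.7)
The plan is to reduce the lemma to the per-attempt estimate ``an attempt succeeds with probability at least $1/2$ given its starting state'' and then to chain the attempts using the memory-restricted structure of $\myAlgorithm$ together with the independence hypothesis on $\runTimeSet$.

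First, I would analyze the dynamics inside a single attempt (the segment from $\returnTimePointsStart[i-1]$ to $\returnTimePointsEnd[i]$). During this segment the current individual satisfies $n/2 - \plateauDistance < |\individual[\timePoint]|_1 < n/2 + \plateauDistance$, so $\asymmetricPlateau[\plateauDistance]$ vanishes on it, and every proposed offspring is accepted by the elitist rule. The attempt is therefore an unconditioned random walk driven by $(\mutation[\timePoint])_{\timePoint \in \N}$; by the definition of $\returnTimePointsStart$, the walk starts with at least $n/2$ ones, and it \emph{succeeds} iff it first exits the strip through the upper boundary $|\individual|_1 \geq n/2 + \plateauDistance$ rather than through the lower boundary $|\individual|_1 \leq n/2 - \plateauDistance$.

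Second, I would establish the key one-attempt estimate: if the attempt starts at some $\individual$ with $|\individual|_1 \geq n/2$, then it succeeds with probability at least $1/2$. Fix such an $\individual$ and consider an auxiliary walk $\randomProcessOther$ started at the bitwise complement of $\individual$, coupled to the real walk $\randomProcess$ started at $\individual$. \onemax-compliance of $\mutation[\timePoint]$ gives a step-wise stochastic-order relation on the Hamming weights of the two walks, and Strassen's theorem lifts this to a joint Markov coupling with $|\randomProcess[\timePoint]|_1 \geq |\randomProcessOther[\timePoint]|_1$ for every $\timePoint$. Under this coupling, $\randomProcess$ reaches the upper boundary no later than $\randomProcessOther$ does, and the lower boundary no sooner, so ``$\randomProcessOther$ succeeds'' implies ``$\randomProcess$ succeeds.'' Unbiasedness of $\mutation[\timePoint]$ yields $|\randomProcessOther[\timePoint]|_1 \stackrel{d}{=} n - |\randomProcess[\timePoint]|_1$, whence the event that $\randomProcessOther$ succeeds has the same probability as the event that $\randomProcess$ fails. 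Combining these gives the required $1/2$ lower bound.

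Third, I would chain the attempts. Memory-restriction makes each attempt, conditionally on its starting state, independent of the prior history, and the independence hypothesis on $\runTimeSet$ propagates this structure across attempts. Hence, conditional on $\eventForRestart$ (that the initial attempt failed), $\stoppingTimeThird$ is the number of attempts up to and including the first subsequent successful one in a sequence with per-attempt success probability at least $1/2$, so $\stoppingTimeThird$ conditional on $\eventForRestart$ is stochastically dominated by a geometric random variable with success probability~$1/2$, whose expectation is~$2$. The main obstacle is the Markov coupling in the second step: we must couple the two walks step-by-step so that $|\randomProcess[\timePoint]|_1 \geq |\randomProcessOther[\timePoint]|_1$ is preserved under the application of $\mutation[\timePoint]$ to both. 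Unbiasedness of $\mutation[\timePoint]$ reduces the relevant state to the Hamming weight; \onemax-compliance supplies the step-wise stochastic-order relation; and Strassen's theorem lifts the latter to an almost-sure inequality. Verifying that these ingredients are mutually compatible and extend consistently through time is the technical crux.
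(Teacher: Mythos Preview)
Your proposal is correct and follows essentially the same architecture as the paper's proof: both reduce the claim to the per-retry estimate ``success with probability at least $1/2$,'' both obtain that estimate via \onemax-compliance together with a Strassen-type coupling and the unbiasedness of the mutation, and both then chain retries using the independence assumption on~\runTimeSet to conclude geometric domination.

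The one noteworthy difference is in the choice of auxiliary walk. The paper splits into two cases: for $|\individual[\returnTimePointsStart[i]]|_1 = n/2$ it uses pure unbiasedness (every trajectory ending with a $0$-majority has an equiprobable bit-flipped twin ending with a $1$-majority) to get success probability exactly~$1/2$; for $|\individual[\returnTimePointsStart[i]]|_1 > n/2$ it couples the real walk with an auxiliary walk started at Hamming weight exactly~$n/2$ and reduces to the first case. You instead couple the real walk from~\individual directly with a walk from the bitwise complement~$\overline{\individual}$, use unbiasedness to identify $\Pr{\randomProcessOther\text{ succeeds}}$ with $\Pr{\randomProcess\text{ fails}}$, and conclude $\Pr{\randomProcess\text{ succeeds}} \geq 1 - \Pr{\randomProcess\text{ succeeds}}$. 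Your version handles both cases uniformly and is arguably cleaner; the paper's version makes the role of the symmetric starting point~$n/2$ more explicit. Neither approach buys anything the other does not.
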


\begin{proof}
    Note that $|\runTimeSet| = \stoppingTimeThird$, and recall that $\stoppingTimeThird \geq 1$, as we condition on~\eventForRestart.

    Let $i \in \N$ such that $\retry[i] \coloneqq \returnTimePointsEnd[i + 1] - \returnTimePointsStart[i]$ and $\returnTimePointsEnd[i + 1] \leq \stoppingTimeOther$, that is, $\retry[i] \in \runTimeSet$.
    We call such an element a \emph{retry.}
    We say that~\retry[i] is a \emph{success} if $\returnTimePointsEnd[i + 1] = \stoppingTimeOther$, and a \emph{failure} otherwise.
    Note that~\stoppingTimeThird is the number of retries until the first success and that, by assumption, the retries are mutually independent.

    It remains to show that each retry has a probability of at least~$1/2$ of being a success.
    To this end, let $i \in \N$ such that~\retry[i] is a retry.
    First, assume that $|\individual[\returnTimePointsStart[i]]|_1 = n/2$.
    Recall that the retry ends once~\myAlgorithm finds an optimum of \plateau[\plateauDistance].
    Since~\myAlgorithm is unbiased, each trajectory of~\individual that ends with a solution with a majority of~$0$s has a symmetric trajectory (in the sense of swapping~$0$s and~$1$s) with equal probability that ends with a solution with a majority of~$1$s.
    Thus, the probability of~\retry[i] being a success is~$1/2$.

    Now, assume that $|\individual[\returnTimePointsStart[i]]|_1 > n/2$.
    Let $\individualOther[0] \in \{0, 1\}^n$ such that $|\individualOther[0]|_1 = n/2$, and, for all $\timePoint \in \N$, let $\individualOther[\timePoint + 1] = \mutation[\timePoint](\individualOther[\timePoint])$.
    Since~\mutation is \onemax-compliant, it holds for all $\timePoint \in \N$ that $|\individual[\returnTimePointsStart[i] + \timePoint]|_1$ stochastically dominates $|\individualOther[\timePoint]|_1$.
    Thus, there exists a coupling such that for all $\timePoint \in \N$, it holds that $|\individual[\returnTimePointsStart[i] + \timePoint]|_1 \geq |\individualOther[\timePoint]|_1$ \cite[Theorem~$1.8.10$]{Doerr18Probabilistic}.
    Thus, the probability of~\retry[i] being a success is at least as large as the probability of the process~\individualOther finding a solution with at least $n/2 + \plateauDistance$ $1$s before finding a solution with at least that many~$0$s.
    By the first case above, the probability of this event is~$1/2$.
    This concludes the proof.
\end{proof}

We note that \rls uses \onemax-compliant mutation.

\subsubsection{Integrability of the Stopping Time}

An important  property to check in \Cref{thm:restart_argument} is that $\sum_{i \in [\stoppingTimeThird]} (\returnTimePointsEnd[i] - \returnTimePointsEnd[i - 1])$ is integrable.
To this end, it suffices to show that this random variable has \emph{some} finite expectation, as the sum is non-negative.
We do so by applying \Cref{thm:additive_drift}.

\begin{lemma}
    \label{lem:integrability_of_the_stopping_time}
    Assume the setting of \Cref{thm:restart_argument}.
    Let $(\filtration[\timePoint])_{\timePoint \in \N}$ denote the natural filtration of~\individual, and assume that there is a $p \in (0, 1]$ such that, for all $t \in \N$, it holds that
    \begin{align}
        \label{eq:integrability_of_the_stopping_time:probability_bound}
        \Pr{|\individual[\timePoint + 1]|_1 > |\individual[\timePoint]|_1}[\filtration[\timePoint]][\big] \cdot \indicator{\timePoint < \stoppingTimeOther} \geq p\! \cdot\! \indicator{\timePoint < \stoppingTimeOther} .
    \end{align}
    Then $\sum_{i \in [\stoppingTimeThird]} (\returnTimePointsEnd[i] - \returnTimePointsEnd[i - 1])$ is integrable.
\end{lemma}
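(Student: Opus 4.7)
The plan is to reduce integrability of the sum to integrability of $\stoppingTimeOther$, then bound $\E{\stoppingTimeOther}$ via a block-wise geometric tail argument based on \eqref{eq:integrability_of_the_stopping_time:probability_bound}.

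First, I would observe that whenever $\stoppingTimeThird < \infty$ the sum telescopes to $\returnTimePointsEnd[\stoppingTimeThird] - \returnTimePointsEnd[0] \leq \stoppingTimeOther$, using the definition of~$\stoppingTimeThird$ in \Cref{thm:restart_argument}. Moreover, $\stoppingTimeThird < \infty$ whenever $\stoppingTimeOther < \infty$, since $(\returnTimePointsEnd[i])_{i \in \N}$ is strictly increasing by \cref{eq:definition_of_the_return_times}. So once we show $\E{\stoppingTimeOther} < \infty$ (which in particular forces $\Pr{\stoppingTimeOther < \infty} = 1$), the sum is a.s.\ bounded above by $\stoppingTimeOther$ and integrability of the sum follows.

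For $\E{\stoppingTimeOther} < \infty$, I would group time into consecutive windows $[kn, (k + 1)n)$ for $k \in \N$, and make the following observation: if at every $\timePoint \in [kn, (k + 1)n)$ with $\timePoint < \stoppingTimeOther$ we have $|\individual[\timePoint + 1]|_1 > |\individual[\timePoint]|_1$, then either the stopping time lies in the window or $|\individual[(k + 1)n]|_1 \geq |\individual[kn]|_1 + n \geq n \geq n/2 + \plateauDistance$; in either case $\stoppingTimeOther \leq (k + 1)n$. Formally, I would introduce $A_k = \bigcap_{\timePoint = kn}^{(k + 1)n - 1}(\{\timePoint \geq \stoppingTimeOther\} \cup \{|\individual[\timePoint + 1]|_1 > |\individual[\timePoint]|_1\})$ and iterate \eqref{eq:integrability_of_the_stopping_time:probability_bound} $n$ times via the tower rule, obtaining $\Pr{A_k}[\filtration[kn]] \geq p^n$ on $\{kn < \stoppingTimeOther\}$. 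Since $A_k \subseteq \{\stoppingTimeOther \leq (k + 1)n\}$, this yields $\Pr{\stoppingTimeOther > (k + 1)n}[\filtration[kn]] \leq 1 - p^n$ on $\{kn < \stoppingTimeOther\}$, and by induction $\Pr{\stoppingTimeOther > kn} \leq (1 - p^n)^k$. Summing, $\E{\stoppingTimeOther} = \sum_{\timePoint \in \N}\Pr{\stoppingTimeOther > \timePoint} \leq n\sum_{k \in \N}(1 - p^n)^k = n/p^n < \infty$. If preferred, one may recast the last step as an application of \Cref{thm:additive_drift} on the coarser filtration $(\filtration[kn])_{k \in \N}$ with potential $\randomProcess[k] = \indicator{kn < \stoppingTimeOther}$ and constant drift $\drift = p^n$, applied to the window count $K^* = \inf\{k \in \N \mid \stoppingTimeOther \leq kn\}$, and using $\stoppingTimeOther \leq nK^*$.

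The main obstacle I anticipate is that \eqref{eq:integrability_of_the_stopping_time:probability_bound} only bounds the probability of a strict increase and gives no control over how large a downward jump of $|\individual|_1$ could be. Hence a direct additive-drift argument on $n - |\individual|_1$ would fail, because a rare but large downward jump could swamp the guaranteed small upward drift. The block-wise trick sidesteps this by pairing the deterministic upper bound $|\individual|_1 \leq n$ with the probabilistic lower bound on increases, forcing the stopping event after just~$n$ consecutive favorable steps. The only technical subtlety is the composition of the $n$-fold conditional probability in the presence of the indicator $\indicator{\timePoint < \stoppingTimeOther}$ appearing in the assumption; this is handled by noting that on any prefix of the window, either the assumption still applies (we are still in $\{\timePoint < \stoppingTimeOther\}$) or the event $A_k$ has already been realized on the remaining steps.
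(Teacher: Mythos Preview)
Your proof is correct but follows a different route from the paper. Both arguments start identically: reduce integrability of the telescoping sum to $\E{\stoppingTimeOther} < \infty$ via $\sum_{i \in [\stoppingTimeThird]}(\returnTimePointsEnd[i] - \returnTimePointsEnd[i-1]) \leq \stoppingTimeOther$. For the bound on $\E{\stoppingTimeOther}$, the paper applies \Cref{thm:additive_drift} directly on the original filtration with the exponential potential $\randomProcess[\timePoint] = (n/p)^n - (n/p)^{|\individual[\timePoint]|_1}$ and constant drift $\drift = 1$; the base $n/p$ is chosen precisely so that the worst-case downward jump (all the way to $|\individual|_1 = 0$) is compensated by the guaranteed probability-$p$ increase. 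You instead sidestep the unbounded-jump issue by a block argument: in each window of length~$n$, a run of $n$ strict increases forces termination, and iterating \eqref{eq:integrability_of_the_stopping_time:probability_bound} via the tower rule gives this probability at least~$p^n$, whence a geometric tail and $\E{\stoppingTimeOther} \leq n/p^n$. Your approach is more elementary (no drift theorem needed, and your optional recast via \Cref{thm:additive_drift} on the coarsened filtration is also valid), and it even yields the sharper constant $n p^{-n}$ versus the paper's $(n/p)^n$, though this is irrelevant since only finiteness is required. The paper's approach has the advantage of staying within its unified drift framework and handling everything at the original time scale.
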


\begin{proof}
    We aim to prove that \E{\stoppingTimeOther} is integrable via \Cref{thm:additive_drift}.
    Since $\sum_{i \in [\stoppingTimeThird]} (\returnTimePointsEnd[i] - \returnTimePointsEnd[i - 1]) \leq \stoppingTimeOther$ and the terms of the sum are non-negative, the claim then follows.
    Recall that we assume that $n \geq 2$.

    Let $(\randomProcess[\timePoint])_{\timePoint \in \N} = \big((n/p)^n - (n/p)^{|\individual[\timePoint]|_1}\big)$, and note that~\randomProcess is non-negative and adapted to~\filtration.
    Further note that \cref{eq:additive-drift:drift_condition} is satisfied for all $\timePoint \in \N$, since, using \cref{eq:integrability_of_the_stopping_time:probability_bound}, $n \geq 2$, and $p \leq 1$, it follows that
    \begin{align*}
        &\big(\randomProcess[\timePoint] - \E{\randomProcess[\timePoint + 1]}[\filtration[\timePoint]]\big) \cdot \indicator{\timePoint < \stoppingTime}\\
        &\quad= \Bigg(\left(\frac{n}{p}\right)^{|\individual[\timePoint + 1]|_1} - \left(\frac{n}{p}\right)^{|\individual[\timePoint]|_1}\Bigg)\\
            &\qquad\cdot \Pr{|\individual[\timePoint + 1]|_1 < |\individual[\timePoint]|_1}[\filtration[\timePoint]][\big] \cdot \indicator{\timePoint < \stoppingTimeOther}\\
            &\qquad+ \Bigg(\left(\frac{n}{p}\right)^{|\individual[\timePoint + 1]|_1} - \left(\frac{n}{p}\right)^{|\individual[\timePoint]|_1}\Bigg)\\
            &\qquad\quad\cdot \Pr{|\individual[\timePoint + 1]|_1 > |\individual[\timePoint]|_1}[\filtration[\timePoint]][\big] \cdot \indicator{\timePoint < \stoppingTimeOther}\\
        &\quad\geq \Bigg(1 - \left(\frac{n}{p}\right)^{|\individual[\timePoint]|_1}\Bigg) \cdot \indicator{\timePoint < \stoppingTimeOther}\\
            &\qquad+ \left(\frac{n}{p}\right)^{|\individual[\timePoint]|_1} \cdot \left(\frac{n}{p} - 1\right) \cdot p \cdot \indicator{\timePoint < \stoppingTimeOther}\\
        &\quad= \Bigg(1 + \left(\frac{n}{p}\right)^{|\individual[\timePoint]|_1} \cdot \left(\frac{n}{p} - 2\right)\Bigg) \cdot \indicator{\timePoint < \stoppingTimeOther}\\
        &\quad\geq \indicator{\timePoint < \stoppingTimeOther} .
    \end{align*}
    Applying \Cref{thm:additive_drift} with $\drift = 1$, noting that $\E{\randomProcess[\stoppingTimeOther]} \geq 0$, and then taking the expectation yields a finite bound for \E{\stoppingTimeOther}, concluding the proof.
\end{proof}

\subsubsection{Reaching a Majority of Ones}

The only thing left in order to apply \Cref{thm:restart_argument} is to bound the expected time it takes an algorithm to get from a solution with a majority of~$0$s to one with a majority of~$1$s.

\begin{theorem}
    \label{thm:rls_getting_to_a_majority_of_ones}
    Consider \rls optimizing \asymmetricPlateau[0], and assume that the initial individual~\individual[0] has at least $n/2 + 1$ $0$s.
    Then the expected run time of \rls is at most $n\big(1 + \ln(|\individual[0]|_0 - n/2)\big)/2$.
\end{theorem}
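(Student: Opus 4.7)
The plan is to reduce the claim to a direct application of the multiplicative-drift theorem (\Cref{thm:multiplicative_drift}). The key observation is that, since $\asymmetricPlateau[0](\individual) = \indicator{|\individual|_1 \geq n/2}$, the fitness landscape is flat on the set of individuals with a strict majority of $0$s: while $|\individual[\timePoint]|_1 < n/2$, every one-bit flip is accepted by \rls, because the offspring either has fitness $0$ (tied with the parent) or fitness $1$ (strictly better). Thus the process $(\individual[\timePoint])_{\timePoint \in \N}$ performs an unconstrained random walk driven only by the uniform choice of the bit to flip, until the first time $|\individual[\timePoint]|_1 = n/2$, which coincides with the run time.

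Concretely, I would define $\randomProcess[\timePoint] \defeq |\individual[\timePoint]|_0 - n/2$ on the natural filtration of~\individual, together with the stopping time $\stoppingTime = \inf\{\timePoint \in \N \mid \randomProcess[\timePoint] < 1\}$. By hypothesis $\randomProcess[0] \geq 1$, and $\stoppingTime$ is exactly the run time of \rls on $\asymmetricPlateau[0]$. In one step, a $0$-bit is flipped (decreasing \randomProcess by $1$) with probability $|\individual[\timePoint]|_0 / n = (\randomProcess[\timePoint] + n/2)/n$, and a $1$-bit is flipped (increasing \randomProcess by $1$) with probability $(n/2 - \randomProcess[\timePoint])/n$. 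A direct computation gives the conditional drift
\[
\randomProcess[\timePoint] - \E{\randomProcess[\timePoint + 1]}[\filtration[\timePoint]] = \frac{\randomProcess[\timePoint] + n/2}{n} - \frac{n/2 - \randomProcess[\timePoint]}{n} = \frac{2\,\randomProcess[\timePoint]}{n}\,,
\]
so \cref{eq:multiplicative-drift:drift_condition} holds with $\drift = 2/n$.

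Applying \Cref{thm:multiplicative_drift} then yields $\E{\stoppingTime}[\filtration[0]] \leq (1 + \ln(\randomProcess[0]))/\drift = n\bigl(1 + \ln(|\individual[0]|_0 - n/2)\bigr)/2$, which is the desired bound. Integrability of $(\randomProcess[\timePoint])_{\timePoint \in \N}$ is immediate since $\randomProcess[\timePoint] \in [0, n/2]$. There is no real obstacle here: the only delicate point is verifying that the acceptance rule of \rls reduces the dynamics on the $0$-plateau of $\asymmetricPlateau[0]$ to the pure one-bit-flip random walk, after which the drift is multiplicative essentially by symmetry, and the theorem closes the argument.
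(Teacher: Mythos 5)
Your proposal is correct and takes essentially the same route as the paper's proof: the same potential $\randomProcess[\timePoint] = |\individual[\timePoint]|_0 - n/2$, the same drift computation yielding $\randomProcess[\timePoint] - \E{\randomProcess[\timePoint + 1]}[\filtration[\timePoint]] = 2\randomProcess[\timePoint]/n$, and the same application of \Cref{thm:multiplicative_drift} with $\drift = 2/n$. Your explicit remark that every one-bit flip is accepted while the current solution has a majority of~$0$s is left implicit in the paper but is exactly the justification needed (and the minor claim $\randomProcess[\timePoint] \in [0, n/2]$ is harmless, since boundedness by $n/2$ in absolute value already gives integrability).
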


\begin{proof}
    We aim to apply \Cref{thm:multiplicative_drift}.
    To this end, let $(\individual[\timePoint])_{\timePoint  \in \N}$ denote the best individual of \rls after each iteration, and let $(\randomProcess[\timePoint])_{\timePoint \in \N}$ be such that for all $\timePoint \in \N$, it holds that $\randomProcess[\timePoint] = |\individual[\timePoint]|_0 - n/2$.
    Further, let~\filtration be the natural filtration of~\individual, and let $\stoppingTime = \inf\{\timePoint \mid \randomProcess[\timePoint] < 1\}$.
    Note that~\stoppingTime denotes the run time of \rls on \majority.

    Since $\randomProcess[0] \geq 1$ by assumption, we are left to check \cref{eq:multiplicative-drift:drift_condition}.
    Let $\timePoint \in \N$, and assume that $\timePoint < \stoppingTime$.
    Note that then
    \begin{align*}
        \randomProcess[\timePoint] - \E{\randomProcess[\timePoint + 1]}[\filtration[\timePoint]]
            &= -1 \cdot \frac{n - |\individual[\timePoint]|_0}{n} + 1 \cdot \frac{|\individual[\timePoint]|_0}{n}\\
            &= \frac{2|\individual[\timePoint]|_0 - n}{n}
            = \frac{2}{n} \randomProcess[\timePoint] .
    \end{align*}
    Applying \Cref{thm:multiplicative_drift} with $\drift = 2/n$ concludes the proof.
\end{proof}

\subsubsection{Deriving the Run Time Result}

By combining the results from the previous sections, the run time bound on \asymmetricPlateau follows straightforwardly.

\begin{proof}[Proof of \Cref{thm:rls_run_time_on_asymmetric_plateau}]
    We aim to apply \Cref{thm:restart_argument}.
    To this end, we check the assumptions of the theorem and use its notation.

    Due to the uniform initialization and the symmetry of the mutation of \rls (with respect to the number of~$0$s and~$1$s), it holds that $\probabilityOfImmediateSuccess = 1/2$.
    By \Cref{lem:integrability_of_the_stopping_time}, $\sum_{i \in [\stoppingTimeThird]} (\returnTimePointsEnd[i] - \returnTimePointsEnd[i - 1])$ is integrable, and by \Cref{lem:number_of_restarts}, using that $\E{\stoppingTimeThird} \leq \E{\stoppingTimeThird}[\eventForRestart] + \E{\stoppingTimeThird}[\overline{\eventForRestart}][\big]$ and, by definition of~\eventForRestart, that $\E{\stoppingTimeThird}[\overline{\eventForRestart}][\big] = 0$, it follows that~\stoppingTimeThird is integrable.
    The last condition to check is that there is a $c \in \R$ such that for all $i \in \N_{> 0}$, it holds that $\E{|\returnTimePointsEnd[i] - \E{\returnTimePointsEnd[i]}[\filtration[\returnTimePointsStart[i - 1]]]|}[\filtration[\returnTimePointsStart[i - 1]]] \leq c$ and $\E{|\returnTimePointsStart[i - 1] - \E{\returnTimePointsStart[i - 1]}[\filtration[\returnTimePointsEnd[i - 1]]]|}[\filtration[\returnTimePointsEnd[i - 1]]] \leq c$.
    Let $i \in \N_{> 0}$.
    For the first inequality, we note that, by the triangle inequality and Jensen's inequality,
    \begin{align*}
        &\E{|\returnTimePointsEnd[i] - \E{\returnTimePointsEnd[i]}[\filtration[\returnTimePointsStart[i - 1]]]|}[\filtration[\returnTimePointsStart[i - 1]]]\\
        &= \E{|\returnTimePointsEnd[i] - \returnTimePointsStart[i - 1] - \E{\returnTimePointsEnd[i] - \returnTimePointsStart[i - 1]}[\filtration[\returnTimePointsStart[i - 1]]]|}[\filtration[\returnTimePointsStart[i - 1]]]\\
        &\leq 2 \E{|\returnTimePointsEnd[i] - \returnTimePointsStart[i - 1]|}[\filtration[\returnTimePointsStart[i - 1]]] .
    \end{align*}
    By \Cref{thm:eas_on_plateaus}, using its notation, it holds that
    \begin{align*}
        \E{|\returnTimePointsEnd[i] - \returnTimePointsStart[i - 1]|}[\filtration[\returnTimePointsStart[i - 1]]] \leq 3\plateauDistance \cdot h\big(\individual[\returnTimePointsStart[i - 1]]\big) / (\basisRLS - 1) .
    \end{align*}
    Since, for all $\individual \in \{0, 1\}^n$, it holds that $h(\individual) \leq \basisRLS^\plateauDistance - 1$, it follows overall that $\E{|\returnTimePointsEnd[i] - \E{\returnTimePointsEnd[i]}[\filtration[\returnTimePointsStart[i - 1]]]|}[\filtration[\returnTimePointsStart[i - 1]]] \leq 6 \plateauDistance \cdot (\basisRLS^\plateauDistance - 1) / (\basisRLS - 1) \eqdef c_1$.
    Similarly, we see that
    \begin{align*}
        &\E{|\returnTimePointsStart[i - 1] - \E{\returnTimePointsStart[i - 1]}[\filtration[\returnTimePointsEnd[i - 1]]]|}[\filtration[\returnTimePointsEnd[i - 1]]]\\
        &= \E{|\returnTimePointsStart[i - 1] - \returnTimePointsEnd[i - 1] - \E{\returnTimePointsStart[i - 1] - \returnTimePointsEnd[i - 1]}[\filtration[\returnTimePointsEnd[i - 1]]]|}[\filtration[\returnTimePointsEnd[i - 1]]]\\
        &\leq 2 \E{|\returnTimePointsStart[i - 1] - \returnTimePointsEnd[i - 1]|}[\filtration[\returnTimePointsEnd[i - 1]]] .
    \end{align*}
    By \Cref{thm:rls_getting_to_a_majority_of_ones}, it holds that
    \begin{align*}
        \E{|\returnTimePointsStart[i - 1] - \returnTimePointsEnd[i - 1]|}[\filtration[\returnTimePointsEnd[i - 1]]] \leq \tfrac{n}{2}\big(1 + \ln(|\individual[\returnTimePointsEnd[i - 1]]|_0 - n/2)\big) .
    \end{align*}
    Noting that $|\individual[\returnTimePointsEnd[i - 1]]|_0 = n/2 + \plateauDistance$, since \rls cannot get to a solution with strictly more than $n/2 + \plateauDistance$ $0$s before getting exactly $n/2 + \plateauDistance$ $0$s, we get $\E{|\returnTimePointsStart[i - 1] - \E{\returnTimePointsStart[i - 1]}[\filtration[\returnTimePointsEnd[i - 1]]]|}[\filtration[\returnTimePointsEnd[i - 1]]] \leq n\big(1 + \ln(\plateauDistance)\big) \eqdef c_2$.
    Thus, choosing $c = \max\{c_1, c_2\}$ satisfies the remaining condition of \Cref{thm:restart_argument}.

    We continue with bounding \E{\stoppingTime}, noting that for all $i \in \N_{> 0}$ such that $i \in [\stoppingTimeThird]$, we bounded the expectations $\E{\returnTimePointsEnd[i] - \returnTimePointsStart[i - 1]}[\filtration[\returnTimePointsStart[i - 1]]]$ and $\E{\returnTimePointsStart[i - 1] - \returnTimePointsEnd[i - 1]}[\filtration[\returnTimePointsEnd[i - 1]]]$ already above, as the differences are non-negative and the absolute value thus does not change anything.

    Let $i \in \N_{> 0}$ such that $i \in [\stoppingTimeThird]$.
    Note that $\returnTimePointsEnd[i] - \returnTimePointsStart[i - 1]$ and~\stoppingTime have the same distribution conditional on the same initial individual for the respective time interval, since \rls is Markovian and since both stopping times stop once an optimum of \plateau[\plateauDistance] is found.
    Thus, their conditional expectations (on the same initial individual) are the same, and \Cref{thm:eas_on_plateaus} yields a bound for \E{\stoppingTime}, too.

    Combining all bounds, by the tower rule for expectation, \Cref{thm:restart_argument} yields
    \begin{align*}
        \E{\stoppingTimeOther} &\leq 3\plateauDistance \cdot \frac{\basisRLS^\plateauDistance - 1}{\basisRLS - 1}\\
        &\quad + \frac{1}{2}\E{\sum\nolimits_{i \in [\stoppingTimeThird]}\left(n\frac{1 + \ln(\plateauDistance)}{2} + 3\plateauDistance \cdot \frac{\basisRLS^\plateauDistance - 1}{\basisRLS - 1}\right)}[\eventForRestart]\\
        &\hspace*{-2.5 ex}= 3\plateauDistance \cdot \frac{\basisRLS^\plateauDistance - 1}{\basisRLS - 1} + \frac{1}{2}\left(n\frac{1 + \ln(\plateauDistance)}{2} + 3\plateauDistance \cdot \frac{\basisRLS^\plateauDistance - 1}{\basisRLS - 1}\right)\E{\stoppingTimeThird}[\eventForRestart] .
    \end{align*}
    Applying \Cref{lem:number_of_restarts} yields $\E{\stoppingTimeThird}[\eventForRestart] \leq 2$, thus concluding the proof.
\end{proof}

\subsection{Results for Neutrality in the W-Model}
\label{sec:w-model}

For all $n \in \N_{> 0}$, all $\individual \in \{0, 1\}^n$, and all $S \subseteq [n]$, let $\individual_S$ denote the bit string of length~$|S|$ that consists only of those entries in~\individual with an index in~$S$, that is, $\individual_S = (\individual_i)_{i \in S}$.

Let $n \in \N_{> 0}$.
Given a pseudo-Boolean function $f\colon \{0, 1\}^n \to \R$ as well as $k \in \N_{> 0}$, the property of neutrality in the W-model~\cite{WeiseCLW20WModel} constructs a new function $f'\colon \{0, 1\}^{nk} \to \R$ with
\begin{align*}
    \individual \mapsto f\big(\asymmetricPlateau[1](\individual_{[(i - 1)k + 1 .. ik]})_{i \in [n]}\big) .
\end{align*}
In other words, each bit of~$f$ is exchanged for a block of~$k$ bits.
When evaluating~$f'$, for each block, the majority of bits is determined, and then~$f$ is evaluated on the string of the~$n$ majority bits.
We refer to~$f'$ as \emph{$d$ with added neutrality of degree~$k$.}

Applying the results by Doerr et~al.~\cite{DoerrSW13BlockAnalysis}, we get the following bound for the expected run time of \rls on the pseudo-Boolean function $\onemax\colon \individual \mapsto |\individual|_1$ with added neutrality.

\begin{corollary}
    \label{cor:rls_w-model}
    Let $n, k \in \N_{> 0}$ such that $n$ is even.
    For sufficiently large~$n$, the expected run time of \rls on \onemax with added neutrality of degree~$k$ is at most \bigO{n k \log(n)}.
\end{corollary}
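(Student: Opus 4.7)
The plan is to exploit the fact that \onemax with added neutrality of degree~$k$ is a \emph{separable} function: the definition in the excerpt can be rewritten as $f'(\individual) = \sum_{i=1}^{n} \asymmetricPlateau[1]\bigl(\individual_{[(i-1)k+1 \,..\, ik]}\bigr)$, a sum over $n$ disjoint blocks, each evaluated by $\asymmetricPlateau[1]$ on $k$ bits. Since \rls flips exactly one bit per iteration, a flip in block~$j$ leaves the fitness contributions of the other blocks untouched, so the blocks evolve independently. This is the setting of the block-wise run time analysis of Doerr, Sudholt, and Witt~\cite{DoerrSW13BlockAnalysis} cited in the excerpt, which bounds the expected run time of \rls on $f'$ by $\ln(n)$ times the slowest expected run time of \rls to optimise a single block when embedded in the $nk$-bit search space.

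The remaining task is therefore to bound the embedded expected run time of \rls on one $\asymmetricPlateau[1]$ block. First I would invoke \Cref{thm:rls_run_time_on_asymmetric_plateau} with $\plateauDistance = 1$ and dimension~$k$: substituting $\plateauDistance = 1$ into \cref{eq:basisRLS} gives $\basisRLS = 3k/k = 3$, so the bound of \Cref{thm:rls_run_time_on_asymmetric_plateau} specialises to $6 \cdot (3 - 1)/(3 - 1) + k(1 + \ln 1)/2 = 6 + k/2 = \bigO{k}$ iterations on the standalone $k$-bit block. In the embedded $nk$-bit setting, each iteration of \rls touches a given block with probability $k/(nk) = 1/n$, so a standard rescaling argument (formalisable, if desired, by another application of \Cref{thm:walds_equation}) inflates the expected embedded run time for one block by a factor of~$n$, yielding $\bigO{nk}$.

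Combining these ingredients gives the claimed $\bigO{nk \log n}$ bound once the $\ln(n)$ coupon-collector factor of Doerr et~al.\ is applied. The only real computational step is the specialisation of \Cref{thm:rls_run_time_on_asymmetric_plateau} at $\plateauDistance = 1$; everything else is a direct appeal to~\cite{DoerrSW13BlockAnalysis} and the basic observation that each bit-flip is equally likely to land in any of the $n$ blocks. A minor subtlety is that \Cref{thm:rls_run_time_on_asymmetric_plateau} is stated for even block dimensions, so for odd~$k$ one would need a small adaptation of \Cref{thm:eas_on_plateaus} with $n/2$ replaced by $\lfloor n/2 \rfloor$; this has no effect on the asymptotics.
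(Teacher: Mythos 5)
Your proposal is correct and follows essentially the same route as the paper: decompose \onemax with neutrality as a consecutively separable function and invoke the Doerr et~al.\ block-analysis result (\Cref{thm:sub-function_combination_run_time}), specialize \Cref{thm:rls_run_time_on_asymmetric_plateau} at $\plateauDistance = 1$ on a $k$-bit block to get the $6 + k/2$ bound, and inflate by a factor of~$n$ for the embedding, which the paper formalizes via a coupling with independent geometric waiting times (success probability $1/n$) rather than a second appeal to Wald's equation. Your remark on odd~$k$ is a reasonable extra observation that the paper's proof glosses over.
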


Before we state the result by Doerr et~al.~\cite{DoerrSW13BlockAnalysis} and prove \Cref{cor:rls_w-model}, we introduce some notation of their paper to the degree that it is necessary for our result.
Let $n, k \in \N_{> 0}$ and $i \in [n]$, we say that a function $g\colon \{0, 1\}^{nk} \to \R$ \emph{acts only on block~$i$} if and only if for all $\individual \in \{0, 1\}^{nk}$, it holds that $g(\individual) = g(\individual_{[(i - 1)k + 1 .. ik]})$.
We say that a function $f\colon \{0, 1\}^{nk}$ is a \emph{consecutively separable function composed of~$n$ sub-functions} if and only if there exists a collection $(g_i)_{i \in [n]}$ of~$n$ pseudo-Boolean functions, each of dimension~$nk$, such that for all $\individual \in \{0, 1\}^{nk}$, it holds that $f(\individual) = \sum_{i \in [n]} g_i(\individual)$, and for all $i \in [n]$, it holds that~$g_i$ acts only on block~$i$.

\begin{theorem}[Simplification of {\cite[Theorem~$2$]{DoerrSW13BlockAnalysis}}]
    \label{thm:sub-function_combination_run_time}
    Let $n, k \in \N_{> 0}$, and let $f\colon \{0, 1\}^{nk} \to \R$ be a consecutively separable function composed of~$n$ sub-functions.
    Further, let~$T^*$ denote the maximum expected run time of \rls with uniform initialization over all sub-functions.
    Then the expected run time of \rls with uniform initialization on~$f$ is \bigO{T^* \log(n)}[\big].
\end{theorem}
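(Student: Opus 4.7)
The plan is to decompose the RLS trajectory on $f$ into essentially independent per-block sub-processes and then bound the time until every block is simultaneously optimized, combining a coupon-collector-type argument with per-block tail bounds.

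First I would establish the block-decomposition. Since \rls flips exactly one bit per iteration, each mutation touches a unique block $j \in [n]$; and since $g_j$ depends only on block $j$, we have $f(\individualOther) - f(\individual[\timePoint]) = g_j(\individualOther) - g_j(\individual[\timePoint])$, so the acceptance criterion $f(\individualOther) \geq f(\individual[\timePoint])$ reduces to $g_j(\individualOther) \geq g_j(\individual[\timePoint])$. It follows that, for each $i \in [n]$, the sequence of block-$i$ states produced by \rls on $f$ is distributed exactly as the sequence produced by a standalone \rls run on $g_i$ (viewed as an $nk$-dimensional function) with uniform initialization. Decomposing each iteration's bit choice as (block, within-block position), one further sees that the per-block \emph{purified} trajectories (recording only iterations where the given block is touched) are mutually independent, being driven by disjoint and independent sources of randomness: the per-block initial bits and the per-block within-block position choices.

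Let $T_i$ denote the first iteration at which block $i$ is optimized in the run on $f$. By the decomposition, $\E{T_i} \leq T^*$, and the overall run time on $f$ equals $\max_{i \in [n]} T_i$, since $f$ is maximized exactly when every block is optimized. The next step is to derive an exponential per-block tail bound of the form $\Pr{T_i > 2 k T^*} \leq 2^{-k}$ for each positive integer $k$, obtained by iterating Markov's inequality together with the Markov property of \rls\!\!. A union bound over $i \in [n]$ then gives $\Pr{\max_i T_i > c T^* \log(n)} = \smallO{1}$ for a suitable constant $c$, and the standard tail-integral formula $\E{\max_i T_i} = \int_0^\infty \Pr{\max_i T_i > t} \d t$ converts this into the desired expectation bound $\E{\max_i T_i} = \bigO{T^* \log(n)}$.

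The main technical obstacle is the per-block tail bound, because the inductive Markov-property step requires bounding the expected remaining run time of \rls on $g_i$ from an \emph{arbitrary reached state} by $\bigO{T^*}$, whereas $T^*$ is only defined for uniform initialization. I would handle this by introducing the worst-case-start analogue $T^{\max}$ of $T^*$, establishing $T^{\max} = \bigO{T^*}$ via a coupling of \rls runs started at dominated initial individuals (exploiting that \rls is elitist and never revisits worse-fitness states), and then carrying through the tail bound with $T^{\max}$ in place of $T^*$. The remaining coupon-collector arithmetic, together with the implicit bound $T^* = \bigOmega{n}$ that holds whenever any $g_i$ is non-trivial (since touching a specific block already takes $n$ iterations in expectation), is standard and yields the claimed bound.
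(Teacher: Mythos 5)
The paper does not actually prove this statement; it is imported verbatim as a ``simplification'' of Theorem~2 of \cite{DoerrSW13BlockAnalysis}, so there is no in-paper proof to compare against. That said, your plan---the per-block decomposition, the observation that acceptance under $f$ reduces to acceptance under the touched block's sub-function, the identity between the run time on~$f$ and $\max_{i} T_i$, and the tail-bound-plus-union-bound coupon-collector argument---is exactly the route of the original proof, and those steps are all sound.

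The gap is the step you yourself flag as the main obstacle: the claim $T^{\max} = \bigO{T^*}$. Elitism only gives a non-decreasing fitness trajectory; it does not give stochastic domination of hitting times between different initial individuals, and for an arbitrary sub-function there is no order on the search space that \rls respects, so the proposed coupling does not exist. Worse, the gap cannot be repaired, because the statement with uniform-initialization $T^*$ is false as written. Take $k = \lceil \log_2 \log_2 n \rceil$ and $g_i(\individual) = \indicator{|\individual_{[(i - 1)k + 1 .. ik]}|_1 \geq 1}$. Then \rls on~$g_i$ is finished at initialization except with probability $2^{-k} = \bigTheta{1/\log(n)}$, in which case it waits a geometric time with success probability $1/n$ for the first mutation in block~$i$; hence $T^* = \bigTheta{n/\log(n)}$ and the claimed bound is \bigO{n}. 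But on $f = \sum_{i} g_i$ the run time is a coupon collector over the (with high probability) $\bigTheta{n/\log(n)}$ initially all-zero blocks, which is \bigTheta{n\log(n)}. The same example refutes your closing claim that $T^* = \bigOmega{n}$ whenever some $g_i$ is non-trivial. The hypothesis that is actually needed---and the one in the original Theorem~2 of \cite{DoerrSW13BlockAnalysis}---is a bound $u_i$ on the expected remaining run time from an \emph{arbitrary} reachable search point; with $u_i = n$ in the example above, the bound $\bigO{(\max_i u_i)\log(n)} = \bigO{n\log(n)}$ is correct. So your argument is structurally right but proves that different (correct) theorem, and the reduction from the stated theorem to it is impossible.
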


\begin{proof}[Proof of \Cref{cor:rls_w-model}]
    Note that \onemax with added neutrality of degree~$k$ is a consecutively separable function composed of~$n$ sub-functions, each of which acts only on a single block of length~$k$ as \asymmetricPlateau[1].
    Thus, we aim to apply \Cref{thm:sub-function_combination_run_time} and determine~$T^*$ by \Cref{thm:rls_run_time_on_asymmetric_plateau}, noting that each block has the same sub-function.
    However, note that \Cref{thm:rls_run_time_on_asymmetric_plateau} is not directly applicable, since it considers \asymmetricPlateau[1] to be defined over the entire search space (that is, for all bits), whereas in the setting of \Cref{cor:rls_w-model}, \asymmetricPlateau[1] is a sub-function that only acts on a block of length~$k$.
    Hence, we translate the result from \Cref{thm:rls_run_time_on_asymmetric_plateau} to our setting.
    We do so by only counting those mutations that change a bit in block~$i$ and adding waiting time due to iterations that change other (irrelevant) bits for a specific sub-function.

    Formally, let $i \in [n]$, and consider the \asymmetricPlateau[1] sub-function~$g_i$ acting only on block~$i$.
    Further, let~\stoppingTimeOther denote the expected run time of \rls with uniform initialization on \asymmetricPlateau[1] defined over bit strings of length~$k$, and let~\stoppingTime denote the expected run time of \rls with uniform initialization on~$g_i$.
    We note that, similar to the proof of \Cref{lem:integrability_of_the_stopping_time}, \stoppingTime is integrable, allowing us to reorder terms when calculating \E{\stoppingTime}.
    Last, let $(\individual[\timePoint])_{\timePoint \in \N}$ denote the trajectory of \rls with uniform initialization on~$g_i$, and for all $\timePoint \in \N_{> 0}$, let~$M_\timePoint$ denote the event that~\individual[\timePoint] is a result of a mutation that changes a bit in block~$i$.
    We couple~\stoppingTimeOther and~\stoppingTime such that
    \begin{align}
        \label{eq:coupling}
        \sum\nolimits_{\timePoint \in [\stoppingTime]} \indicator{M_\timePoint} = \stoppingTimeOther\ ,
    \end{align}
    that is, only considering changes in block~$i$, the optimization behaves as if considering the optimization of \asymmetricPlateau[1] defined of bit strings of length~$k$.
    Further, let $I = \{\timePoint \in [\stoppingTime] \mid \indicator{M_\timePoint} = 1\}$.

    Since it trivially holds that $\stoppingTime = \sum_{\timePoint \in [\stoppingTime]} (\indicator{M_\timePoint} + \indicator{\overline{M_\timePoint}})$, by the definition of~$I$, it follows that $\stoppingTime = \sum_{\timePoint \in I} \indicator{M_\timePoint} + \sum_{[\stoppingTime] \smallsetminus I} \indicator{\overline{M_\timePoint}}$.
    We aim to connect the second sum to the first, since we know how to bound the first sum by \Cref{eq:coupling}.
    To this end, for all $\timePoint \in I$, let $J_t = \{\timePoint' \in [\stoppingTime] \mid \timePoint' \leq \timePoint \land \forall s \in I, s < \timePoint\colon s < \timePoint'\}$, that is, the set of all time points that are in between the time points in~$I$.
    Note that $\indicator{M_\stoppingTime} = 1$ and, thus, $\stoppingTime \in I$, since the final mutation has to change a bit in block~$i$.
    Thus, $[\stoppingTime] = \bigcup_{\timePoint \in I} J_\timePoint$.
    Further note for all $\timePoint \in I$ that $|J_\timePoint|$ follows a geometric distribution with support~$\N_{> 0}$ and with success probability $k/(nk) = 1/n$, since the mutation of~\rls chooses uniformly at random which bit to flip.
    In addition, the $\{|J_\timePoint|\}_{\timePoint \in I}$ are independently and identically distributed.
    Let~$J$ denote their common distribution.

    Using the definitions of $\{|J_\timePoint|\}_{\timePoint \in I}$ and~$J$ as well as \cref{eq:coupling}, we see that $\stoppingTime = \sum_{\timePoint \in I} |J_\timePoint| \cdot \indicator{M_\timePoint} = J \cdot \stoppingTimeOther$.
    Since~$J$ and~\stoppingTimeOther are independent, as the mutation of~\rls is independent of any random choice besides which bit it flips, it follows that $\E{J \cdot \stoppingTimeOther} = \E{J} \cdot \E{\stoppingTimeOther}$.
    By definition, it follows that $\E{J} = n$, and by \Cref{thm:rls_run_time_on_asymmetric_plateau}, it follows that $\E{\stoppingTimeOther} \leq 6 + k/2$.
    Thus, $\E{\stoppingTime} = \bigO{nk}$.

    Overall, if follows that $T^* = \bigO{nk}$.
    Applying \Cref{thm:sub-function_combination_run_time} concludes the proof.
\end{proof}

\section{Empirical Results for \texorpdfstring{\rls[\ell]}{RLS\_𝓁}}
\label{sec:experiments}
\begin{figure*}[t]
    \begin{subfigure}[b]{0.3\textwidth}
        \includegraphics[width = \textwidth]{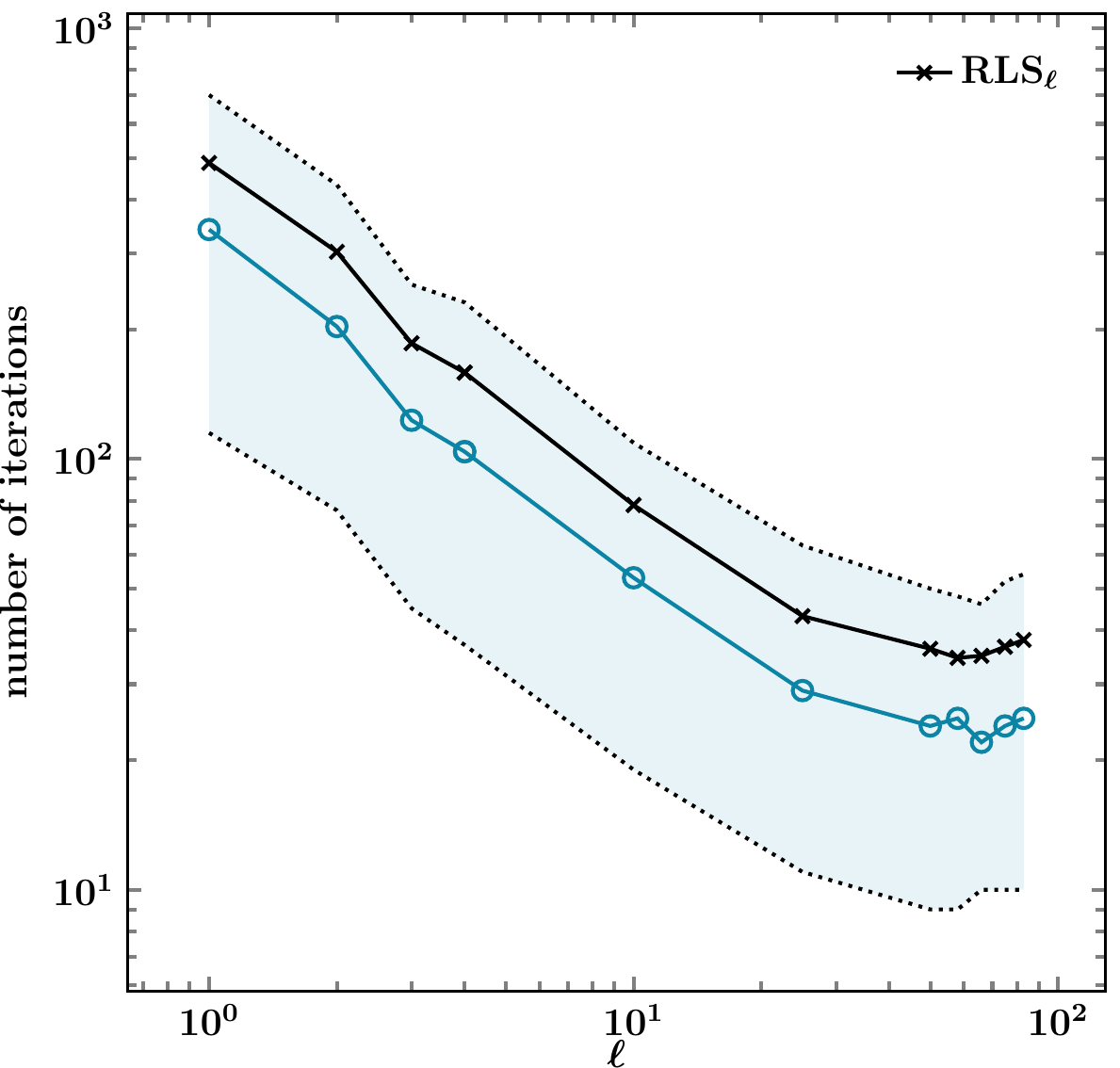}
        \caption{$n = 100$}
    \end{subfigure}
    \hfil
    \begin{subfigure}[b]{0.3\textwidth}
        \includegraphics[width = \textwidth]{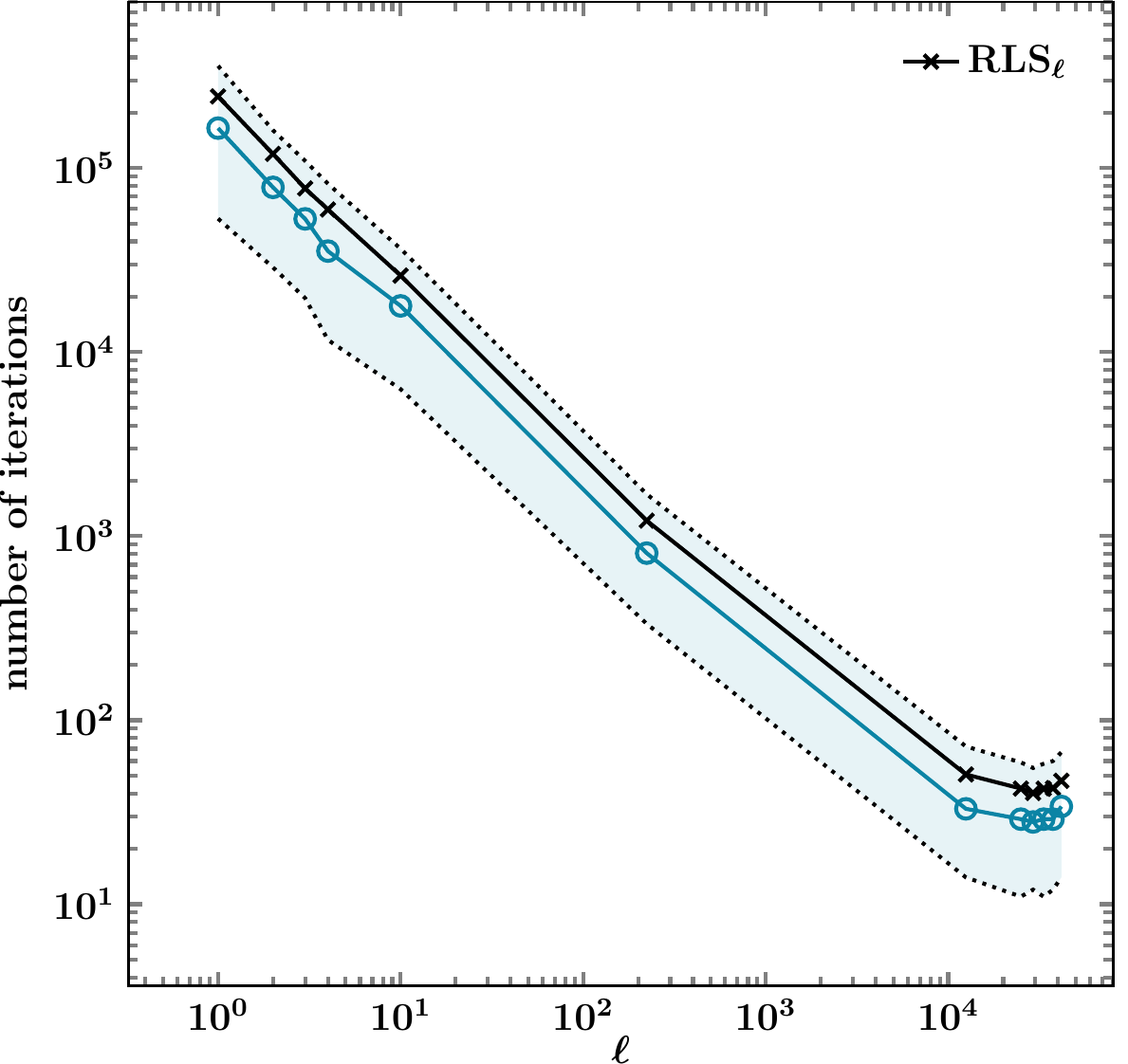}
        \caption{$n = 50\,000$}
    \end{subfigure}
    \hfil
    \begin{subfigure}[b]{0.3\textwidth}
        \includegraphics[width = \textwidth]{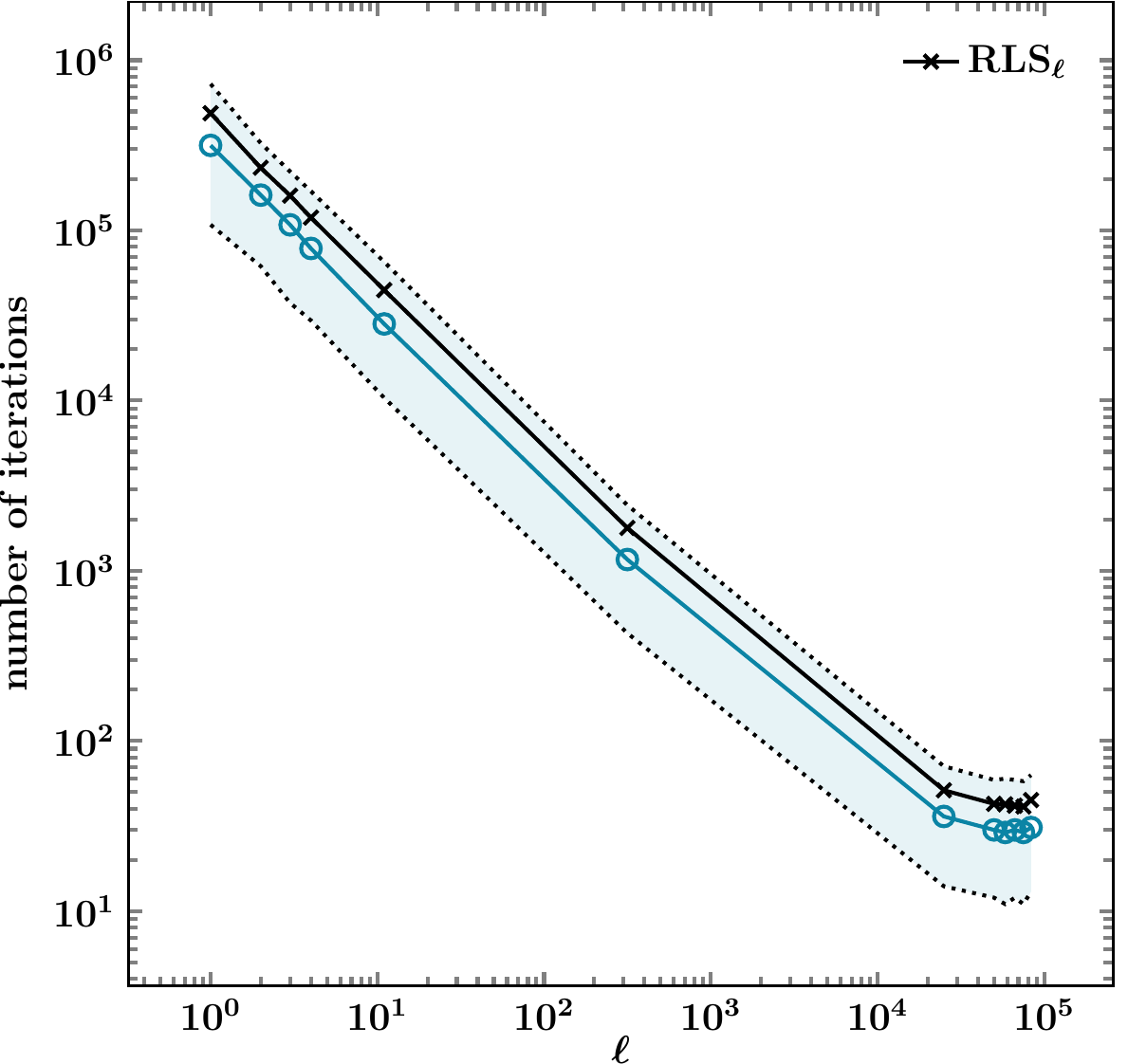}
        \caption{$n = 100\,000$}
    \end{subfigure}
    \caption{
        \label{fig:empirical_run_time_varying_n_and_k}
        The empirical run time (number of iterations until an optimum was found) of \rls[\ell] on \asymmetricPlateau[\lfloor \sqrt{n} \rfloor] for $n \in \{100, 50\,000, 100\,000\}$, each with varying values of~$\ell$, as discussed at the beginning of \Cref{sec:empirical_run_time_varying_n_and_k}.
        Note the doubly logarithmic scale.
        For each combination of~$n$ and~$\ell$, $1000$ independent runs with uniform initialization were started.
        The solid black line (points represented by~$\times$) depicts the average of the runs.
        The solid blue line (points represented by~$\circ$) depicts the median of the runs, and the surrounding shaded area depicts the range from the $25$th percentile to the $75$th percentile.
    }
\end{figure*}

We complement our theoretical results on \asymmetricPlateau with empirical investigations on a generalization of \rls called \rls[\ell], which flips exactly~$\ell$ bits each iteration.
In more detail, \rls[\ell] follows \Cref{alg:1+1_unbiased} and uses a time-homogeneous mutation operator, that is, it uses the same mutation operator in each iteration.
Let $n \in \N_{> 0}$, and let $\individual \in \{0, 1\}^n$.
Given an $\ell \in [n]$, for all $\timePoint \in \N$, \rls[\ell] computes the result of $\mutation[\timePoint](\individual)$ by first choosing a subset $I \subseteq [n]$ of cardinality~$\ell$ uniformly at random among all $\ell$-size subsets of $[n]$ and then flipping exactly the bits at the positions in~$I$.
Note that \rls is a special case of \rls[\ell] for $\ell = 1$.

In our experiments, we analyze the impact of the parameter~$\ell$ of \rls[\ell] on the run time for \asymmetricPlateau when using the uniform distribution as initialization distribution.
To this end, given a problem size of $n \in \N_{> 0}$, with even~$n$, we fix the parameter~\plateauDistance of \asymmetricPlateau[\plateauDistance] to~$\lfloor \sqrt{n} \rfloor$.
Recalling our discussion of~\plateauDistance after \Cref{thm:rls_run_time_on_asymmetric_plateau}, this results in an expected run time linear in~$n$ for \plateau[\plateauDistance], which is a lower bound for the run time for \asymmetricPlateau (see \Cref{thm:restart_argument}).
For larger values of~\plateauDistance, the expected run time increases drastically.
Thus, $\plateauDistance = \lfloor \sqrt{n} \rfloor$ results in an easily-solvable problem that is not yet trivial.
We note though that, due to the uniform initialization, there is a constant probability of the initial solution being already optimal.

\subsection{Near-optimal Value of \texorpdfstring{$\ell$}{l}}
\label{sec:empirical_run_time_varying_n_and_k}
We determine the value~$\ell$ of \rls[\ell] that has the best average empirical run time among a variety of different values of~$\ell$.
For each each problem size $n \in \{10^i \mid i \in [2 .. 5]\} \cup \{5 \cdot 10^i \mid i \in [2 .. 4]\}$, we consider $\ell \in \{1, 2, 3, 4, \lfloor \ln n \rfloor, \lfloor \sqrt{n} \rfloor\} \cup \{\lfloor an/12 \rfloor \mid a \in \{3\} \cup [6 .. 10]\}$.
For each parameter combination of~$n$ and~$\ell$, we start $1000$ independent runs of \rls[\ell] (on \asymmetricPlateau[\lfloor \sqrt{n} \rfloor]) and store its empirical run time, that is, the number of iterations~\timePoint from \Cref{alg:1+1_unbiased} until an optimal solution was found for the first time.
Afterward, we compute the average, median, and the $25$th as well as $75$th percentile.
\Cref{fig:empirical_run_time_varying_n_and_k} shows the results for $n \in \{100, 50\,000, 100\,000\}$.

The results for our values of~$n$ look qualitatively similarly, that is, the run time drastically decreases by several orders of magnitude with increasing~$\ell$ until $\ell \in \bigTheta{n}$.
The median is always below the mean, and the area between the $25$th percentile and the mean is larger than the one between the mean and the $75$th percentile.
This is likely due to the initialization creating an optimal solution or one that is close to an optimal one with respect to its number of~$1$s.
Such runs result in a low run time.
The mean being larger than the median suggests that initial solutions that are further away from optimal solutions in terms of their number of~$1$s result in a far larger run time.

In the regime of $\ell \in \{\lfloor an/12 \rfloor \mid a \in \{3\} \cup [6 .. 10]\}$, the minimum average empirical run time is always taken for one of the values $\ell \in \{\lfloor an/12 \rfloor \mid a \in [6 .. 9]\}$.
For these values of~$\ell$, the difference of the smallest average to the second smallest is typically less than~$1$.
This suggests that, in this regime, the run time of \rls[\ell] does not dependent heavily on~$\ell$.
However, overall, a linear dependence of~$\ell$ on~$n$ seems best.

\subsection{Trajectories for Different Values of \texorpdfstring{$\ell$}{l}}
The results in \Cref{fig:empirical_run_time_varying_n_and_k} show that the value of~$\ell$ has a drastic impact on the run time of \rls[\ell] on \asymmetricPlateau[\lfloor \sqrt{n} \rfloor].
Since~$\ell$ only influences how largely the mutation changes the current solution but does not influence the selection of which solution to use for the next iteration, we compare the trajectories of \rls[\ell] for different values of~$\ell$.
We choose $n = 10\,000$ and $\ell \in \{1, \lfloor \ln n\rfloor, \lfloor n/2 \rfloor\}$, and we log at each iteration the number of~$1$s of the currently best solution (the \emph{fitness level}).
Note that all optimal solutions are at fitness levels at least $5100$.

\begin{figure*}[t]
    \begin{subfigure}[b]{0.3\textwidth}
        \includegraphics[width = \textwidth]{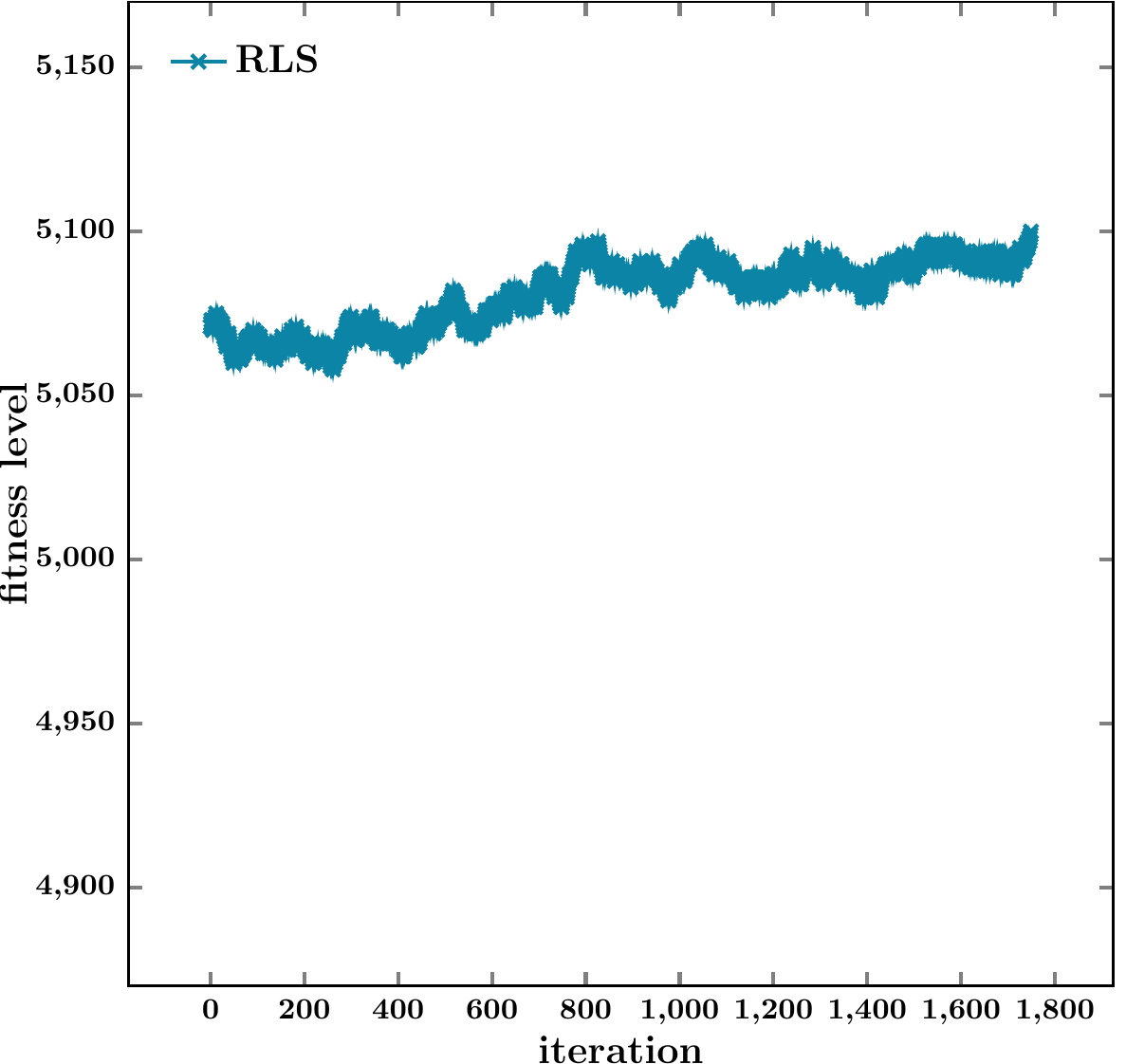}
        \caption{$\ell = 1$}
    \end{subfigure}
    \hfil
    \begin{subfigure}[b]{0.3\textwidth}
        \includegraphics[width = \textwidth]{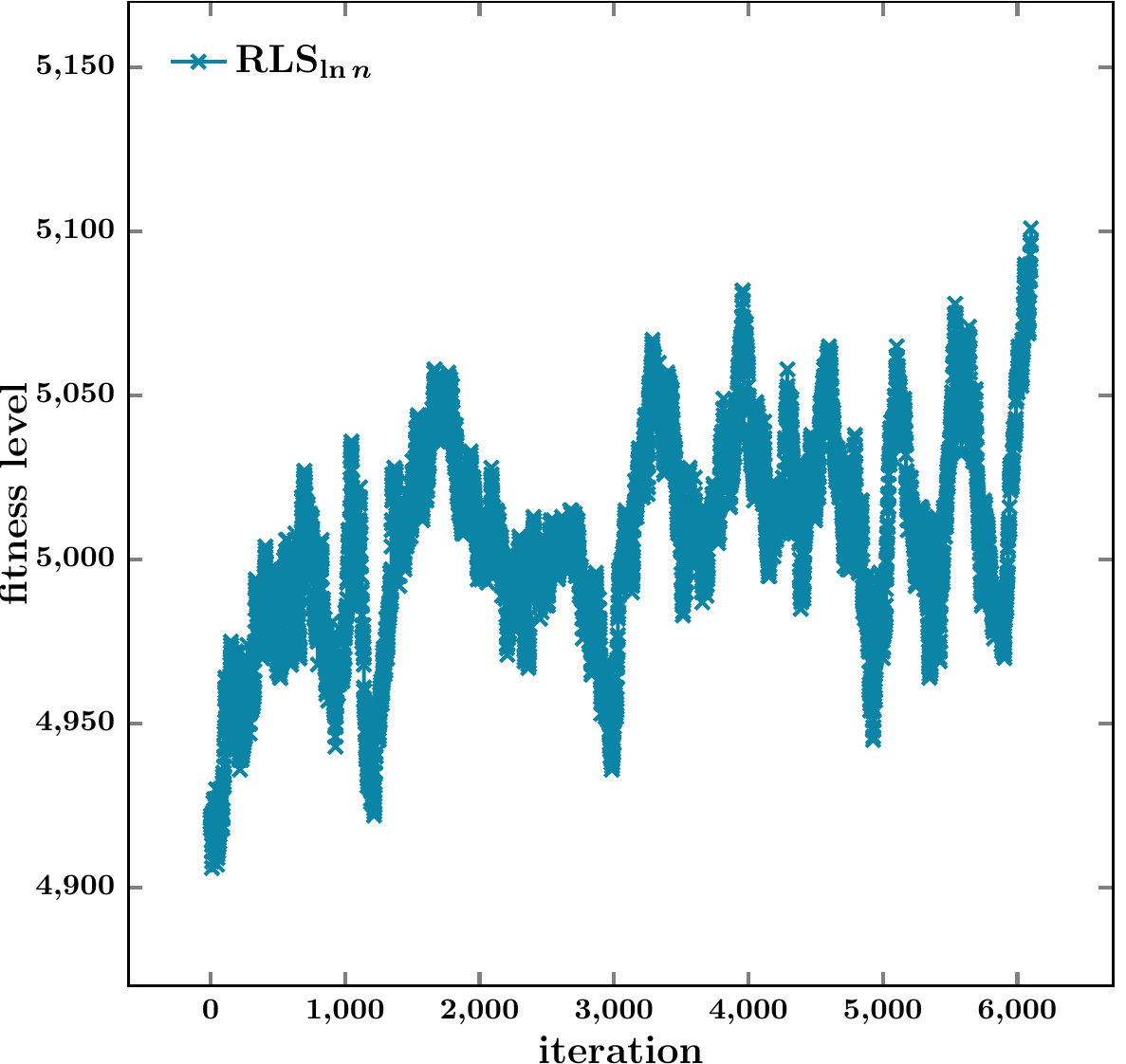}
        \caption{$\ell = \lfloor \ln n\rfloor$}
    \end{subfigure}
    \hfil
    \begin{subfigure}[b]{0.3\textwidth}
        \includegraphics[width = \textwidth]{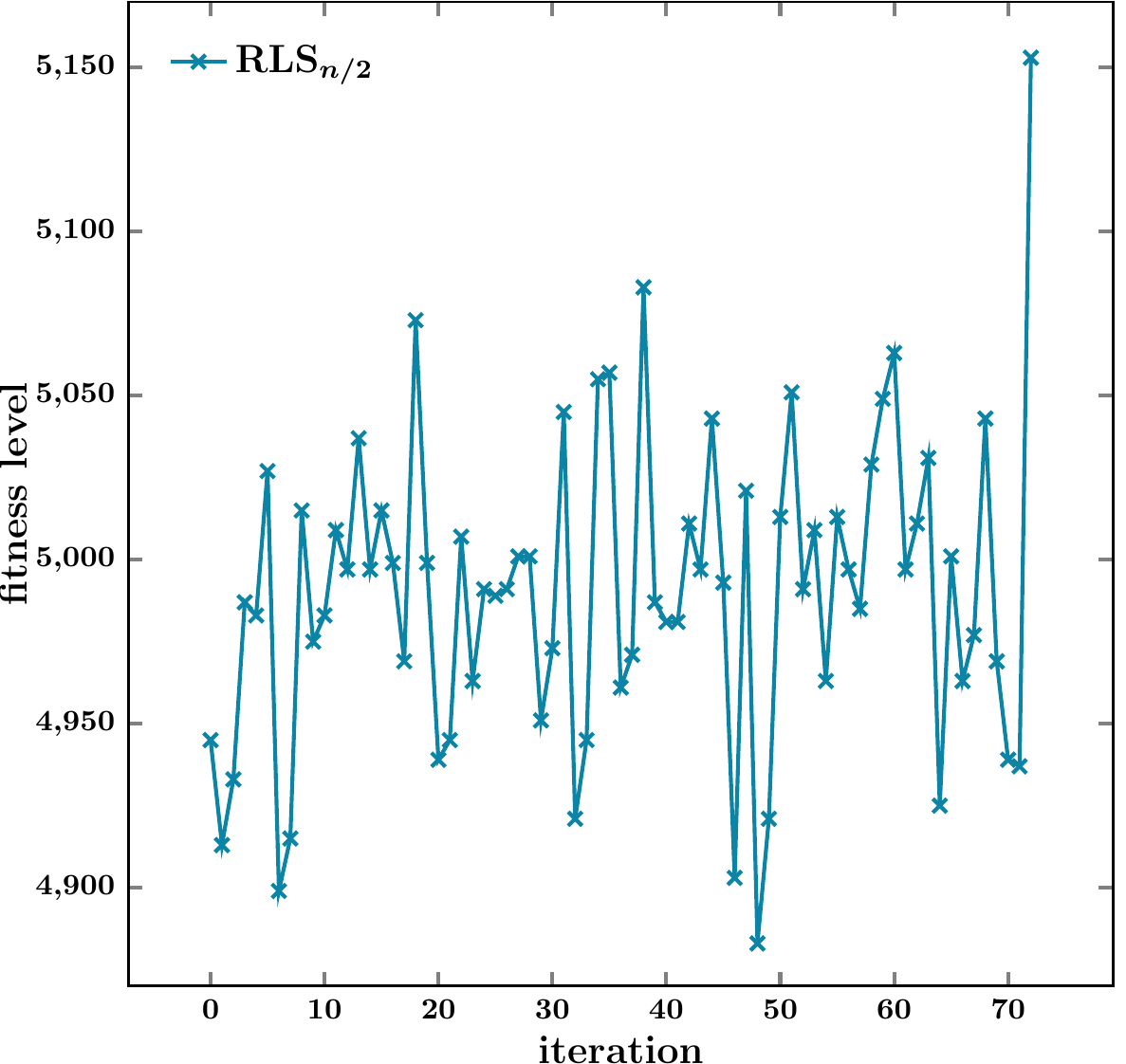}
        \caption{$\ell = \lfloor n/2 \rfloor$}
    \end{subfigure}
    \caption{
        \label{fig:different_trajectories}
        The number of~$1$s \emph{(fitness level)} of the best solution of \rls[\ell] on \asymmetricPlateau[\lfloor \sqrt{n} \rfloor] for $n = 10\,000$ and for different values of~$\ell$.
        Each plot depicts a single run that was stopped once an optimal solution was found (fitness level of at least $5100$).
        Note that we show runs that started from a non-optimal solution, chosen uniformly at random among all possible $2^n$ solutions.
    }
\end{figure*}

\Cref{fig:different_trajectories} depicts our results.
Note that there is a huge difference in run time between $\ell = 1$ and $\ell = \lfloor \ln n\rfloor$, which is most likely due to the former starting with a solution of more than $5000 = n/2$ $1$s, whereas the latter starts at a position almost at $n/2 - \sqrt{n}$.
This highlights the impact of the initialization on the run time.
Interestingly, the case $\ell = \lfloor n/2 \rfloor$ also starts with a rather bad solution of about $n/2 - \sqrt{n}/2$ $1$s but manages to quickly find an optimum.
In fact, the last mutation increases the number of~$1$s by about~$200$.
This suggests that \rls[\ell] has far better capabilities of exploring the plateau with $\ell \in \bigTheta{n}$ than with smaller values of~$\ell$.

\section{Conclusion}
\label{sec:conclusion}
We analyzed the expected run time that RLS, started on a plateau, requires until it leaves the plateau for the first time.
We considered a symmetric (\plateau) and an asymmetric (\asymmetricPlateau) setting, and we showed, for general EAs, how to derive a result for \asymmetricPlateau from a result on \plateau (\Cref{thm:restart_argument}), based on a more general result for a random process to reach a certain offset (\Cref{thm:general_restart_argument}).
A fair amount of conditions of \Cref{thm:restart_argument} follow directly from \Cref{lem:number_of_restarts,lem:integrability_of_the_stopping_time}, which apply to many EAs.
\Cref{lem:integrability_of_the_stopping_time} is so general that it holds for any EA that has, for each individual, a positive probability generating it.
And although \Cref{lem:number_of_restarts} is not as easy to check, we note that it holds for the \onePlusOneEA~\cite[Lemma~$6.1$]{Witt13EAsOnLinearFunctions}.

\Cref{thm:general_restart_argument} provides a tool for analyzing \emph{general} plateaus by considering phases in which the algorithm tries to cross the plateau.
In order to bound the expected number of restarts, one determines the probability of crossing the plateau and not returning to its start.
This can be done by defining a drift potential that decreases toward the end of the plateau (as done in the proof of \Cref{thm:eas_on_plateaus}) but is also~$0$ at the start.
The optional-stopping theorem allows then to determine the desired probability.
However, we note that simply using the potential from the proof of \Cref{thm:eas_on_plateaus} is typically a too coarse estimate since the bound always assumes the worst case of crossing the plateau.
It remains an interesting open problem to derive tighter bounds for such a setting.

Further possible future work includes deriving a lower bound for \rls on \plateau, as well as analyzing the expected run time of more algorithms on \plateau, such as the \onePlusOneEA, \rls[\ell], or non-elitist EAs.
Any such bound translates almost directly into a bound for \asymmetricPlateau.
Similarly, combining our results with those of Doerr et~al.~\cite{DoerrSW13BlockAnalysis} on separable functions provides good bounds for \rls and the \onePlusOneEA on \onemax functions with any degree of neutrality from the W-model.
Extending the results of Doerr et~al.~\cite{DoerrSW13BlockAnalysis} to other algorithms or to functions other than \onemax also helps extend the picture about how well EAs cope with neutrality.
Last, the current definitions of \plateau and \asymmetricPlateau let the algorithm either start on a plateau or in a global optimum.
A potential generalization is to place the plateau somewhere else in the search space and introduce an easy slope toward the plateau.
This follows the same idea as the function \textsc{Plateau} by Antipov and Doerr~\cite{AntipovD18PrecisePlateauAnalysis} but generalizing it even further such that the function has more than a single optimum.

Our long-term objective is to analyze the impact of the other components of the W-model problem generator proposed by Weise et~al.~\cite{WeiseCLW20WModel} and extended by Doerr et~al.~\cite{DoerrYHWSB20IOHprofiler}---first individually for each module (as done here for \emph{neutrality}) and then for combinations of the four layers (\emph{neutrality}, \emph{dummy variables}, \emph{epistasis}, and \emph{ruggedness}). We see this as an important step towards run time results that more explicitly link algorithms' performance to problem characteristics.

\bibliographystyle{IEEEtran}
\balance
\bibliography{references}

\begin{IEEEbiographynophoto}
    {Carola Doerr,} formerly Winzen, is since 2013 a permanent researcher at Sorbonne Unversité in Paris, France.
    Carola obtained her PhD in Computer Science from Saarland University and the Max Planck Institute for Informatics in 2011, and she successfully defender her habilitation (HDR) at Sorbonne Université in 2020.
    She works on theoretical analysis, benchmarking, and practical applications of black-box optimization heuristics.
\end{IEEEbiographynophoto}

\begin{IEEEbiographynophoto}
    {Martin~S. Krejca} obtained his PhD from the Hasso Plattner Institute, University of Potsdam, Germany, in 2019.
    Since 2022, he is an assistant professor at Ecole Polytechnique, Palaiseau, France.
    His research interests are the theoretical analysis of random processes, especially black-box optimization heuristics.
\end{IEEEbiographynophoto}

\end{document}